\newtheorem{theorem}{Theorem}
\newtheorem*{proof*}{Proof}
\newtheorem{assumption}{Assumption}
\newtheorem{lemma}{Lemma}
\newtheorem{remark}{Remark}
\newtheorem{corollary}{Corollary}
\definecolor{client1}{RGB}{0, 87, 1}
\definecolor{client2}{RGB}{199,53,0}
\title{MimiC: Combating Client Dropouts in Federated Learning by Mimicking Central Updates}
\author{Yuchang~Sun,~\IEEEmembership{Graduate Student Member,~IEEE},
Yuyi~Mao,~\IEEEmembership{Member,~IEEE},
and Jun~Zhang,~\IEEEmembership{Fellow,~IEEE}

\thanks{
      	Y. Sun and J. Zhang are with the Department of Electronic and Computer Engineering, The Hong Kong University of Science and Technology, Hong Kong, China (E-mail: yuchang.sun@connect.ust.hk, eejzhang@ust.hk). 
       
      	Y. Mao is with the Department of Electrical and Electronic Engineering, The Hong Kong Polytechnic University, Hong Kong, China (E-mail: yuyi-eie.mao@polyu.edu.hk).(Corresponding author: Yuyi Mao.)

        Manuscript received 23 June 2023; revised 19 September and 13 November 2023; accepted 20 November 2023.
       
       This work was supported by the Hong Kong Research Grants Council under the Areas of Excellence scheme grant AoE/E-601/22-R and NSFC/RGC Collaborative Research Scheme grant CRS\_HKUST603/22.}
}
\begin{document}

\IEEEtitleabstractindextext{%
\begin{abstract}
Federated learning (FL) is a promising framework for privacy-preserving collaborative learning, where model training tasks are distributed to clients and only the model updates need to be collected at a server. However, when being deployed at mobile edge networks, clients may have unpredictable availability and drop out of the training process, which hinders the convergence of FL. This paper tackles such a critical challenge. Specifically, we first investigate the convergence of the classical FedAvg algorithm with arbitrary client dropouts. We find that with the common choice of a decaying learning rate, FedAvg may oscillate around a stationary point of the global loss function in the worst case, which is caused by the divergence between the aggregated and desired central update. Motivated by this new observation, we then design a novel training algorithm named MimiC, where the server modifies each received model update based on the previous ones. The proposed modification of the received model updates mimics the imaginary central update irrespective of dropout clients. The theoretical analysis of MimiC shows that divergence between the aggregated and central update diminishes with proper learning rates, leading to its convergence. Simulation results further demonstrate that MimiC maintains stable convergence performance and learns better models than the baseline methods. 
\end{abstract}

\begin{IEEEkeywords}
Federated learning (FL), client dropout, straggler effect, edge intelligence, convergence analysis.
\end{IEEEkeywords}

 }
\maketitle

\IEEEraisesectionheading{\section{Introduction}\label{sec:introduction}}

%
%
%
%

\IEEEPARstart{T}{he} resurgence of deep learning (DL) intensifies the demand for training high-quality models from distributed data sources.
However, the traditional approach of centralized model training may raise severe concerns of privacy leakage, since data, possibly with personal and sensitive information, need to be collected by a server prior to training.
As a result, federated learning (FL) \cite{fedavg,bonawitz2019towards}, which is a privacy-preserving distributed model training paradigm, has recently received enormous attention \cite{li2020application,app1,app2,ni1}.
A typical FL system consists of a central server and many clients with private data, which collaborate to accomplish an iterative training process.
In each training iteration, clients train local models based on their local data and upload the model updates to the server. Upon receiving the model updates, the server performs model aggregation and the aggregated global model is disseminated to clients for the next training iteration.

Depending on the types of clients, FL systems can be categorized as the cross-device and cross-silo settings \cite{kairouz2021advances}.
In this work, we focus on cross-device FL \cite{bonawitz2019towards,lim2020federated} where clients are usually edge devices (e.g., smartphones and wearables) with a relatively small volume of local data.
Cross-device FL fits well with the new era of pervasive artificial intelligence (AI) \cite{zhou2019edge,towards}, which utilizes the distributed computational resources for real-time and cost-effective big data analytics.
Note that clients of cross-device FL may have heterogeneous resources such as computing power, battery capacity, and network quality \cite{wang2020optimize,abdelmoniem2022empirical}.
Therefore, they may occasionally drop out of the training process for some iterations because of various unpredictable events, which is termed as \textit{client dropout}.
For example, mobile devices may run out of battery or be disconnected from the server.
In such cases, only a subset of clients are able to complete local training and upload the model updates in each training iteration, which significantly degrades the convergence performance and training speed \cite{9475501}.

\begin{figure}[t]
\centering
\includegraphics[width=\columnwidth]{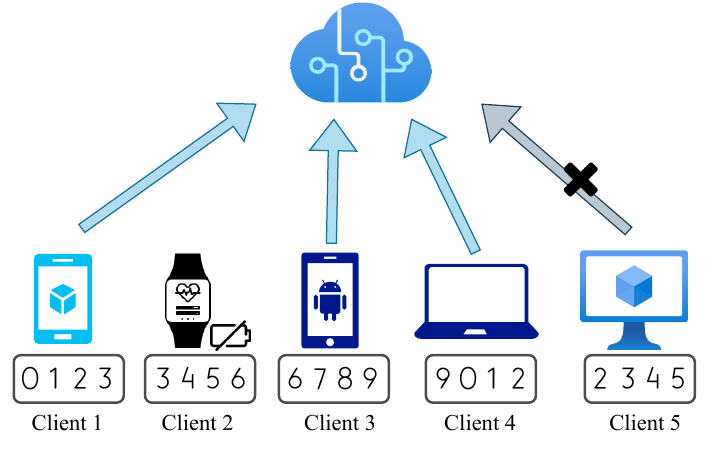} 
\caption{An example of client dropouts in a cross-device FL system with ten classes of data. In this iteration, client $2$ is out of battery while client $5$ is disconnected from the server. The consequence is that information about classes $4$ and $5$ is missing in the received model updates.}
\label{fig:dropout}
\end{figure}

To guarantee satisfactory learning performance, the global dataset is usually partitioned such that training data on clients are independent and identically distributed (IID) in distributed learning. However, while a small fraction of dropout clients can be tolerated with IID training data, significant dropouts severely impair the training efficiency and slow down the convergence speed. 
Techniques such as coded computing \cite{li2020coded,bitar2020stochastic} and using backup clients \cite{xiong2021straggler} combat dropouts in distributed learning by redistributing training data and assigning redundant training tasks to clients.
Nevertheless, client dropouts in FL cause some critical and unique challenges.
Firstly, the local data on clients are non-IID.
The non-IIDness cannot be mitigated since the local data are not allowed to be transferred to other clients or the server because of privacy concerns.
Also, simply ignoring the inactive clients leads to a biased global update in each training iteration.
In other words, if some clients drop out, the aggregated update will be biased towards the training data of the successful clients and thereby deviates from global training objective, which degrades the learned model performance \cite{9963723,9521822}.
Fig. \ref{fig:dropout} shows a simple cross-device FL system with five clients to illustrate this issue, where we assume there are ten classes of data samples in total, but each client has only four classes of data.
We observe that when clients 2 and 5 drop out, the aggregated update fails to exploit the information about classes 4 and 5.

It is worth noting that client dropout is different from active client sampling \cite{yang2021achieving,li2019convergence,fraboni2021impact}, as the latter assumes the server is able to select a portion of clients to participate in each training iteration.
This is plausible due to an implicit premise that all the clients are available to be selected.
However, if some selected clients drop out of training unexpectedly, the server can only aggregate the received updates and obtain a biased update that deviates from the desired global gradient.
One possible approach to deal with client dropouts in cross-device FL is to use the memorized latest updates as a substitute for those dropout clients \cite{yan2020intermittent,mifa}.
Nevertheless, when some clients have been inactive for a long time, the global model may have changed substantially and thus the memorized updates suffer from strong staleness.
In contrast, the deviation of each local update from the central one is primarily determined by the difference between the local and global training objectives. 
Since the objective bias should not change, intuitively the deviation of each update remains relatively stable throughout the training process, as will be explained in Section \ref{sec:mimic}.
As such, we aim to utilize such deviation to correct the individual updates from clients and mimic a central update.

In this paper, we investigate the impacts of arbitrary client dropouts on the convergence performance of FL and propose an effective approach to combat such negative impacts.
Our main contributions are summarized as follows:
\begin{itemize} 
    \item We first investigate the convergence performance of the classical \textit{FedAvg} algorithm \cite{fedavg} with arbitrary client dropouts.
    The theoretical analysis shows that client dropouts introduce a biased update in each training iteration.
    With the common choice of a decaying learning rate, the model learned by FedAvg may oscillate around a stationary point of the global loss function in the worst case.
    \item To combat client dropouts, we propose a novel FL algorithm named \textit{MimiC}.
    The core idea of MimiC is augmenting each received update from the active clients to \underline{mimi}c an imaginary \underline{c}entral update.
    Specifically, each received update is modified by a correction variable derived from previous iterations.
    This can be easily implemented at the server and introduces no additional computation or communication workload to clients.
    \item Despite the modified update in MimiC still being a biased estimate of the global gradient, we prove that its divergence is bounded and diminishes to zero with a proper choice of the learning rates.
    We further characterize the convergence of MimiC by imposing a mild assumption on the maximum number of consecutive client dropout iterations. 
    It is also shown that MimiC converges with a high probability when clients drop out in a probabilistic pattern.
    \item To verify the analysis and test the effectiveness of our proposed algorithm, we simulate a cross-device FL system with a variety of client dropout patterns.
    The extensive experimental results validate the convergence of MimiC with client dropouts in different scenarios.
    It is also observed that MimiC consistently produces better models than the baseline methods.\footnote{The codes are available at \url{https://github.com/hiyuchang/mimic_codes}.}
\end{itemize}

\textbf{Organizations.}
The rest of this paper is organized as follows.
In Section \ref{sec:related}, we introduce the related works. In Section \ref{sec:system}, we describe the system model and setup of the cross-device FL problem.
We analyze the convergence behavior of FedAvg and show the negative impacts of arbitrary client dropouts in Section \ref{sec:fedavg}.
To combat client dropouts, we propose a training algorithm named MimiC in Section \ref{sec:mimic} and demonstrate its convergence in Section \ref{sec:convergenceofmimic}.
The proposed algorithm is evaluated via extensive simulations in Section \ref{sec:evaluation}. Finally, we conclude the paper in Section \ref{sec:conclusion}.

\section{Related Works}\label{sec:related}

In this section, we review the exiting works on two research directions of FL that are closely related to our current study, including: 1) cross-device FL with partial client participation; and 2) techniques combating client dropouts in distributed and federated learning.

\textbf{Cross-device FL with partial client participation.}
The cross-device FL systems usually consist of a large number of clients and only a fraction of them are able to participate in the training process at each iteration due to the limited communication and computational resources \cite{kairouz2021advances,9475501,9237168}.
To ensure model convergence, there are two popular client sampling strategies, including: 1) to sample clients with replacement with respect to preset probabilities; 2) to sample clients without replacement uniformly at random.
These strategies guarantee that the weighted sum of the sampled clients' objectives is an unbiased estimate of the global training objective.
However, in realistic applications, it is hard to implement these strategies due to arbitrary client dropouts, which makes the training objective biased in the presence of non-IID data.

Recently, Ruan \textit{et al.} \cite{ruan2021towards} investigated the convergence performance of FedAvg with strongly convex loss functions in practical scenarios where clients have various participating behaviors.
Specifically, the clients are allowed to have various local steps in each training iteration and those clients with a local step of zero can be viewed as dropout clients.
It is shown that by controlling the aggregation weights FedAvg with client dropouts can still converge to the global optimum, which, however, required the prior knowledge of client participation probabilities.
Besides, a unified convergence analysis for FL with arbitrary client participation was provided in \cite{wang2022unified}, where the accumulated model updates in every $H$ training iterations are adopted as momentum to accelerate the global model training. It was shown that the arbitrary client participation introduces an additional term $\tilde{\delta}^{2}(H)$ in the convergence bound, which can be eliminated only if the client participation pattern, the amplification period $H$, and the aggregations weights satisfy certain conditions, e.g., regularized participation or ergodic participation.
In practice, these conditions are difficult to verify and need laborious hyperparameter tuning.
In addition, another thread of works \cite{9237168,shi2022vfedcs} designed various client sampling strategies to accelerate the convergence rate of FL.
These works, however, are beyond the scope of this paper, since they assume clients can be actively selected to participate in the training process. 

\textbf{Combating client dropouts in distributed and federated learning.}
In distributed learning, client dropouts can be handled via coded computing \cite{li2020coded,karakus2019redundancy}, which carefully designs the training tasks on clients to introduce some levels of redundancy, such that the dropout clients can no longer cause fatal effects on training.
However, these methods either assume IID data or require to redistribute the training data among clients, which may be contradictory to the privacy-preserving principle of FL.
Motivated by coded computing for distributed learning, recent works \cite{CFL_journal,scfl} proposed to construct coded datasets at clients, which are then transmitted to the server or other clients to compensate for the missing model updates of dropout clients.
Nevertheless, coded data transmission may bring additional communication overhead and be risky of privacy leakage. 
Similarly, \cite{schlegel2023codedpaddedfl,shao2022dres} resorted to Shamir’s secret sharing for the coded data, which retains the privacy protection of conventional FL.
However, the substantial communication overhead in these designs is intolerable especially with thousands of clients in cross-device FL systems \cite{kairouz2021advances}.

In addition to coded computing techniques, some other strategies can be adopted to combat client dropouts in FL.
The simplest strategy of combating client dropouts in FL is to only aggregate the received updates. This passive approach, however, significantly deteriorates the convergence performance due to the biased aggregated update.
Besides, another strategy is to adopt the latest updates from inactive clients as substitutes \cite{yan2020intermittent,mifa}.
Nevertheless, the memorized updates with significant staleness may degrade the training performance in the current iteration.
Alternatively, the local update from another client with similar data can be used as substitute for that of a dropout client \cite{wang2022friends}. However, this approach requires to compute the pair-wise similarity between any two clients, which leads to a high computation cost.
We also notice that there is another line of studies \cite{jiang2022taming,liu2022efficient} on privacy-preserving schemes for cross-device FL with client dropouts.
Specifically, to achieve a target level of differential privacy (DP) in FL with random client dropouts, Jiang \textit{et al.} \cite{jiang2022taming} designed an “add-then-remove” approach to determine the noise levels of local gradients.
Moreover, a privacy-preserving aggregation scheme for FL was proposed in \cite{liu2022efficient}, which is resilient to client dropouts with the aid of Shamir’s secret sharing.
However, these works have not implemented any mechanism to compensate the missing model updates of dropout clients and still suffer from degraded model performance.

\section{System Model and Problem Setup}\label{sec:system}
We consider a cross-device FL system consisting of a central server and $N$ clients (denoted by the set $\mathcal{N}=\{1,\dots,N\}$).
These nodes collaboratively train a model $\mathbf{w}\in \mathbb{R}^{M}$ with $M$ trainable parameters to minimize the loss over data samples of all clients.
The global training objective is expressed as follows:
\begin{equation}
    \min_{\mathbf{w}\in \mathbb{R}^{M}} f(\mathbf{w}) = \frac{1}{N} \sum_{i=1}^N f_i(\mathbf{w}),
\end{equation}
where $f_i(\mathbf{w})$ denotes the $i$-th client's loss function over the local dataset $\mathcal{D}_i$.

To optimize model $\mathbf{w}$, the classical FedAvg \cite{fedavg} algorithm divides the training process of model $\mathbf{w}$ into $T$ iterations.
In iteration $t\in[T]$, clients receive a global model $\mathbf{w}_{t}$ from the server and train the model locally for $K$ steps using the mini-batch SGD algorithm.
Specifically, the local model is initialized as $\mathbf{w}_{t,0}^i \equiv \mathbf{w}_{t}, \forall i\in\mathcal{N}$.
Since a client may own a considerable amount of training data, it is impractical to compute the gradient over the whole dataset for local training.
Thus, client $i$ samples a batch of data $\xi$ and computes gradient $\nabla F_i(\mathbf{w}; \mathbf{\xi})$ based on the current model $\mathbf{w}$.
At the $k$-th local step, the client uniformly samples a batch of training data $\mathbf{\xi}_{t,k}^i$, and updates the model $\mathbf{w}_{t,k}^i$ as follows:
\begin{equation}
    \mathbf{w}_{t,k+1}^i \leftarrow \mathbf{w}_{t,k}^i - \eta_L \nabla F_i(\mathbf{w}_{t,k}^i; \mathbf{\xi}_{t,k}^i), k = 0,1,\dots,K \!- \! 1,
    \label{eq:local-sgd}
\end{equation}
where $\eta_L$ denotes the local learning rate.
The local update after $K$ local steps is summarized as follows:
\begin{equation}
\hat{g}_i(\mathbf{w}_t)\triangleq\frac{1}{K}\sum_{k=0}^{K-1} \nabla F_i(\mathbf{w}_{t,k}^i; \mathbf{\xi}_{t,k}^i),
\end{equation}
which is then uploaded to the server.

In each iteration, we assume that only a subset of clients $\mathcal{S}_{t}$ is active for training, which are denoted as \textit{active clients}.
Others are dropout clients (i.e., inactive clients) that fail in either local training or gradient uploading.
The server only receives model updates from the active clients in $\mathcal{S}_t$.
Afterwards, the global update is computed as an average of them as follows:
\begin{equation}
    \mathbf{v}_t \triangleq \frac{1}{|\mathcal{S}_t|} \sum_{i\in\mathcal{S}_t} \hat{g}_i(\mathbf{w}_t).
\end{equation}
The global model is then updated using the global learning rate $\eta_t$ as follows:
\begin{equation}
    \mathbf{w}_{t+1} \leftarrow \mathbf{w}_t - \eta_t \mathbf{v}_t.
    \label{eq:global-update}
\end{equation}
This new global model $\mathbf{w}_{t+1}$ is broadcast to all clients for the next training iteration.

In the training process, each client may remain inactive for several consecutive training iterations.
We introduce the notation of $\tau(t,i)$, which indicates client $i$ has been inactive for $\tau(t,i)-1$ consecutive iterations, i.e., the last iteration of client $i$ being active before iteration $t$ is $t-\tau(t,i)$ \cite{mifa,yan2020intermittent}.
Fig. \ref{fig:staleness} provides an illustration of the client availability in cross-device FL systems.
\begin{assumption}\label{ass:clients}
(Client availability)
There exists a constant $\tau_{\text{max}} > 0$ such that $\tau(t,i) \leq \tau_{\text{max}},\forall i\in\mathcal{N},t\in[T]$.
\end{assumption}

\begin{figure}[t]
    \centering
    \includegraphics[width=\columnwidth]{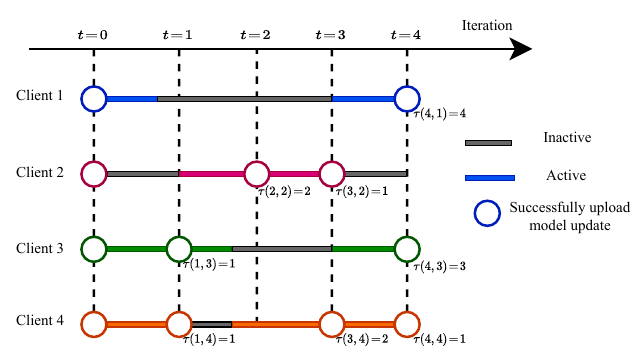}
    \caption{Illustration of the client availability in cross-device FL, where $\tau_{\text{max}}=3$.}
    \label{fig:staleness}
\end{figure}

Besides, to facilitate the theoretical analysis and algorithm development, we make the following common assumptions on the loss functions \cite{wang2020tackling,li2019convergence,yang2021achieving,fedprox}.
\begin{assumption}\label{ass-1}
($L$-smoothness)
There exists a constant $L>0$ such that for any $\mathbf{w}_1,\mathbf{w}_2\in\mathbb{R}^{M}$, $\| \nabla f_i(\mathbf{w}_1) - \nabla f_i(\mathbf{w}_2) \|_2 \leq L \| \mathbf{w}_1 - \mathbf{w}_2 \|_2, \forall i\in\mathcal{N}$.
\end{assumption}

\begin{assumption}\label{ass-2}
(Unbiased and variance-bounded stochastic gradient)
The stochastic gradient $ \nabla F_i(\mathbf{w}; \mathbf{\xi})$ on a randomly sampled batch of data $\mathbf{\xi} \in \mathcal{D}_i$ is an unbiased estimate of the full-batch gradient, i.e., $\mathbb{E}\left[ \nabla F_i(\mathbf{w}; \mathbf{\xi}) \right] = \nabla f_i(\mathbf{w}), \forall i\in\mathcal{N}$.
Besides, there exists a constant $\sigma>0$ such that $\mathbb{E}\left[ \| \nabla F_i(\mathbf{w}; \mathbf{\xi}) - \nabla f_i(\mathbf{w}) \|_2^2 \right] \leq \sigma^2, \forall i\in\mathcal{N}$.
\end{assumption}

\begin{assumption}\label{ass-3}
(Data heterogeneity) 
There exist constants $\kappa_i >0, \forall i \!\in\! \mathcal{N}$ such that $\|\nabla f_{i}(\mathbf{w}) -\nabla f(\mathbf{w})\|_2^{2} \leq \kappa_i^2$.
\end{assumption}

In the next section, we analyze the convergence performance of FedAvg and show the negative effect of arbitrary client dropouts on its convergence.

\section{Theoretical Analysis of FedAvg with Arbitrary Client Dropouts}\label{sec:fedavg}

The convergence of FedAvg, under the assumption of full client participation or uniform partial participation, has been presented in \cite{scaffold,9796818,li2019convergence}. Specifically, if the global learning rates are selected as $\eta_t \equiv \mathcal{O}\left( \frac{1}{\sqrt{T}}\right)$, FedAvg is guaranteed to converge to a stationary point of the global loss function with a rate of $\mathcal{O}\left(\frac{1}{\sqrt{T}}\right)$.
However, these convergence results rely on strict assumptions of client availability, which ensures an unbiased estimate of the global gradient at the server, i.e.,
\begin{equation}
    \mathbb{E}[\mathbf{v}_t] = \mathbb{E}_{\mathcal{S}_t}\left[ \frac{1}{|\mathcal{S}_t|} \sum_{i\in\mathcal{S}_t} \hat{g}_i(\mathbf{w}_t) \right] = \frac{1}{|\mathcal{N}|} \sum_{i\in\mathcal{N}} \hat{g}_i(\mathbf{w}_t).
    \label{eq:unbiased}
\end{equation}
Since client dropouts are spontaneous and uncontrollable, \eqref{eq:unbiased} can hardly be achieved in practice.
In this section, we investigate the convergence behavior of FedAvg in the presence of arbitrary client dropouts, which relaxes the assumption of full or uniform partial participation.

To proceed, we first derive an upper bound of the loss decay in each training iteration in Lemma \ref{lem:one-round}, which is applicable to any FL algorithm satisfying the global update scheme in \eqref{eq:global-update}.
\begin{lemma}\label{lem:one-round}
For any FL algorithm with the global model update scheme in \eqref{eq:global-update}, the loss decay in each iteration is upper bounded as follows:
\begin{align}
    & \mathbb{E} [f(\mathbf{w}_{t+1})] - \mathbb{E} [f(\mathbf{w}_{t})]     \label{eq:one-round} \\
    \leq & - \frac{\eta_t}{2} \mathbb{E} \left[\left\| \nabla f(\mathbf{w}_t) \right\|^2 \right] - \left(\frac{\eta_t}{2} - \frac{\eta_t^2 L}{2}\right) \left\| \mathbb{E}[\mathbf{v}_t] \right\|^2  \nonumber \\
    + & \frac{\eta_t^2 L}{2} \underbrace{\mathbb{E} \left[\| \mathbf{v}_t - \mathbb{E} [\mathbf{v}_t]\|^2 \right]}_{\Phi_t}
    + \frac{\eta_t}{2} \underbrace{\mathbb{E} \left[\left\| \mathbb{E}[\mathbf{v}_t] - \nabla f (\mathbf{w}_t) \right\|^2 \right]}_{\mathcal{E}_t}. \nonumber
\end{align}
\end{lemma}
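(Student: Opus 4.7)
The proof is a standard descent-lemma argument, so the plan is to combine the $L$-smoothness of $f$ (Assumption \ref{ass-1}) with two algebraic decompositions: the polarization identity $2\langle a,b\rangle = \|a\|^{2}+\|b\|^{2}-\|a-b\|^{2}$ for the inner-product term, and the bias-variance split $\mathbb{E}\|X\|^{2}=\|\mathbb{E}X\|^{2}+\mathbb{E}\|X-\mathbb{E}X\|^{2}$ for the squared-norm term. The main bookkeeping task is to make sure that the conditional expectation is taken with $\mathbf{w}_{t}$ held fixed, so that $\nabla f(\mathbf{w}_{t})$ behaves as a deterministic vector in the inner product.

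\textbf{Step 1: Apply $L$-smoothness.} Since $f$ is $L$-smooth (which follows from Assumption \ref{ass-1} applied componentwise and averaged over $i$), the descent lemma gives
\begin{equation*}
f(\mathbf{w}_{t+1}) \leq f(\mathbf{w}_{t}) + \langle \nabla f(\mathbf{w}_{t}),\, \mathbf{w}_{t+1}-\mathbf{w}_{t}\rangle + \tfrac{L}{2}\|\mathbf{w}_{t+1}-\mathbf{w}_{t}\|^{2}.
\end{equation*}
Plugging in the global update \eqref{eq:global-update}, i.e.\ $\mathbf{w}_{t+1}-\mathbf{w}_{t}=-\eta_{t}\mathbf{v}_{t}$, yields
\begin{equation*}
f(\mathbf{w}_{t+1}) \leq f(\mathbf{w}_{t}) - \eta_{t}\langle \nabla f(\mathbf{w}_{t}), \mathbf{v}_{t}\rangle + \tfrac{\eta_{t}^{2} L}{2}\|\mathbf{v}_{t}\|^{2}.
\end{equation*}

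\textbf{Step 2: Take conditional expectation and decompose the inner product.} Conditioning on $\mathbf{w}_{t}$, the vector $\nabla f(\mathbf{w}_{t})$ is deterministic, so I can pull it inside the expectation: $\mathbb{E}[\langle \nabla f(\mathbf{w}_{t}),\mathbf{v}_{t}\rangle\mid\mathbf{w}_{t}] = \langle \nabla f(\mathbf{w}_{t}), \mathbb{E}[\mathbf{v}_{t}\mid\mathbf{w}_{t}]\rangle$. Applying the polarization identity
\begin{equation*}
-\langle a,b\rangle = -\tfrac{1}{2}\|a\|^{2} - \tfrac{1}{2}\|b\|^{2} + \tfrac{1}{2}\|a-b\|^{2}
\end{equation*}
with $a=\nabla f(\mathbf{w}_{t})$ and $b=\mathbb{E}[\mathbf{v}_{t}]$ produces precisely the $-\tfrac{\eta_{t}}{2}\|\nabla f(\mathbf{w}_{t})\|^{2}$, $-\tfrac{\eta_{t}}{2}\|\mathbb{E}[\mathbf{v}_{t}]\|^{2}$, and $+\tfrac{\eta_{t}}{2}\mathcal{E}_{t}$ contributions appearing in the target bound.

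\textbf{Step 3: Decompose $\mathbb{E}\|\mathbf{v}_{t}\|^{2}$ and combine.} For the quadratic term I use the bias-variance identity $\mathbb{E}\|\mathbf{v}_{t}\|^{2}=\|\mathbb{E}[\mathbf{v}_{t}]\|^{2}+\mathbb{E}\|\mathbf{v}_{t}-\mathbb{E}[\mathbf{v}_{t}]\|^{2} = \|\mathbb{E}[\mathbf{v}_{t}]\|^{2}+\Phi_{t}$, which contributes $\tfrac{\eta_{t}^{2} L}{2}\|\mathbb{E}[\mathbf{v}_{t}]\|^{2}+\tfrac{\eta_{t}^{2} L}{2}\Phi_{t}$. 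Merging this with the coefficient $-\tfrac{\eta_{t}}{2}\|\mathbb{E}[\mathbf{v}_{t}]\|^{2}$ from Step 2 gives the combined factor $-(\tfrac{\eta_{t}}{2}-\tfrac{\eta_{t}^{2} L}{2})\|\mathbb{E}[\mathbf{v}_{t}]\|^{2}$. Taking outer expectation over $\mathbf{w}_{t}$ and rearranging completes the proof. The argument is essentially mechanical; the only subtlety worth watching is that the ``expectation'' $\mathbb{E}[\mathbf{v}_{t}]$ in the statement must be read as the conditional expectation given $\mathbf{w}_{t}$, otherwise the polarization step would be invalid. Once this conditioning is made explicit, the result follows without invoking Assumptions \ref{ass-2} or \ref{ass-3}, which is why Lemma \ref{lem:one-round} applies to \emph{any} FL algorithm of the form \eqref{eq:global-update}.
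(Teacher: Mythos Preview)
Your proposal is correct and follows essentially the same argument as the paper: $L$-smoothness followed by the polarization identity for the inner product and the bias--variance decomposition for $\mathbb{E}\|\mathbf{v}_t\|^2$. Your explicit remark that $\mathbb{E}[\mathbf{v}_t]$ should be read as a conditional expectation given $\mathbf{w}_t$ is a useful clarification that the paper leaves implicit.
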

\begin{proof*}
Recall in \eqref{eq:global-update} that $\mathbf{w}_{t+1} = \mathbf{w}_t - \eta_t \mathbf{v}_t$ and accordingly $f(\mathbf{w}_{t+1}) = f(\mathbf{w}_t - \eta_t \mathbf{v}_t)$.
By leveraging the $L$-smoothness of function $f(\cdot)$ (i.e., Assumption \ref{ass-1}), we have:
\begin{align}
    & \mathbb{E} [f(\mathbf{w}_{t+1})] - \mathbb{E} [f(\mathbf{w}_{t})] \nonumber \\
    \leq & -\eta_t \mathbb{E} \left\langle \nabla f(\mathbf{w}_{t}), \mathbf{v}_t \right\rangle + \frac{\eta_t^2 L}{2} \mathbb{E} \left[\| \mathbf{v}_t \|^2 \right] \nonumber \\
    = & -\eta_t \mathbb{E} \left\langle \nabla f(\mathbf{w}_{t}), \mathbb{E}[\mathbf{v}_t] \right\rangle + \frac{\eta_t^2 L}{2} \mathbb{E} \left[\| \mathbf{v}_t \|^2 \right] \nonumber \\
    \overset{(\text{a})}{=} & - \frac{\eta_t}{2} \mathbb{E} \left[\left\| \nabla f(\mathbf{w}_t) \right\|^2 \right] - \frac{\eta_t}{2} \left\| \mathbb{E}[\mathbf{v}_t] \right\|^2 \nonumber \\
    & + \frac{\eta_t}{2} \mathbb{E} \left[\left\| \mathbb{E}[\mathbf{v}_t] - \nabla f (\mathbf{w}_t) \right\|^2 \right] + \frac{\eta_t^2 L}{2} \mathbb{E} \left[\| \mathbf{v}_t \|^2 \right] \nonumber \\
    \overset{(\text{b})}{=} & - \frac{\eta_t}{2} \mathbb{E} \left[\left\| \nabla f(\mathbf{w}_t) \right\|^2 \right] + \frac{\eta_t}{2} \underbrace{\mathbb{E} \left[\left\| \mathbb{E}[\mathbf{v}_t] - \nabla f (\mathbf{w}_t) \right\|^2 \right]}_{\mathcal{E}_t} \nonumber \\
    & - \left(\frac{\eta_t}{2} - \frac{\eta_t^2 L}{2} \right) \left\| \mathbb{E}[\mathbf{v}_t] \right\|^2 + \frac{\eta_t^2 L}{2} \underbrace{\mathbb{E} \left[\| \mathbf{v}_t - \mathbb{E} [\mathbf{v}_t]\|^2 \right]}_{\Phi_t},
\end{align}
where (a) is due to the fact that $\left\langle \mathbf{x},\mathbf{y} \right\rangle = \frac{1}{2} \| \mathbf{x} \|^2 + \frac{1}{2} \| \mathbf{y} \|^2 - \frac{1}{2} \| \mathbf{x}-\mathbf{y} \|^2, \forall \mathbf{x}, \mathbf{y}$, and (b) is because for any random variable $\mathbf{v}$ with expectation $\mathbb{E}[\mathbf{v}]$, $\mathbb{E}[\| \mathbf{v} \|^2] = \| \mathbb{E}[\mathbf{v}] \|^2 + \| \mathbb{E}[\mathbf{v}] - \mathbf{v} \|^2$ holds.
\end{proof*}

\begin{remark}\label{rm-1}
Lemma \ref{lem:one-round} shows that the loss decay in each iteration is bounded by the weighted sum of $\mathcal{E}_{t}$ and $\Phi_{t}$. For clarity, we use superscript $\mathrm{F}$ and $\mathrm{M}$ to represent these quantities in FedAvg and MimiC, respectively. As the values of $\Phi_t$ and $\mathcal{E}_t$ increase, the loss decay of each training iteration reduces, implying their negative effects on the convergence performance.
Note that the term $\Phi_t$ denotes the error caused by mini-batch data sampling, which is common in both FL and distributed training with SGD as the local training algorithm \cite{stich2018local,haddadpour2019local}.
Moreover, the term $\mathcal{E}_t$ represents the \textit{gradient estimation error} between the applied global update $\mathbf{v}_t$ and the actual global gradient over all data samples $\nabla f(\mathbf{w}_t)$.
\end{remark}

We are now to provide an upper bound for the gradient estimation error $\mathcal{E}_t$ in FedAvg.
As will be shown later in Section \ref{sec:mimic}, the value of $\mathcal{E}_t$ serves as a key difference between FedAvg and our proposed algorithm.

\begin{lemma}\label{lem:E_t_1}
If the local learning rate satisfies $\eta_L \leq \frac{1}{10 L}$, the gradient estimation error $\mathcal{E}_t^{\mathrm{F}}$ in FedAvg is upper bounded as follows:
\begin{equation}
\begin{split}
    \mathcal{E}_t^{\mathrm{F}}
    &\leq \frac{2L^2}{|\mathcal{S}_t|} \sum_{i\in\mathcal{S}_t} \left[ 8\eta_L^2 K(K-1) \mathbb{E}[\| \nabla f(\mathbf{w}_{t}) \|^2] \right. \\
    &+ \left. 8\eta_L^2 K(K-1) \kappa_i^2 + 2\eta_L^2 (K-1) \sigma^2 \right] \\
    &+ 2 \underbrace{\mathbb{E} \left[ \left\| \frac{1}{|\mathcal{S}_t|} \sum_{i\in\mathcal{S}_t} \nabla f_i(\mathbf{w}_{t}) - \nabla f(\mathbf{w}_{t}) \right\|^2 \right]}_{\gamma_t},
\end{split}
\label{eq:E_t-fedavg}
\end{equation}
where $\gamma_t$ denotes the objective bias in the $t$-th training iteration, i.e., the gap between the aggregated gradient and the global gradient $\nabla f(\mathbf{w}_{t})$.
\end{lemma}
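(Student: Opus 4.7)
The plan is to bound $\mathcal{E}_t^{\mathrm{F}} = \mathbb{E}\!\left[\left\| \mathbb{E}[\mathbf{v}_t] - \nabla f(\mathbf{w}_t)\right\|^2\right]$ by decomposing $\mathbb{E}[\mathbf{v}_t] - \nabla f(\mathbf{w}_t)$ into a local-drift term and the objective-bias term $\gamma_t$. Concretely, by Assumption \ref{ass-2}, taking expectation of $\hat{g}_i(\mathbf{w}_t)=\tfrac{1}{K}\sum_{k=0}^{K-1}\nabla F_i(\mathbf{w}_{t,k}^i;\xi_{t,k}^i)$ over the mini-batch noise yields $\mathbb{E}[\hat{g}_i(\mathbf{w}_t)] = \tfrac{1}{K}\sum_{k=0}^{K-1}\mathbb{E}[\nabla f_i(\mathbf{w}_{t,k}^i)]$. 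I would then add and subtract $\tfrac{1}{|\mathcal{S}_t|}\sum_{i\in\mathcal{S}_t}\nabla f_i(\mathbf{w}_t)$ inside the norm, apply the inequality $\|a+b\|^2\le 2\|a\|^2+2\|b\|^2$, and immediately identify the second piece as $2\gamma_t$. This produces the leading coefficient $2$ in the statement and isolates the client-sampling bias cleanly from the drift induced by multiple local steps.

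Next, I would treat the local-drift piece, which after the split takes the form $2\,\mathbb{E}\!\left[\left\|\tfrac{1}{|\mathcal{S}_t|}\sum_{i\in\mathcal{S}_t}\tfrac{1}{K}\sum_{k=0}^{K-1}\big(\nabla f_i(\mathbf{w}_{t,k}^i)-\nabla f_i(\mathbf{w}_t)\big)\right\|^2\right]$. Two applications of Jensen's inequality pull the outer averages through the squared norm, and Assumption \ref{ass-1} ($L$-smoothness) converts each resulting term into $L^2\,\mathbb{E}[\|\mathbf{w}_{t,k}^i-\mathbf{w}_t\|^2]$. This step already reproduces the prefactor $\tfrac{2L^2}{|\mathcal{S}_t|}\sum_{i\in\mathcal{S}_t}$ in \eqref{eq:E_t-fedavg}, reducing the lemma to bounding the per-client local drift $\mathbb{E}[\|\mathbf{w}_{t,k}^i-\mathbf{w}_t\|^2]$ summed (or averaged) over $k$.

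The main obstacle, and the one place where the condition $\eta_L\le\tfrac{1}{10L}$ is needed, is the recursive bound on the local drift. Starting from \eqref{eq:local-sgd}, I would write $\mathbf{w}_{t,k+1}^i-\mathbf{w}_t=\mathbf{w}_{t,k}^i-\mathbf{w}_t-\eta_L\nabla F_i(\mathbf{w}_{t,k}^i;\xi_{t,k}^i)$, split the stochastic gradient into its mean $\nabla f_i(\mathbf{w}_{t,k}^i)$ plus noise, use Assumption \ref{ass-2} to pull out an $\eta_L^2\sigma^2$ variance term, and insert $\pm\nabla f_i(\mathbf{w}_t)$ and $\pm\nabla f(\mathbf{w}_t)$ into the mean part so that Assumption \ref{ass-3} supplies $\kappa_i^2$ and $L$-smoothness supplies a factor of $L^2\|\mathbf{w}_{t,k}^i-\mathbf{w}_t\|^2$. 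Applying $(1+\alpha)$-type Young's inequalities with a small $\alpha=1/(K-1)$ then yields a recursion of the form $a_{k+1}\le (1+\tfrac{1}{K-1})(1+cL^2\eta_L^2)\,a_k + \eta_L^2 \big(C_1\|\nabla f(\mathbf{w}_t)\|^2+C_2\kappa_i^2+\sigma^2\big)$. Unrolling over $k$ local steps and using $(1+\tfrac{1}{K-1})^{K-1}\le e$ together with $\eta_L\le\tfrac{1}{10L}$ absorbs the growth factor into an absolute constant, giving $\mathbb{E}[\|\mathbf{w}_{t,k}^i-\mathbf{w}_t\|^2]\lesssim \eta_L^2(K-1)\big(4K\|\nabla f(\mathbf{w}_t)\|^2+4K\kappa_i^2+\sigma^2\big)$. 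Averaging over $k=0,\dots,K-1$, multiplying by $2L^2/|\mathcal{S}_t|$, and adding the $2\gamma_t$ contribution reproduces precisely \eqref{eq:E_t-fedavg}. The arithmetic of the constants $8$ and $2$ is the tedious but routine part; the conceptual crux is choosing the Young's-inequality split and the learning-rate threshold so that the recursion contracts.
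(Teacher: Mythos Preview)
Your proposal is correct and follows essentially the same route as the paper: add and subtract $\tfrac{1}{|\mathcal{S}_t|}\sum_{i\in\mathcal{S}_t}\nabla f_i(\mathbf{w}_t)$, split with $\|a+b\|^2\le 2\|a\|^2+2\|b\|^2$ to isolate $2\gamma_t$, pull the averages through via Jensen/Cauchy--Schwarz, apply $L$-smoothness, and bound the per-client drift $\mathbb{E}[\|\mathbf{w}_{t,k}^i-\mathbf{w}_t\|^2]$. The only cosmetic difference is that the paper outsources the last step to Lemma~C.2 of \cite{mifa} (adapted to Assumption~\ref{ass-3}) rather than writing out the $(1+\tfrac{1}{K-1})$-recursion you sketch; your inline derivation of that recursion is exactly how such lemmas are proved and is where the condition $\eta_L\le \tfrac{1}{10L}$ enters.
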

\begin{proof}[Proof Sketch]
Recall the definition of $\mathcal{E}_t^{\mathrm{F}}$ in FedAvg as:
\begin{equation}
    \mathcal{E}_t^{\mathrm{F}} = \mathbb{E} \left[ \left\| \frac{1}{|\mathcal{S}_t|} \sum_{i\in\mathcal{S}_t} \frac{1}{K} \sum_{k=0}^{K-1} \nabla f_i(\mathbf{w}_{t,k}^i) - \nabla f(\mathbf{w}_{t}) \right\|^2 \right].
    \label{eq:help-10}
\end{equation}
We first expand \eqref{eq:help-10} by adding and subtracting an auxiliary term $\frac{1}{|\mathcal{S}_t|} \sum_{i\in\mathcal{S}_t} \frac{1}{K} \sum_{k=0}^{K-1} \nabla f_i(\mathbf{w}_{t})$ and further upper bound it using the inequality $\| \mathbf{x}_1 + \mathbf{x}_2\|_2^2 \leq 2 \| \mathbf{x}_1\|_2^2 + 2\| \mathbf{x}_2\|_2^2$ as:
\begin{align}
    \mathcal{E}_t^{\mathrm{F}} \leq & 2 \mathbb{E} \left[ \left\| \frac{1}{|\mathcal{S}_t|} \sum_{i\in\mathcal{S}_t} \frac{1}{K} \sum_{k=0}^{K-1} \left( \nabla f_i(\mathbf{w}_{t,k}^i) - \nabla f_i(\mathbf{w}_{t}) \right)  \right\|^2 \right] \nonumber \\
    + & 2 \mathbb{E} \left[ \left\| \frac{1}{|\mathcal{S}_t|} \sum_{i\in\mathcal{S}_t} \nabla f_i(\mathbf{w}_{t}) - \nabla f(\mathbf{w}_{t}) \right\|^2 \right].
    \label{eq:help-11}
\end{align}
The first term on the right-hand side (RHS) of \eqref{eq:help-11} quantifies  the difference between local gradients on $\mathbf{w}_{t,k}^i$ and $\mathbf{w}_{t}$, which can be upper bounded using a similar proof of Lemma C.2 in \cite{mifa}.
Please refer to Appendix \ref{proof:lem:E_t_1} for the detailed proof.
\end{proof}

Next, we derive an upper bound for $\Phi_t^{\mathrm{F}}$ in FedAvg in the following lemma.
\begin{lemma}\label{lem:phi_t_1}
In FedAvg, we have:
\begin{equation}
    \Phi_t^{\mathrm{F}} \leq \frac{\sigma^2}{|\mathcal{S}_t|K}. 
    \label{eq:fedavg:phi}
\end{equation}
\end{lemma}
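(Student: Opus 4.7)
The plan is to decompose $\mathbf{v}_t$ as a normalized double sum of per-client per-step stochastic gradients and isolate the mini-batch noise $\mathbf{n}_{i,k}\triangleq\nabla F_i(\mathbf{w}_{t,k}^i;\mathbf{\xi}_{t,k}^i)-\nabla f_i(\mathbf{w}_{t,k}^i)$. By Assumption~\ref{ass-2}, conditioned on the natural filtration $\mathcal{H}_{t,k}^i$ generated by all randomness up to and including the iterate $\mathbf{w}_{t,k}^i$, we have $\mathbb{E}[\mathbf{n}_{i,k}\mid\mathcal{H}_{t,k}^i]=0$ and $\mathbb{E}[\|\mathbf{n}_{i,k}\|^2\mid\mathcal{H}_{t,k}^i]\le\sigma^2$. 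This identifies the $\mathbf{n}_{i,k}$'s as the sole source of fluctuation of $\mathbf{v}_t$ around its mean, reducing the problem to bounding the second moment of $\frac{1}{|\mathcal{S}_t|K}\sum_{i\in\mathcal{S}_t}\sum_{k=0}^{K-1}\mathbf{n}_{i,k}$.

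Next I would expand the squared norm into diagonal terms $\|\mathbf{n}_{i,k}\|^2$ and off-diagonal inner products $\langle\mathbf{n}_{i,k},\mathbf{n}_{j,l}\rangle$ with $(i,k)\neq(j,l)$. Iterated conditioning (the tower property) kills each off-diagonal term: for cross-client pairs, independence of mini-batch sampling across clients suffices; for same-client pairs with $k\neq l$, conditioning on whichever step comes later in $\mathcal{H}_{t,\cdot}^i$ makes the later $\mathbf{n}$ conditionally zero-mean while the earlier $\mathbf{n}$ is measurable. The diagonal entries are each bounded by $\sigma^2$ via Assumption~\ref{ass-2}, and a counting argument over the $|\mathcal{S}_t|K$ diagonal terms, divided by the normalization $(|\mathcal{S}_t|K)^2$, delivers the claimed bound $\sigma^2/(|\mathcal{S}_t|K)$.

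The main obstacle is the within-client cross-terms: the later iterate $\mathbf{w}_{t,l}^i$ depends on the earlier noise $\mathbf{n}_{i,k}$ for $k<l$, so one cannot simply invoke independence of $\mathbf{n}_{i,k}$ and $\mathbf{n}_{i,l}$. The martingale structure of $\{\mathbf{n}_{i,k}\}_{k=0}^{K-1}$ with respect to the filtration $\{\mathcal{H}_{t,k}^i\}$ is essential, and all nontrivial work in the proof lies in formalizing this tower-conditioning step; once that is in place, the remainder is a routine variance computation.
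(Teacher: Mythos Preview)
Your proposal is correct and follows essentially the same approach as the paper's proof: both decompose $\mathbf{v}_t-\mathbb{E}[\mathbf{v}_t]$ into a double sum over clients and local steps, eliminate the cross terms (across clients by independence, across steps by the martingale-difference property of the mini-batch noise), and bound each of the $|\mathcal{S}_t|K$ diagonal terms by $\sigma^2$ via Assumption~\ref{ass-2}. The only difference is cosmetic: the paper first splits by client and then by step, citing Lemma~2 of \cite{wang2020tackling} for the decomposition, whereas you spell out the filtration and tower-property argument for the within-client cross terms explicitly.
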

\begin{proof}
Following the proof of Lemma 2 in \cite{wang2020tackling}, we decompose the error as follows:
\begin{small}
\begin{align}
    \Phi_t^{\mathrm{F}}
    = & \frac{1}{|\mathcal{S}_t|^2} \sum_{i\in\mathcal{S}_t} \mathbb{E} \left[ \left\| \hat{g}_i(\mathbf{w}_t) - \mathbb{E} \left[ \hat{g}_i(\mathbf{w}_t) \right] \right\|^2 \right] \nonumber \\
    = & \frac{1}{|\mathcal{S}_t|^2K^2} \! \sum_{i\in\mathcal{S}_t} \! \sum_{k=0}^{K-1} \! \mathbb{E} \! \left[\! \left\| \nabla F_i(\mathbf{w}_{t,k}^i; \mathbf{\xi}_{t,k}^i) \!-\! \mathbb{E} \! \left[ \nabla F_i(\mathbf{w}_{t,k}^i; \mathbf{\xi}_{t,k}^i) \right] \right\|^2 \! \right] \nonumber \\
    \overset{(\text{a})}{\leq} & \frac{\sigma^2}{|\mathcal{S}_t|K},
\end{align}
\end{small}
where (a) follows Assumption \ref{ass-2}.
\end{proof}

After respectively bounding the terms $\Phi_t^{\mathrm{F}}$ and $\mathcal{E}_t^{\mathrm{F}}$, we obtain the following theorem that characterizes the convergence behavior of FedAvg with arbitrary client dropouts.
\begin{theorem}\label{convergence-fedavg}
With Assumptions \ref{ass-1}-\ref{ass-3}, if the local learning rate satisfies $1-C_K>0$ and $\eta_L = \mathcal{O}\left( \frac{1}{L\sqrt{TK}} \right) \leq \frac{1}{10 L}$, the following inequality holds:
\begin{align}
    & \frac{1}{\Xi_T} \sum_{t=0}^{T-1} \eta_t (1 - C_K ) \mathbb{E} [\left\| \nabla f(\mathbf{w}_t) \right\|^2] \nonumber \\
    \leq & \frac{ 2 \Delta }{\Xi_T}
    + \frac{1}{\Xi_T} \sum_{t=0}^{T-1} \left( \frac{\eta_t^2 L}{|\mathcal{S}_t| K} + 4\eta_t \eta_L^2 L^2 (K-1) \right) \sigma^2 \nonumber \\
    + & \frac{1}{\Xi_T} \sum_{t=0}^{T-1} 16\eta_t \eta_L^2 L^2 K(K-1) \frac{1}{|\mathcal{S}_t|} \sum_{i\in\mathcal{S}_t} \kappa_i^2 \nonumber \\
    + & \underbrace{\frac{1}{\Xi_T} \sum_{t=0}^{T-1} \eta_t \mathbb{E}\left[ \left\| \frac{1}{|\mathcal{S}_t|} \sum_{i\in\mathcal{S}_t} \nabla f_i(\mathbf{w}_t) - \nabla f(\mathbf{w}_t) \right\|^2 \right] }_{\Gamma_T},
    \label{eq:fedavg}
\end{align}
where $C_K \triangleq 16 \eta_L^2 L^2 K(K-1)$, $\Xi_T \triangleq \sum_{t=0}^{T-1} \eta_t$, and $\Delta \triangleq \mathbb{E} [f(\mathbf{w}_{0})] - \mathbb{E} [f(\mathbf{w}^*)]$.
\end{theorem}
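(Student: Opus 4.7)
The plan is to combine the three lemmas already established and telescope the resulting per-iteration bound over $t = 0, 1, \dots, T-1$. Lemma \ref{lem:one-round} controls the loss decay in iteration $t$ in terms of $\Phi_t$ and $\mathcal{E}_t$, while Lemmas \ref{lem:E_t_1} and \ref{lem:phi_t_1} specialize these to FedAvg. The skeleton is: (i) plug the FedAvg expressions into Lemma \ref{lem:one-round}; (ii) use the learning-rate conditions to discard the non-positive $\|\mathbb{E}[\mathbf{v}_t]\|^2$ term and to consolidate the $\mathbb{E}[\|\nabla f(\mathbf{w}_t)\|^2]$ contributions into a single coefficient $(1-C_K)/2$; (iii) telescope and divide by $\Xi_T$.

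Concretely, I would first note that since $\eta_L = \mathcal{O}\!\left(1/(L\sqrt{TK})\right)$, for any reasonable schedule with $\eta_t \le 1/L$ we have $\eta_t/2 - \eta_t^2 L/2 \ge 0$, so the term $-(\eta_t/2 - \eta_t^2 L/2)\|\mathbb{E}[\mathbf{v}_t]\|^2$ in Lemma \ref{lem:one-round} can be dropped. Substituting $\Phi_t^{\mathrm{F}} \le \sigma^2/(|\mathcal{S}_t|K)$ from Lemma \ref{lem:phi_t_1} and the four-term bound on $\mathcal{E}_t^{\mathrm{F}}$ from Lemma \ref{lem:E_t_1} into \eqref{eq:one-round}, I would recognize the gradient-norm piece inside $\mathcal{E}_t^{\mathrm{F}}$ as exactly $C_K\,\mathbb{E}[\|\nabla f(\mathbf{w}_t)\|^2]$ (because $C_K = 16\eta_L^2 L^2 K(K-1)$ does not depend on the index $i$ inside the average over $\mathcal{S}_t$). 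Moving this term to the left-hand side of the one-step inequality produces the prefactor $\eta_t(1-C_K)/2$, which is where the hypothesis $1-C_K > 0$ is used.

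After this rearrangement, iteration $t$ satisfies
\begin{align*}
\tfrac{\eta_t(1-C_K)}{2}\mathbb{E}\bigl[\|\nabla f(\mathbf{w}_t)\|^2\bigr]
\le \mathbb{E}[f(\mathbf{w}_t)] - \mathbb{E}[f(\mathbf{w}_{t+1})]
+ R_t,
\end{align*}
where $R_t$ collects the $\sigma^2$-, $\kappa_i^2$-, and $\gamma_t$-terms with their explicit $\eta_t$, $\eta_t^2 L$, $\eta_L^2 L^2$ coefficients. I would then sum from $t=0$ to $T-1$. The loss-difference terms telescope into $\mathbb{E}[f(\mathbf{w}_0)] - \mathbb{E}[f(\mathbf{w}_T)] \le \mathbb{E}[f(\mathbf{w}_0)] - \mathbb{E}[f(\mathbf{w}^*)] = \Delta$, which after multiplying through by $2$ yields the leading $2\Delta$ in the theorem. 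Dividing both sides by $\Xi_T = \sum_{t=0}^{T-1}\eta_t$ and matching coefficients term-by-term reproduces the four pieces on the right-hand side of \eqref{eq:fedavg} (the $\sigma^2$ contributions from $\Phi_t^{\mathrm{F}}$ and from the last summand of the $\mathcal{E}_t^{\mathrm{F}}$ bound combine into the bracketed coefficient $\eta_t^2 L/(|\mathcal{S}_t|K) + 4\eta_t\eta_L^2 L^2(K-1)$).

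The conceptually routine telescoping is not the main obstacle; the only delicate point is book-keeping the learning-rate inequalities. Specifically, one must verify simultaneously that (a) $\eta_t \le 1/L$ so that Lemma \ref{lem:one-round}'s negative term can be dropped, (b) $\eta_L \le 1/(10L)$ so that Lemma \ref{lem:E_t_1} applies, and (c) $\eta_L$ is small enough that $C_K = 16\eta_L^2 L^2 K(K-1) < 1$, which is what permits moving the gradient-norm part of $\mathcal{E}_t^{\mathrm{F}}$ to the left-hand side with a positive coefficient. The choice $\eta_L = \mathcal{O}(1/(L\sqrt{TK}))$ in the hypothesis was tuned precisely to satisfy all three simultaneously, so the calculation goes through directly, and the $\gamma_t$-term is left un-simplified in the bound because, without a structural assumption on the dropout pattern, it need not vanish — this is the quantitative manifestation of the biased update phenomenon the paper wants to highlight.
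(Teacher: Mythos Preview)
Your proposal is correct and follows essentially the same approach as the paper's proof: start from Lemma~\ref{lem:one-round}, drop the non-positive $-(\eta_t/2-\eta_t^2L/2)\|\mathbb{E}[\mathbf{v}_t]\|^2$ term, substitute the FedAvg bounds from Lemmas~\ref{lem:E_t_1} and~\ref{lem:phi_t_1}, absorb the $C_K\,\mathbb{E}[\|\nabla f(\mathbf{w}_t)\|^2]$ piece of $\mathcal{E}_t^{\mathrm{F}}$ into the left-hand side, telescope, and divide by $\Xi_T$. If anything, your write-up is slightly more careful than the paper's appendix in spelling out the learning-rate conditions (a)--(c) needed to justify each step.
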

\begin{proof}[Proof Sketch]
We rearrange the terms in \eqref{eq:one-round}, substitute the terms $\mathcal{E}_t^{\mathrm{F}}$ and $\Phi_t^{\mathrm{F}}$ by their upper bounds in \eqref{eq:E_t-fedavg} and \eqref{eq:fedavg:phi}, respectively, and sum up both sides over $t=0,1,\cdots,T-1$ to obtain the result.
Please refer to Appendix \ref{proof:thm:fedavg} for the detailed proof.
\end{proof}

\begin{remark}
At the RHS of \eqref{eq:fedavg}, the first term measures the initial error.
The second and third terms are accumulated errors respectively caused by the mini-batching sampling and data heterogeneity. As aforementioned, these terms are common in the convergence bounds and cannot be mitigated in local SGD-based FL \cite{khaled2020tighter,stich2018local}.
Moreover, the term $\Gamma_T$ represents the \textit{objective mismatch} caused by client dropouts over the $T$ iterations.
\end{remark}

\begin{remark}
\textbf{(Convergence of FedAvg)}
The common choice of the global learning rate in existing FL literature satisfies $\lim_{T\rightarrow \infty} \sum_{t=0}^{T-1} \eta_t = \infty$, $\lim_{T\rightarrow \infty} \sum_{t=0}^{T-1} \eta_t^2 < \infty$ \cite{fedprox,li2019convergence,bottou2018optimization}.
Notice that with such global learning rates, the RHS of \eqref{eq:fedavg} except the term $\Gamma_T$ converges to zero as $T\rightarrow \infty$, implying that in the presence of arbitrary client dropout, FedAvg may only oscillate around a stationary point of the global loss function in the worst case. Nevertheless, there are still two obvious special cases where FedAvg can converge with client dropouts: 1) data among clients are IID (i.e., $\nabla f(\mathbf{w}) \equiv \nabla f_i(\mathbf{w}), \forall i\in\mathcal{N}$); and 2) clients drop out with the same probability ($\Gamma_T=0$) \cite{li2019convergence,yang2021achieving}.
Unfortunately, none of these cases is practical in cross-device FL.
\end{remark}

Since the objective mismatch $\Gamma_T$ caused by arbitrary client dropouts cannot be eliminated by FedAvg, in order to mitigate its negative effects on model training, we propose a novel FL algorithm in the next section by modifying the update for each active client to mimic the imaginary central update.

\section{Proposed Algorithm: MimiC}\label{sec:mimic}
In this section, we propose the MimiC algorithm to mitigate the objective mismatch problem caused by client dropouts.
Since each client has a distinctive local objective, we introduce a \textit{correction variable} (denoted by $\mathbf{c}_t^i$) in MimiC to correct its local update such that the aggregated update is close to a desired \textit{central update} $\nabla f(\mathbf{w}_t)$.
Note that $\nabla f(\mathbf{w}_t) = \frac{1}{N}\sum_{i\in\mathcal{N}} \nabla f_i(\mathbf{w}_t)$ represents an imaginary update over all training data based on the current model $\mathbf{w}_t$, which cannot be obtained with client dropouts.
Specifically, once any client $i$ uploads the local update $\hat{g}_i(\mathbf{w}_t)$ successfully, the server modifies the received update as $\mathbf{v}_t^i \triangleq \hat{g}_i(\mathbf{w}_t) + \mathbf{c}_t^i$. The global update is computed as $\mathbf{v}_t = \frac{1}{|\mathcal{S}_t|} \sum_{i\in\mathcal{S}_t} \mathbf{v}_t^i$ and applied to the global model according to \eqref{eq:global-update}.
To ensure model convergence, the key is to design this correction variable properly.

\begin{algorithm}[tb]
\caption{Training Algorithm of MimiC}\label{alg:algorithm}
\textbf{Operations at the server side:}
\begin{algorithmic}[1] 
\STATE{Initialize $\mathbf{c}_0^i=\mathbf{0}, \forall i\in\mathcal{N}$ and a random model $\mathbf{w}_0$;}
\FOR{$t=0,1,\dots,T-1$}
    \STATE{Broadcast $\mathbf{w}_{t}$ to all clients and inform them to perform \textit{LocalTrain}($\mathbf{w}_t$);}
	\STATE{Receive updates $\hat{g}_i(\mathbf{w}_t)$ from active clients in $\mathcal{S}_t$;}
	\FOR{each received update $\hat{g}_i(\mathbf{w}_t), i\in\mathcal{S}_t$}
		\STATE{Modify the update as $\mathbf{v}_t^i \triangleq \hat{g}_i(\mathbf{w}_t) + \mathbf{c}_{t-\tau(t,i)}^i$;}
	\ENDFOR
	\STATE{Compute the global update as $\mathbf{v}_t \triangleq \frac{1}{|\mathcal{S}_t|} \sum_{i\in\mathcal{S}_t} \mathbf{v}_t^i$;}
	\STATE{Update the global model as $\mathbf{w}_{t+1} = \mathbf{w}_t - \eta_t \mathbf{v}_t$;}
	\STATE{Update the correction variable $\mathbf{c}_t^i \leftarrow \mathbf{v}_t - \hat{g}_i(\mathbf{w}_t)$ for each $i\in\mathcal{S}_t$;}
\ENDFOR
\end{algorithmic}
\textbf{Operations at the client side:\\}
\textbf{def} \textit{LocalTrain}($\mathbf{w}_t$):
\begin{algorithmic}[1]
\STATE{Initialize $\mathbf{w}_{t,0}^i=\mathbf{w}_{t}$;}
\FOR{$k=0,1,\dots,K-1$}
	\STATE{Perform local training according to $\mathbf{w}_{t,k+1}^i \leftarrow \mathbf{w}_{t,k}^i - \eta_L \nabla F_i(\mathbf{w}_{t,k}^i; \mathbf{\xi}_{t,k}^i)$;}
\ENDFOR
\STATE{Compute the local update as $\hat{g}_i(\mathbf{w}_t) \triangleq \frac{1}{K}\sum_{k=0}^{K-1} \nabla F_i(\mathbf{w}_{t,k}^i; \mathbf{\xi}_{t,k}^i)$;}
\STATE{\textbf{Return} $\hat{g}_i(\mathbf{w}_t)$ to the server.}
\end{algorithmic}
\end{algorithm}

Given the limited resources of clients in cross-device FL, we should avoid introducing additional computation or communication workload to them.
Besides, since the server cannot perceive the client dropouts in advance, it is difficult to take preventative actions, e.g., assigning a higher training workload to a client that may become unavailable later.
Therefore, we propose to exploit the gathered updates at the server by monitoring the training drifts in previous iterations.
Assuming client $i$ is active in training iteration $t$, the correction variable is defined as the drift from the current local update $\hat{g}_i(\mathbf{w}_t)$ to the applied global update $\mathbf{v}_t$, i.e.,
\begin{equation}
    \mathbf{c}_t^i \triangleq \mathbf{v}_t - \hat{g}_i(\mathbf{w}_t).
    \label{eq:c}
\end{equation}
The correction variable $\mathbf{c}_t^i$ is updated if client $i$ is active in an iteration; otherwise it remains unchanged.
In a later iteration, the server utilizes $\{\mathbf{c}_t^i\}$'s to correct the biased client updates.
To summarize, the global update in iteration $t$ of MimiC is expressed as follows:
\begin{align}
    \mathbf{v}_t &\triangleq \frac{1}{|\mathcal{S}_t|} \sum_{i\in\mathcal{S}_t} \left( \hat{g}_i(\mathbf{w}_t) + \mathbf{c}_{t-\tau(t,i)}^{i} \right) \nonumber \\
    &= \frac{1}{|\mathcal{S}_t|} \sum_{i\in\mathcal{S}_t} \left( \hat{g}_i(\mathbf{w}_t) - \hat{g}_i\left(\mathbf{w}_{t-\tau(t,i)}\right) + \mathbf{v}_{t-\tau(t,i)} \right).
\end{align}

The detailed training process of MimiC is summarized in Algorithm \ref{alg:algorithm}.
We also show an example with four clients to illustrate the implementation of MimiC in Fig. \ref{fig:correction}, where two of them are available in the $t$-th training iteration.
Each active client uses its correction variable maintained from the latest active iteration $t^\prime = t-\tau(t, i)$ to correct its local update.
As a result, the aggregated update, i.e., the average of their corrected updates, is forced to mimic an imaginary central update and thus the global model is optimized towards the minimum of the global training objective. 
\begin{figure*}[t]
\centering
\includegraphics[width=0.7\textwidth]{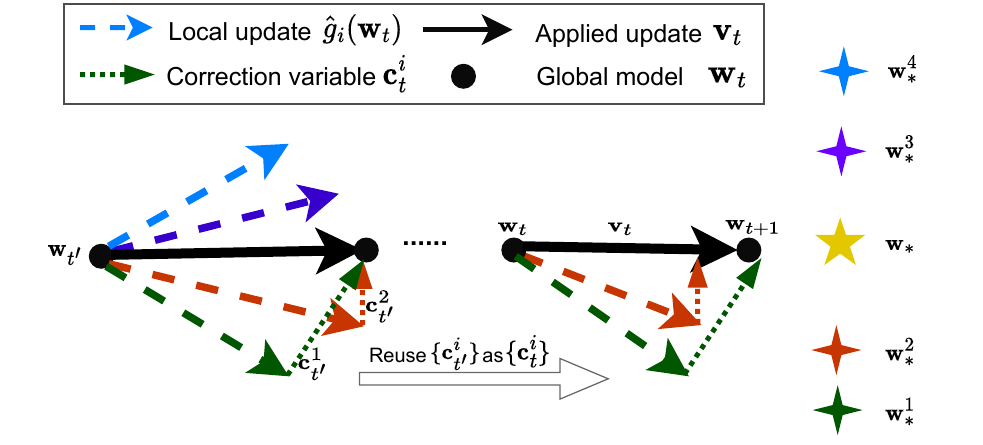} 
\caption{An illustration of MimiC with fours clients. 
While $\mathbf{w}_*$ is the optimum of the global loss function, clients perform updates towards the optimums of their local loss functions (denoted by $\{\mathbf{w}^i_*\}$'s).
For clarity, consider both client $1$ and client $2$ are active in iteration $t^{\prime}$ and $t$.
In iteration $t$, $\mathbf{c}_{t^{\prime}}^{1}$ and $\mathbf{c}_{t^{\prime}}^{2}$ are used to correct their updates. The average of the modified updates gives the applied global update $\mathbf{v}_t$.}
\label{fig:correction}
\end{figure*}

To demonstrate the rationale of the proposed MimiC algorithm, we consider a simplified case with full-batch gradient descent as the local optimizer and assume one local step ($K=1$) in each training iteration, i.e., $\hat{g}_i(\mathbf{w}_t) \equiv \nabla f_i(\mathbf{w}_t)$.
Denote $\Delta_{-i}(\mathbf{w}_{t}) \triangleq \nabla f(\mathbf{w}_t) - \nabla f_i(\mathbf{w}_t)$.
The modified update $\mathbf{v}_t^i$ of an active client $i$ in iteration $t$ is expressed as follows: 
\begin{align}
    \mathbf{v}_t^i &= \nabla f_i(\mathbf{w}_t) - \nabla f_i\left(\mathbf{w}_{t-\tau(t,i)} \right) + \mathbf{v}_{t-\tau(t,i)} \nonumber \\
    &\overset{(\text{a})}{=} \nabla f_i(\mathbf{w}_t) + \left[ - \nabla f_i\left(\mathbf{w}_{t-\tau(t,i)} \right) + \nabla f\left(\mathbf{w}_{t-\tau(t,i)} \right) \right] \nonumber \\
    & + \underbrace{[- \nabla f\left(\mathbf{w}_{t-\tau(t,i)} \right) + \mathbf{v}_{t-\tau(t,i)}]}_{A_1} \nonumber \\
    &= \nabla f_i(\mathbf{w}_t) + A_1 + \Delta_{-i}(\mathbf{w}_{t-\tau(t,i)}) \nonumber \\
    &\overset{(\text{b})}{=} \nabla f_i(\mathbf{w}_t) + A_1 + \underbrace{\Delta_{-i}(\mathbf{w}_{t-\tau(t,i)}) - \Delta_{-i}(\mathbf{w}_{t})}_{A_2} + \Delta_{-i}(\mathbf{w}_{t}) \nonumber \\
    &\overset{(\text{c})}{=} \nabla f(\mathbf{w}_t) + A_1 + A_2. 
    \label{eq:approximate}
\end{align}
In (a) of \eqref{eq:approximate}, the term $A_1$ denotes the difference between $\nabla f\left(\mathbf{w}_{t-\tau(t,i)} \right)$ and $\mathbf{v}_{t-\tau(t,i)}$ in the latest iteration $t-\tau(t,i)$ where client $i$ is active. 
We assume that all clients in the first training iteration are active, which, intuitively, ensures a high-quality approximation in training iteration $t=0$ and leads to small accumulated error in subsequent iterations (i.e., $A_1$) as well.
The term $A_2$ in (b) comes from the model difference in two iterations, i.e., $\mathbf{w}_{t-\tau(t,i)} - \mathbf{w}_{t}$, which is bounded via gradients and can be eliminated with a proper choice of the global learning rate.
In addition, equation (c) follows the fact that $\Delta_{-i}(\mathbf{w}_{t}) + \nabla f_i(\mathbf{w}_t) = \nabla f(\mathbf{w}_t)$.
For the general case, the mini-batch gradients can be viewed as unbiased estimates of the full-batch gradient, while the effect of multiple local steps can also be bounded.
Therefore, the applied update for each client mimics the central update.
The above derivation also shows that $\left\| \mathbf{v}_i^t - \nabla f(\mathbf{w}_t) \right\|_2^2 = \left\| A_1 + A_2 \right\|_2^2$ is relatively stable.
As will be shown in the next section, the proposed MimiC algorithm converges to a stationary point of the global loss function even with arbitrary client dropouts in cross-device FL.

\textbf{Comparison with SCAFFOLD \cite{scaffold}.}
We notice that SCAFFOLD \cite{scaffold}, a seminal algorithm for handling the non-IID issues in FL, also uses correction variables to modify the local updates.
To be specific, SCAFFOLD extends a classic variance reduction technique named SAGA \cite{SAGA} to FL and modifies the client update in the $k$-th local step as follows:
\begin{equation}
    \hat{g}_{i}(\mathbf{w}_{t,k}^{i}) \triangleq g_{i}(\mathbf{w}_{t,k}^{i}) - g_{i}(\mathbf{w}_{t}) + \frac{1}{|\mathcal{S}_t|} \sum_{j\in\mathcal{S}_t} g_{j}(\mathbf{w}_{t}).
    \label{eq:scaffold}
\end{equation}
The local model $\mathbf{w}_{t,k}^{i}$ is then updated using the modified update in \eqref{eq:scaffold}.
After $K$ local updating steps, each client uploads the accumulated model update $\sum_{k=1}^{K} \hat{g}_{i}(\mathbf{w}_{t,k}^{i})$ to the server for aggregation.
In each training iteration, by correcting the local gradient $g_{i}(\mathbf{w}_{t,k}^{i})$ using the update deviation evaluated with the initial model $\mathbf{w}_{t}$ (i.e., $\frac{1}{|\mathcal{S}_t|} \sum_{j\in\mathcal{S}_t} g_{j}(\mathbf{w}_{t}) - g_{i}(\mathbf{w}_{t})$), the modified update in \eqref{eq:scaffold} alleviates the negative effect of biased local updates.
However, the theoretical convergence guarantee of SCAFFOLD requires $\mathbb{E} \left[\frac{1}{|\mathcal{S}_t|} \sum_{j\in\mathcal{S}_t} g_{j}(\mathbf{w}_{t}) \right] = \frac{1}{N} \sum_{j\in\mathcal{N}} g_{j}(\mathbf{w}_{t})$ and only holds with uniform client participation.
In comparison, MimiC aims to correct the training bias caused by both client dropout and data heterogeneity. 
Such a bias is encoded in the difference between the local update and the sum of model updates from all clients (i.e., $\frac{1}{N} \sum_{j\in\mathcal{N}} \nabla f_j(\mathbf{w}_t) - \hat{g}_i(\mathbf{w}_{t})$), which, however, is not accessible due to client dropouts.
Hence, we propose to use $\mathbf{c}_{t-\tau(t, i)}^{i} = \mathbf{v}_{t-\tau(t, i)} - \hat{g}_i(\mathbf{w}_{t-\tau(t, i)})$ from iteration $t-\tau(t, i)$ where client $i$ was last active to correct the local update. 
We demonstrate the superior performance of MimiC compared with SCAFFOLD in Section \ref{sec:evaluation}.

\textbf{Practical implementations.}
If there are a large number of clients, the proposed MimiC algorithm can be implemented in an alternative way to amortize the computation and memory overhead at the server, where the clients correct their gradient updates locally and upload the modified gradients. To be specific, each client stores a local correction variable $\mathbf{c}_{t^\prime}^i = \mathbf{v}_{t^\prime} - \hat{g}_i(\mathbf{w}_{t^\prime})$ after an active iteration $t^\prime$. In the next active iteration $t$, client $i$ computes gradient $\hat{g}_i(\mathbf{w}_{t})$ and upload the corrected update $\mathbf{v}_{t}^{i} = \hat{g}_i(\mathbf{w}_{t}) + \mathbf{c}_{{t^\prime}}^i$ to the server for aggregation. In this way, compared with FedAvg, each client only needs to store one more variable $\mathbf{c}_{{t^\prime}}^i$ with the same dimension as the model gradient and there is no additional computation and memory requirement at the server.
Notably, as the server only requires a sum $\sum_{i\in \mathcal{S}_t} \mathbf{v}_{t}^{i}$, secure aggregation techniques \cite{elkordy2022heterosag,bonawitz2017practical} can be further implemented to preserve the privacy of local models.

\section{Convergence of MimiC}\label{sec:convergenceofmimic}
In this section, we analyze the convergence performance of MimiC.
Similar to the analysis in Section \ref{sec:fedavg}, we first derive an upper bound of the gradient estimation error $\mathcal{E}_t \triangleq \mathbb{E} [\left\| \mathbb{E}[\mathbf{v}_t] - \nabla f (\mathbf{w}_t) \right\|^2 ]$ in the following lemma.

\begin{lemma}\label{lem:E_t_2}
If the global learning rates satisfy:
\begin{equation}
    \rho_t \triangleq \frac{\eta_t|\mathcal{S}_{t+1}|}{\eta_{t+1}|\mathcal{S}_t|} >1, \forall t \in [T],
\label{eq:lr-1}
\end{equation}
there exists an arbitrarily small positive constant $\nu>0$ such that the gradient estimation error $\mathcal{E}_t^{\mathrm{M}}$ in MimiC can be upper bounded as follows:
\begin{align}
    \mathcal{E}_t^{\mathrm{M}}
    \leq & \frac{1}{|\mathcal{S}_t|} \sum_{i\in\mathcal{S}_t} (1+\alpha_t) \mathcal{E}_{t-\tau(t,i)} \nonumber \\
    + & 2 \left(1+\frac{1}{\alpha_t} \right) \varphi_K \tau(t,i) \sum_{s=1}^{\tau(t,i)}\eta_{t-s}^2  \frac{\sigma^2}{|\mathcal{S}_{t-s}|K} \nonumber \\
    + & 2 \left(1+\frac{1}{\alpha_t} \right) \varphi_K \tau(t,i) \sum_{s=1}^{\tau(t,i)}\eta_{t-s}^2 \left\| \mathbb{E}[\mathbf{v}_{t-s}] \right\|^2,
    \label{eq:e_t_2}
\end{align}
where $\varphi_K \triangleq \left(\frac{(2+2\eta_L^2L^2)^K - 1}{K(2\eta_L^2L^2 + 1)} + 1 \right) L^2$ and $\alpha_t \triangleq \frac{\eta_t|\mathcal{S}_{t+1}|}{\eta_{t+1}|\mathcal{S}_t|} - 1 - \nu$.
\end{lemma}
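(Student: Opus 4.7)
The plan is to start from $\mathcal{E}_t^{\mathrm{M}} = \mathbb{E}\bigl[\|\mathbb{E}[\mathbf{v}_t] - \nabla f(\mathbf{w}_t)\|^2\bigr]$ and substitute the MimiC aggregation rule $\mathbf{v}_t = \frac{1}{|\mathcal{S}_t|}\sum_{i\in\mathcal{S}_t}\bigl(\hat g_i(\mathbf{w}_t) - \hat g_i(\mathbf{w}_{t-\tau(t,i)}) + \mathbf{v}_{t-\tau(t,i)}\bigr)$, then replicate the telescoping decomposition already used informally in \eqref{eq:approximate}. For each $i\in\mathcal{S}_t$, I would add and subtract $\nabla f_i(\mathbf{w}_t)$, $\nabla f_i(\mathbf{w}_{t-\tau(t,i)})$, and $\nabla f(\mathbf{w}_{t-\tau(t,i)})$ to split the per-client error $\mathbb{E}[\mathbf{v}_t^i] - \nabla f(\mathbf{w}_t)$ into a ``past estimation'' piece $\mathbb{E}[\mathbf{v}_{t-\tau(t,i)}] - \nabla f(\mathbf{w}_{t-\tau(t,i)})$, whose squared expectation is precisely $\mathcal{E}_{t-\tau(t,i)}$, and a ``drift'' piece that tracks how $\nabla f_i - \nabla f$ changes between iterations $t-\tau(t,i)$ and $t$. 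Jensen's inequality then lets me pull the outer squared norm inside the average over $\mathcal{S}_t$.

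Next, I would apply the weighted Young inequality $\|a+b\|^2 \leq (1+\alpha_t)\|a\|^2 + (1+1/\alpha_t)\|b\|^2$ to separate the recursive $\mathcal{E}_{t-\tau(t,i)}$ contribution with coefficient $(1+\alpha_t)$ from the drift contribution with coefficient $(1+1/\alpha_t)$. The hypothesis $\rho_t>1$ in \eqref{eq:lr-1} is used exactly at this step: it guarantees that, for sufficiently small $\nu>0$, the quantity $\alpha_t = \rho_t - 1 - \nu$ is strictly positive, so that $(1+1/\alpha_t)$ remains finite and can be absorbed into the constants multiplying the drift terms.

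To handle the drift piece, I would bound the change in global gradients by the model drift via $L$-smoothness (Assumption~\ref{ass-1}), and then telescope $\mathbf{w}_t - \mathbf{w}_{t-\tau(t,i)} = -\sum_{s=1}^{\tau(t,i)}\eta_{t-s}\mathbf{v}_{t-s}$ using the update rule \eqref{eq:global-update}. Cauchy-Schwarz on this sum of $\tau(t,i)$ terms produces the prefactor $\tau(t,i)$ together with $\sum_{s=1}^{\tau(t,i)}\eta_{t-s}^2\mathbb{E}[\|\mathbf{v}_{t-s}\|^2]$. Decomposing $\mathbb{E}[\|\mathbf{v}_{t-s}\|^2] = \|\mathbb{E}[\mathbf{v}_{t-s}]\|^2 + \Phi_{t-s}$ and invoking the variance bound $\Phi_{t-s}\leq \sigma^2/(|\mathcal{S}_{t-s}|K)$ from Lemma~\ref{lem:phi_t_1}---which remains valid since the correction variable only shifts $\mathbf{v}_{t-s}$ by a deterministic quantity and does not alter its mini-batch variance---delivers precisely the two sums appearing on the right-hand side of \eqref{eq:e_t_2}.

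The last ingredient, and what I expect to be the main technical obstacle, is deriving the prefactor $\varphi_K$. Because $\hat g_i$ is computed from $K$ inner SGD steps rather than a single gradient, one cannot directly apply $L$-smoothness to $\hat g_i(\mathbf{w}_t)-\hat g_i(\mathbf{w}_{t-\tau(t,i)})$; instead, each local iterate $\mathbf{w}_{t,k}^i$ must be expanded recursively through \eqref{eq:local-sgd}. Applying $L$-smoothness step by step and repeatedly using $\|x+y\|^2\leq 2\|x\|^2+2\|y\|^2$ yields a geometric recursion with ratio $2+2\eta_L^2L^2$, whose $K$-fold summation collapses into the closed form $\varphi_K = \bigl[((2+2\eta_L^2L^2)^K-1)/(K(2\eta_L^2L^2+1)) + 1\bigr]L^2$ after dividing by $K$ from the averaging in $\hat g_i$. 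Careful interleaving of the expectations over mini-batch sampling and over the dropout set $\mathcal{S}_t$ is needed to justify pushing $\mathbb{E}[\cdot]$ across the averages, and this inner-loop bookkeeping---rather than the outer recursion in $t$---is where the bulk of the effort will go.
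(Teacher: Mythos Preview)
Your proposal is correct and matches the paper's argument essentially step by step: Jensen/Cauchy--Schwarz over $\mathcal{S}_t$, the weighted Young inequality with parameter $\alpha_t$ to isolate $\mathcal{E}_{t-\tau(t,i)}$, $L$-smoothness plus the telescoped global updates to bound the drift via $D_t^i=\mathbb{E}\|\mathbf{w}_t-\mathbf{w}_{t-\tau(t,i)}\|^2$, and the inner-loop geometric recursion with ratio $2+2\eta_L^2L^2$ that produces $\varphi_K$. The paper's decomposition is marginally simpler in that it inserts only $\nabla f(\mathbf{w}_{t-\tau(t,i)})$ (not the $\nabla f_i$ terms you mention), yielding the three pieces $\mathcal{E}_{t-\tau(t,i)}$, $\mathbb{E}\bigl\|\mathbb{E}[\hat g_i(\mathbf{w}_t)]-\mathbb{E}[\hat g_i(\mathbf{w}_{t-\tau(t,i)})]\bigr\|^2$, and $\mathbb{E}\bigl\|\nabla f(\mathbf{w}_{t-\tau(t,i)})-\nabla f(\mathbf{w}_t)\bigr\|^2$ directly, and then applies Young a second time with $\alpha=1$ to split the latter two---this is where the factor $2$ in front of $\varphi_K$ comes from, and the ``$+1$'' inside $\varphi_K$ is exactly the contribution of the third piece.
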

\begin{proof}[Proof Sketch]
We first utilize the Cauchy–Schwarz inequality to show
\begin{align}
    \mathcal{E}_t^{\mathrm{M}}
    \leq \frac{1}{|\mathcal{S}_t|} \! \sum_{i\in\mathcal{S}_t} \mathbb{E} \left[\left\| \mathbb{E}[\mathbf{v}_t^i] - \nabla f (\mathbf{w}_t) \right\|^2 \right].
\end{align}
Afterwards, we respectively upper bound the estimation error between $\mathbb{E}[\mathbf{v}_t^i]$ and $\nabla f(\mathbf{w}_t)$ for each client in $\mathcal{S}_{t}$ and sum these error bounds up.
The main challenge is that $\mathbb{E}[\mathbf{v}_t^i]$ and $\nabla f(\mathbf{w}_t)$ are not directly related.
To connect these terms, we resort to the intermediate terms $\nabla f(\mathbf{w}_{t-\tau(t,i)})$ and $\mathbb{E}[\hat{g}_i(\mathbf{w}_{t-\tau(t,i)})]$.
We then use the fact that $\|\mathbf{x}+\mathbf{y} \|^2 \leq (1+\alpha) \|\mathbf{x}\|^2 + (1+\frac{1}{\alpha}) \|\mathbf{y}\|^2, \forall \alpha >0$ with $\alpha=\alpha_t$ and $\alpha=1$ sequentially.
Note that to make $\alpha_t > 0$, the condition in \eqref{eq:lr-1} is required, which can be satisfied by properly adapting the global learning rates.
Thereafter, an upper bound of $\mathcal{E}_t^{\mathrm{M}}$ can be decomposed as follows:
\begin{align}
    \mathcal{E}_t^{\mathrm{M}}
    \leq & \frac{1}{|\mathcal{S}_t|} \sum_{i\in\mathcal{S}_t} (1+\alpha_t) \underbrace{ \mathbb{E}\left[\left\| \mathbb{E}[\mathbf{v}_{t-\tau(t,i)}] - \nabla f(\mathbf{w}_{t-\tau(t,i)}) \right\|^2\right]}_{\mathcal{E}_{t,1}^{i}} \nonumber \\
    & + 2 \left(1+\frac{1}{\alpha_t} \right) \underbrace{\mathbb{E}\left[\left\| \mathbb{E}[\hat{g}_i(\mathbf{w}_t)] - \mathbb{E}[\hat{g}_i(\mathbf{w}_{t-\tau(t,i)})] \right\|^2\right]}_{\mathcal{E}_{t,2}^{i}} \nonumber \\
    & + 2\left(1+\frac{1}{\alpha_t} \right) \underbrace{ \mathbb{E}\left[\left\| \nabla f(\mathbf{w}_{t-\tau(t,i)}) - \nabla f(\mathbf{w}_t) \right\|^2\right]}_{\mathcal{E}_{t,3}^{i}}.
    \label{eq:decomposition}
\end{align}
In \eqref{eq:decomposition}, $\mathcal{E}_{t,1}^{i}$ is the gradient estimation error $\mathcal{E}_{t-\tau(t,i)}$ in iteration $t-\tau(t,i)$ where client $i$ was last active.
Besides, both $\mathcal{E}_{t,2}^{i}$ and $\mathcal{E}_{t,3}^{i}$ depend on the difference between $\mathbf{w}_t$ and $\mathbf{w}_{t-\tau(t,i)}$, which can be upper bounded by the aggregated updates from iteration $t-\tau(t,i)$ to $t$, and rearranging terms related to $\sigma^2$ and $\eta_{t-s}^2 \left\| \mathbb{E}[\mathbf{v}_{t-s}] \right\|^2$, respectively.
Please refer to Appendix \ref{proof:lem:E_t_2} for the detailed proof.
\end{proof}

\begin{remark}
As will be shown in Theorem \ref{thm:mimic}, the terms on the RHS of \eqref{eq:e_t_2} can be eliminated by using proper learning rates.
By comparing the upper bounds of $\mathcal{E}_t^{\mathrm{F}}$ in Lemma \ref{lem:E_t_1} and $\mathcal{E}_t^{\mathrm{M}}$ in Lemma \ref{lem:E_t_2}, it can be noticed that the term $\gamma_t$ caused by arbitrary client dropouts in FedAvg is eliminated in MimiC.
This is because the correction variables $\{ \mathbf{c}_t^i \}$'s enforce the applied updates $\{ \mathbf{v}_t^i \}$'s to mimic the central update $\nabla f(\mathbf{w}_t)$ with a bounded divergence.
With a proper choice of the learning rates, the corresponding objective mismatch can also be eliminated.
\end{remark}

Similar to Lemma \ref{lem:phi_t_1}, we derive an upper bound for $\Phi_t^{\mathrm{M}}$ in MimiC in the following lemma.
\begin{lemma}\label{lem:phi_t_2}
In MimiC, we have:
\begin{equation}
    \Phi_t^{\mathrm{M}} \leq \frac{\sigma^2}{|\mathcal{S}_t|K}. 
    \label{eq:mimic:phi}
\end{equation}
\end{lemma}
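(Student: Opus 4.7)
The plan is to observe that the MimiC update differs from the FedAvg update only by an additive term that is deterministic given the history, so the variance $\Phi_t^{\mathrm{M}}$ reduces to the FedAvg quantity already handled in Lemma \ref{lem:phi_t_1}. Explicitly, I would write $\mathbf{v}_t^{\mathrm{M}} = \mathbf{v}_t^{\mathrm{F}} + \mathbf{b}_t$ with $\mathbf{b}_t \triangleq \frac{1}{|\mathcal{S}_t|}\sum_{i\in\mathcal{S}_t}\mathbf{c}_{t-\tau(t,i)}^i$. Each $\mathbf{c}_{t-\tau(t,i)}^i$ was formed from mini-batch samples drawn strictly before iteration $t$ and from the index sets $\{\mathcal{S}_s\}_{s<t}$, hence is measurable with respect to the history through iteration $t-1$. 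Conditioning on this history, $\mathbf{b}_t$ is a deterministic shift, so it cancels in the centered quantity: $\mathbf{v}_t^{\mathrm{M}} - \mathbb{E}[\mathbf{v}_t^{\mathrm{M}}\mid \text{past}] = \mathbf{v}_t^{\mathrm{F}} - \mathbb{E}[\mathbf{v}_t^{\mathrm{F}}\mid \text{past}]$.

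Once the corrections are out of the way, I would reproduce verbatim the computation in the proof of Lemma \ref{lem:phi_t_1}. Conditional on the history, the stochastic gradients $\{\hat{g}_i(\mathbf{w}_t)\}_{i\in\mathcal{S}_t}$ are mutually independent across clients (each client draws its own fresh mini-batches), and within each client the $K$ per-step stochastic gradients $\nabla F_i(\mathbf{w}_{t,k}^i;\mathbf{\xi}_{t,k}^i)$ are independent across $k$. Variance of the double average therefore decomposes into $|\mathcal{S}_t|K$ independent summands, each bounded by $\sigma^2$ via Assumption \ref{ass-2}. After the $1/(|\mathcal{S}_t|^2 K^2)$ normalization, this yields the conditional bound $\sigma^2/(|\mathcal{S}_t|K)$; the tower property then lifts it to the unconditional $\Phi_t^{\mathrm{M}} \leq \sigma^2/(|\mathcal{S}_t|K)$.

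The only subtle point is justifying that $\mathbf{b}_t$ contributes nothing to the iteration-$t$ variance, which rests on the measurability observation above. No new difficulty arises beyond this bookkeeping; the remainder is a direct invocation of the argument in Lemma \ref{lem:phi_t_1}.
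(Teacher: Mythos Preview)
Your proposal is correct and matches the paper's approach: the paper likewise observes that the correction term is determined by previous iterations and therefore cancels when centering (it does this per client, writing $\mathbf{v}_t^i - \mathbb{E}[\mathbf{v}_t^i] = \hat{g}_i(\mathbf{w}_t) - \mathbb{E}[\hat{g}_i(\mathbf{w}_t)]$, whereas you do it at the aggregate level), and then reduces directly to the computation in Lemma~\ref{lem:phi_t_1}.
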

\begin{proof}
Following a similar proof in Lemma \ref{lem:phi_t_1}, we have:
\begin{align}
    \Phi_t^{\mathrm{M}} 
    = & \frac{1}{|\mathcal{S}_t|^2} \sum_{i\in\mathcal{S}_t} \mathbb{E} \left[\left\| \mathbf{v}_t^i - \mathbb{E} [\mathbf{v}_t^i] \right\|^2 \right] \nonumber \\
    \overset{(\text{a})}{=} & \frac{1}{|\mathcal{S}_t|^2} \sum_{i\in\mathcal{S}_t} \mathbb{E} \left[\left\| \left( \hat{g}_i(\mathbf{w}_t) \right) - \mathbb{E}\left[ \hat{g}_i(\mathbf{w}_t) \right] \right\|^2 \right] \nonumber \\
    \leq & \frac{1}{|\mathcal{S}_t|K} \sigma^2,
\end{align}
where (a) holds since the expectation is taken with respect to the mini-batch sampling in iteration $t$, which is irrelevant to previous updates.
\end{proof}

In order to show the convergence of MimiC, we present an upper bound for a weighted sum of the expected gradient norm in the following theorem.

\begin{theorem}\label{thm:mimic}
With Assumptions \ref{ass:clients}-\ref{ass-2}, if learning rates $\{\eta_t\}_{t=0}^{T}$ satisfy $\rho_t > 1$ and
\begin{equation}
    \frac{1}{\eta_{t}} \left( \frac{1}{2\eta_t} - \frac{L}{2} \right) \geq \frac{(\rho_t-\nu)\varphi_K \tau_{\max} N}{2\nu|\mathcal{S}_{t}|},
    \label{eq:lr-2}
\end{equation}
we have:
\begin{align}
    & \frac{1}{\sum_{t=0}^{T-1} \eta_t} \sum_{t=0}^{T-1} \eta_t \mathbb{E} [\left\| \nabla f(\mathbf{w}_t) \right\|^2] \nonumber \\
    \leq & \frac{ 2 \Delta }{\sum_{t=0}^{T-1} \eta_t}
    + \frac{2}{\sum_{t=0}^{T-1} \eta_t} \sum_{t=0}^{T-1} \frac{\sigma^2}{|\mathcal{S}_t|K} \left[ \frac{\eta_t^2 L}{2} \right. \nonumber \\
    + & \left. \frac{\rho_t(\rho_t-\nu) \eta_t \varphi_K \tau_{\max}^2}{\nu(\rho_t-1-\nu)} \frac{\eta_{t+1}^2}{|\mathcal{S}_{t+1}|} \right].
    \label{eq:thm-mimic}
\end{align}
\end{theorem}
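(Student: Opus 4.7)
The plan is to combine the single-iteration descent inequality from Lemma~\ref{lem:one-round} with the two upper bounds $\Phi_t^{\mathrm{M}} \le \sigma^2/(|\mathcal{S}_t|K)$ from Lemma~\ref{lem:phi_t_2} and the recursive bound on $\mathcal{E}_t^{\mathrm{M}}$ from Lemma~\ref{lem:E_t_2}, and then sum the resulting inequality over $t = 0, 1, \ldots, T-1$. After rearranging, the left-hand side becomes $\sum_{t} \tfrac{\eta_t}{2}\mathbb{E}[\|\nabla f(\mathbf{w}_t)\|^2]$, while the right-hand side splits into four groups: the telescoping loss-decay contribution $\Delta$, the negative quadratic terms $-(\tfrac{\eta_t}{2}-\tfrac{\eta_t^2 L}{2})\|\mathbb{E}[\mathbf{v}_t]\|^2$, the $\Phi_t^{\mathrm{M}}$-induced variance that directly produces the $\tfrac{\eta_t^2 L \sigma^2}{2|\mathcal{S}_t|K}$ terms in~\eqref{eq:thm-mimic}, and the $\mathcal{E}_t^{\mathrm{M}}$-induced terms that demand more care.

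Expanding $\sum_t \tfrac{\eta_t}{2}\mathcal{E}_t^{\mathrm{M}}$ via Lemma~\ref{lem:E_t_2} yields (a) a self-referential piece $\sum_t \tfrac{\eta_t(1+\alpha_t)}{2|\mathcal{S}_t|}\sum_{i\in\mathcal{S}_t}\mathcal{E}_{t-\tau(t,i)}$, (b) an additional $\sigma^2$-piece, and (c) a piece involving $\|\mathbb{E}[\mathbf{v}_{t-s}]\|^2$. For (a), I would exchange the order of summation: for each fixed index $t'$, the factor $\mathcal{E}_{t'}$ appears in the bound for $\mathcal{E}_t$ only when $t - \tau(t,i) = t'$, which requires $t \in (t', t' + \tau_{\max}]$ and happens at most once per client by Assumption~\ref{ass:clients}. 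Combining the monotonicity of $\eta_t/|\mathcal{S}_t|$ implied by $\rho_t > 1$ with the definition $\alpha_t = \rho_t - 1 - \nu$, the total coefficient of $\mathcal{E}_{t'}$ can be made strictly less than $\tfrac{\eta_{t'}}{2}$, so the recursive piece folds back into the left-hand sum with a contraction margin governed by $\nu > 0$.

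Piece (c) is the one that forces the learning-rate condition~\eqref{eq:lr-2}. A similar order-exchange assigns $\|\mathbb{E}[\mathbf{v}_{t'}]\|^2$ a coefficient of the form $\sum_{t\in(t', t'+\tau_{\max}]} \eta_t (1 + 1/\alpha_t) \varphi_K \tau_{\max} \eta_{t'}^2 / |\mathcal{S}_t|$, summed over the clients $i \in \mathcal{S}_t$ with $t - \tau(t, i) = t'$, whose count is bounded by $|\mathcal{S}_{t'}| \leq N$. Substituting $1 + 1/\alpha_t = (\rho_t - \nu)/(\rho_t - 1 - \nu)$, using the monotonicity of $\eta_t/|\mathcal{S}_t|$, and telescoping across the $\tau_{\max}$-window, this coefficient is bounded by a multiple of $\tfrac{(\rho_{t'} - \nu)\varphi_K \tau_{\max} N}{2\nu |\mathcal{S}_{t'}|}\,\eta_{t'}^2$, which is exactly the quantity that~\eqref{eq:lr-2} forces to be dominated by $\tfrac{\eta_{t'}}{2} - \tfrac{\eta_{t'}^2 L}{2}$. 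Hence the $\|\mathbb{E}[\mathbf{v}_{t'}]\|^2$ terms cancel against the negative quadratic terms inherited from Lemma~\ref{lem:one-round}.

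The main obstacle I anticipate is the bookkeeping of these two order-exchange steps: closing the self-reference in (a) relies on $\nu$ acting as a strict contraction margin (requiring $\rho_t > 1 + \nu$, which follows from $\rho_t > 1$ by choosing $\nu$ sufficiently small), while (c) demands a uniform bound on the accumulated coefficient over a sliding window of length $\tau_{\max}$. Boundary terms such as $\mathcal{E}_{t'}$ with $t' \leq 0$ are disposed of by the convention (discussed after~\eqref{eq:approximate}) that all clients are active in the first iteration, so the base case vanishes. After the cancellations, only $\Delta$ and the surviving $\sigma^2$-pieces (one from $\Phi_t^{\mathrm{M}}$ and one from (b)) remain; dividing both sides by $\sum_{t=0}^{T-1}\eta_t$ yields the stated bound~\eqref{eq:thm-mimic}.
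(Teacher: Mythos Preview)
Your overall plan—combine Lemma~\ref{lem:one-round} with Lemmas~\ref{lem:phi_t_2} and~\ref{lem:E_t_2}, sum over $t$, and eliminate the $\mathcal{E}_t$- and $\|\mathbb{E}[\mathbf{v}_t]\|^2$-terms by exchanging the order of summation—matches the paper's structure, and your handling of pieces~(b) and~(c) is broadly on track. The gap is in piece~(a). You claim that after expanding $\sum_t\tfrac{\eta_t}{2}\mathcal{E}_t^{\mathrm{M}}$ once via Lemma~\ref{lem:E_t_2}, the total coefficient attached to each $\mathcal{E}_{t'}$ is strictly below $\tfrac{\eta_{t'}}{2}$, giving a contraction that lets you ``fold back''. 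But the recursive factor in Lemma~\ref{lem:E_t_2} is $1+\alpha_t=\rho_t-\nu$, which is necessarily \emph{greater} than~$1$ whenever $\alpha_t>0$. After re-indexing, the coefficient of $\mathcal{E}_{t'}$ is $\sum_{i\in\mathcal{S}_{t'}}\tfrac{\eta_{t_i}(\rho_{t_i}-\nu)}{2|\mathcal{S}_{t_i}|}$, where $t_i>t'$ is client~$i$'s next active round. Using $\tfrac{\eta_{t_i}}{|\mathcal{S}_{t_i}|}=\tfrac{\eta_{t'}}{|\mathcal{S}_{t'}|}\big/\prod_{j=t'}^{t_i-1}\rho_j$, each summand equals $\tfrac{\eta_{t'}}{2|\mathcal{S}_{t'}|}\cdot\tfrac{\rho_{t_i}-\nu}{\prod_{j=t'}^{t_i-1}\rho_j}$; if, say, $\rho_{t_i}$ is much larger than $\rho_{t'}$ (nothing in the hypotheses forbids this—only $\rho_t>1$ is assumed), that ratio exceeds~$1$ and the claimed bound $<\tfrac{\eta_{t'}}{2}$ fails. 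So no self-bounding contraction is available from the direct expansion.

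The paper closes this with a Lyapunov-type device you are missing: before summing, it adds $\tfrac{\rho_t\eta_t}{2\nu}\mathcal{E}_t^{\mathrm{M}}$ to \emph{both} sides of the descent inequality and applies Lemma~\ref{lem:E_t_2} only to this added copy on the right, leaving the original $\tfrac{\eta_t}{2}\mathcal{E}_t^{\mathrm{M}}$ untouched. After summing and re-indexing, the right-hand coefficient of $\mathcal{E}_t^{\mathrm{M}}$ becomes $\tfrac{\eta_t}{2}+\tfrac{(\rho_t-\nu)\eta_t}{2\nu}=\tfrac{\rho_t\eta_t}{2\nu}$, matching the potential on the left exactly and allowing the $\mathcal{E}$-terms to cancel without any contraction argument. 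This same $\tfrac{\rho_t}{2\nu}$ scaling is also what produces the $1/\nu$ factor in condition~\eqref{eq:lr-2} when you treat piece~(c): under your direct expansion the coefficient of $\eta_{t'}^2\|\mathbb{E}[\mathbf{v}_{t'}]\|^2$ carries $1+1/\alpha_t=(\rho_t-\nu)/(\rho_t-1-\nu)$, so the denominator would be $\rho_t-1-\nu$, not $\nu$. Your claim that the resulting bound is ``exactly'' the right-hand side of~\eqref{eq:lr-2} therefore cannot be obtained from the route you describe; it is a fingerprint of the potential argument.
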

\begin{proof}[Proof Sketch]
Recall the one-round progress in \eqref{eq:one-round}, the following inequality holds:
\begin{align}
    &\mathbb{E} [f(\mathbf{w}_{t+1})] + \left(\frac{\eta_t}{2} - \frac{\eta_t^2 L}{2} \right) \left\| \mathbb{E}[\mathbf{v}_t] \right\|^2 \nonumber \\
    \leq & \mathbb{E} [f(\mathbf{w}_{t})] - \frac{\eta_t}{2} \mathbb{E} \left[ \left\| \nabla f(\mathbf{w}_t) \right\|^2 \right]+ \frac{\eta_t^2 L}{2|\mathcal{S}_t|K} \sigma^2 + \frac{\eta_t}{2} \mathcal{E}_t^{\mathrm{M}}, \label{eq:one-round-mimic-copy}
\end{align}
where we apply the result in Lemma \ref{lem:phi_t_2}.
To proceed, we add $\frac{\rho_t \eta_{t}}{2\nu} \mathcal{E}_t^{\mathrm{M}}$ at both sides of \eqref{eq:one-round-mimic-copy}, apply the results in Lemma \ref{lem:E_t_2}, and sum up the results over $t=0,1,\dots,T-1$ as follows:
\begin{align}
    & \sum_{t=0}^{T-1} \mathbb{E} [f(\mathbf{w}_{t+1})] + \sum_{t=0}^{T-1} \left(\frac{\eta_t}{2} - \frac{\eta_t^2 L}{2} \right) \left\| \mathbb{E}[\mathbf{v}_t] \right\|^2 + \sum_{t=0}^{T-1} \frac{\rho_t\eta_{t}}{2\nu} \mathcal{E}_t^{\mathrm{M}} \nonumber \\
    \leq & \sum_{t=0}^{T-1} \mathbb{E} [f(\mathbf{w}_{t})] - \sum_{t=0}^{T-1} \frac{\eta_t}{2} \mathbb{E} \left[ \left\| \nabla f(\mathbf{w}_t) \right\|^2 \right]+ \sum_{t=0}^{T-1} \frac{\eta_t^2 L}{2|\mathcal{S}_t|K} \sigma^2 \nonumber \\
    + & \sum_{t=0}^{T-1} \left[ \frac{\eta_{t}}{2} +  \frac{\rho_t (\rho_t-\nu)}{2\nu} \sum_{i\in\mathcal{S}_{t}} \sum_{s=1}^{\Tilde{\tau}(t)} \frac{\eta_{t+s}}{|\mathcal{S}_{t+s}|} \mathbf{1}\{ \tau(t+s,i)=s \} \right] \mathcal{E}_t^{\mathrm{M}} \nonumber \\
    + & \sum_{t=0}^{T-1} \frac{\rho_t(\rho_t-\nu)}{\nu(\rho_t-1-\nu)}  \frac{\eta_t \varphi_K}{|\mathcal{S}_t|K} \sum_{i\in\mathcal{S}_t}
    \tau(t,i) \sum_{s=1}^{\tau(t,i)} \frac{\eta_{t-s}^2 \sigma^2}{|\mathcal{S}_{t-s}|} \nonumber \\
    + & \sum_{t=0}^{T-1} \frac{\rho_t(\rho_t-\nu)}{\nu(\rho_t-1-\nu)} \frac{\eta_t \varphi_K}{|\mathcal{S}_t|} \sum_{i\in\mathcal{S}_t} \tau(t,i) \sum_{s=1}^{\tau(t,i)}\eta_{t-s}^2 \left\| \mathbb{E}[\mathbf{v}_{t-s}] \right\|^2. \label{eq:help-1-copy}
\end{align}
To verity the inequality in \eqref{eq:thm-mimic}, we further utilize the property of $\rho_t > 1$.
Please refer to Appendix \ref{proof:thm:mimic} for the detailed proof.
\end{proof}

Before presenting the convergence result of MimiC, we show the existence of the global learning rates specified in Theorem \ref{thm:mimic} via the following lemma.
\begin{lemma}\label{lem:lr}
    There exists a set of global learning rates $\{\eta_t\}_{t=0}^{T-1}$ that satisfies the conditions in Theorem \ref{thm:mimic} (i.e., \eqref{eq:lr-1} and \eqref{eq:lr-2}), and $\lim_{T\rightarrow \infty} \sum_{t=0}^{T-1} \eta_t = \infty$, $\lim_{T\rightarrow \infty} \sum_{t=0}^{T-1} \eta_t^2 < \infty$.
\end{lemma}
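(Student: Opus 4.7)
The plan is to exhibit an explicit schedule $\{\eta_t\}_{t=0}^{T-1}$ (depending on $T$) and mechanically verify each of the three requirements. A natural candidate is
\[
\eta_t \;=\; \frac{c\,|\mathcal{S}_t|}{\sqrt{T+t}},\qquad t=0,1,\dots,T-1,
\]
where $c>0$ is a small constant depending only on $L$, $\varphi_K$, $\tau_{\max}$, and $N$, to be fixed below. The $|\mathcal{S}_t|$ prefactor is deliberately chosen so that the client-count ratio cancels in $\rho_t$, giving
\[
\rho_t \;=\; \frac{\eta_t|\mathcal{S}_{t+1}|}{\eta_{t+1}|\mathcal{S}_t|} \;=\; \sqrt{\frac{T+t+1}{T+t}} \;>\; 1,
\]
so \eqref{eq:lr-1} holds automatically for every realization of $\{\mathcal{S}_t\}$. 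Since $\rho_t-1$ is decreasing in $t$, one has $\inf_{t\in[T]}(\rho_t-1) = \rho_{T-1}-1 = \Theta(1/T)$, and I would set $\nu:=\tfrac{1}{2}(\rho_{T-1}-1)$, which guarantees $\alpha_t = \rho_t - 1 - \nu \geq \nu > 0$ for all $t$, as required by Lemma \ref{lem:E_t_2}.

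For \eqref{eq:lr-2}, note that $T+t=\Theta(T)$ on $t\in[0,T-1]$, so the left-hand side $\frac{1}{2\eta_t^2}-\frac{L}{2\eta_t}$ is $\Theta\bigl(T/(c^2|\mathcal{S}_t|^2)\bigr)$ up to an $O(\sqrt{T}/c)$ lower-order correction from the $L/(2\eta_t)$ term, while with $\rho_t=O(1)$ and $\nu=\Theta(1/T)$ the right-hand side is $\Theta\bigl(T\varphi_K \tau_{\max} N/|\mathcal{S}_t|\bigr)$. After cancelling the common factor $T/|\mathcal{S}_t|$, the inequality reduces to $1/(c^2|\mathcal{S}_t|)\gtrsim \varphi_K \tau_{\max} N$, which holds uniformly in the worst case $|\mathcal{S}_t|\leq N$ provided $c^2\leq (C\varphi_K \tau_{\max} N^2)^{-1}$ for a suitable absolute constant $C$ depending on $L$ (and for all $T$ large enough that the lower-order correction is absorbed). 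The Robbins--Monro estimates are then immediate:
\[
\sum_{t=0}^{T-1}\eta_t \;\geq\; c\!\!\sum_{s=T}^{2T-1}\!\!s^{-1/2} \;=\; \Theta(\sqrt{T}) \;\longrightarrow\; \infty,
\]
\[
\sum_{t=0}^{T-1}\eta_t^2 \;\leq\; c^2 N^2\!\!\sum_{s=T}^{2T-1}\!\!s^{-1} \;\leq\; c^2 N^2 \ln 2,
\]
bounded uniformly in $T$.

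The main obstacle, and the reason a schedule of this very particular form is forced rather than a purely constant or a polynomially $t$-decaying one, is a tight coupling between \eqref{eq:lr-1} and \eqref{eq:lr-2}. Condition \eqref{eq:lr-1} demands that $\eta_t/|\mathcal{S}_t|$ be strictly decreasing; if one insisted that $\inf_t(\rho_t-1)$ be a fixed positive constant, $\eta_t$ would decay geometrically and $\sum_t\eta_t$ would be finite, contradicting the divergence requirement. The resolution is to allow $\inf_t(\rho_t-1)$ to shrink as $\Theta(1/T)$ so that $\nu=\Theta(1/T)$; this in turn forces $\eta_t=\Theta(1/\sqrt{T})$ through \eqref{eq:lr-2}, so that the $1/\nu$ blow-up on its RHS is matched by the $1/\eta_t^2$ growth on its LHS. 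The prescribed factor $|\mathcal{S}_t|/\sqrt{T+t}$ realizes this balance exactly, while keeping $\sum_t\eta_t=\Theta(\sqrt{T})$ divergent.
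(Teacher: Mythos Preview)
Your approach is correct and shares with the paper the essential trick of placing $|\mathcal{S}_t|$ in the numerator of $\eta_t$ so that $\rho_t$ becomes deterministic and strictly greater than $1$ regardless of the dropout pattern. The paper, however, constructs the simpler horizon-independent schedule $\eta_t=c|\mathcal{S}_t|/(t+\beta)$, giving $\rho_t=(t+\beta+1)/(t+\beta)$, and then verifies \eqref{eq:lr-2} by rewriting it as a quadratic inequality $A_tc^2+B_tc-C_t\le 0$ in $c$ and observing that its positive root is strictly positive for each $t$. Your route instead takes the $T$-dependent schedule $\eta_t=c|\mathcal{S}_t|/\sqrt{T+t}$, explicitly fixes $\nu=\tfrac12(\rho_{T-1}-1)=\Theta(1/T)$, and balances both sides of \eqref{eq:lr-2} at order $\Theta(T)$. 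What your version buys is an explicit treatment of the tension you point out in your final paragraph: since $\sum_t\eta_t\to\infty$ forbids $\rho_t-1$ from being bounded below by a constant, $\nu$ must vanish with $T$, and then \eqref{eq:lr-2} drives $\eta_t\lesssim\sqrt{\nu}$ uniformly in $t$; the paper's write-up leaves this $\nu$-dependence implicit. The cost is that your schedule and your choice of $\nu$ depend on $T$, though this is immaterial since the lemma and its downstream corollaries are used only asymptotically as $T\to\infty$.
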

\begin{proof}[Proof Sketch]
    The proof can be obtained by constructing $\eta_{t} = \frac{c|\mathcal{S}_{t}|}{t+\beta}$, where $c$ and $\beta$ are positive constants. We show that there exists $c>0$ such that all the assumptions of $\{\eta_{t}\}_{t=0}^{T-1}$ can be simultaneously satisfied.
    Please refer to Appendix \ref{proof:lr} for the detailed proof.
\end{proof}

With the results in Theorem \ref{thm:mimic} and Lemma \ref{lem:lr}, we show the convergence of MimiC in the following corollary, and the proof is deferred to Appendix \ref{proof:Corollary1}.

\begin{corollary}\label{corollary1}
If the learning rates satisfy $\eta_L = \mathcal{O} \left(\frac{1}{L\sqrt{T}} \right)$, the conditions in Theorem \ref{thm:mimic}, and $\lim_{T\rightarrow \infty} \sum_{t=0}^{T-1} \eta_t = \infty$, $\lim_{T\rightarrow \infty} \sum_{t=0}^{T-1} \eta_t^2 < \infty$, the RHS of \eqref{eq:thm-mimic} converges to zero as $T\rightarrow \infty$, i.e., MimiC converges to a stationary point of the global loss function.
\end{corollary}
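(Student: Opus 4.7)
The plan is to show that each of the three principal terms on the RHS of \eqref{eq:thm-mimic} vanishes as $T\to\infty$. Denoting $\Xi_T \triangleq \sum_{t=0}^{T-1} \eta_t$, I would decompose the RHS as $R_T^{(1)} + R_T^{(2)} + R_T^{(3)}$, where $R_T^{(1)} = 2\Delta/\Xi_T$ is the initial-error term, $R_T^{(2)}$ collects the $\eta_t^2 L$ contribution from mini-batch sampling, and $R_T^{(3)}$ contains the $\rho_t$-dependent correction-staleness contribution that involves $\eta_t\eta_{t+1}^2$. The hypothesis $\Xi_T\to\infty$ immediately yields $R_T^{(1)}\to 0$.

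For $R_T^{(2)}$, I would use $|\mathcal{S}_t|\ge 1$ to majorize the summand by $\eta_t^2 L/K$, so the numerator is at most $(\sigma^2 L/K)\sum_{t=0}^{T-1}\eta_t^2$, which is finite by the second learning-rate assumption. Dividing by the diverging $\Xi_T$ then gives $R_T^{(2)}\to 0$.

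The main obstacle is $R_T^{(3)}$, because the prefactor $\rho_t(\rho_t-\nu)/[\nu(\rho_t-1-\nu)]$ could potentially blow up if $\rho_t$ approaches $1+\nu$ too quickly. To handle this, I would invoke the explicit learning-rate construction $\eta_t = c|\mathcal{S}_t|/(t+\beta)$ from the proof of Lemma \ref{lem:lr}, with $c,\beta,\nu$ jointly tuned so that the conditions \eqref{eq:lr-1} and \eqref{eq:lr-2} of Theorem \ref{thm:mimic} hold. Under such a schedule, $\rho_t$ is uniformly bounded above while $\rho_t-1-\nu$ is uniformly bounded below by a positive constant, yielding a uniform upper bound $C_\rho$ on the problematic prefactor. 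The remaining factor $\eta_t\eta_{t+1}^2/[|\mathcal{S}_t||\mathcal{S}_{t+1}|]$ is bounded by $\eta_0\,\eta_{t+1}^2$ (using $\eta_t\le\eta_0$ together with $|\mathcal{S}_t|,|\mathcal{S}_{t+1}|\ge 1$), so the numerator of $R_T^{(3)}$ is at most a constant multiple of $\sum_t \eta_{t+1}^2<\infty$. Dividing by $\Xi_T\to\infty$ then forces $R_T^{(3)}\to 0$.

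Combining the three vanishing pieces with \eqref{eq:thm-mimic} yields $\frac{1}{\Xi_T}\sum_{t=0}^{T-1}\eta_t\,\mathbb{E}[\|\nabla f(\mathbf{w}_t)\|^2]\to 0$. Since this is a non-negative convex combination with positive weights $\eta_t/\Xi_T$, it dominates $\min_{0\le t\le T-1}\mathbb{E}[\|\nabla f(\mathbf{w}_t)\|^2]$, which therefore tends to zero. This is precisely the claim that MimiC converges to a stationary point of the global loss function.
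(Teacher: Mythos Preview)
Your overall strategy---splitting the RHS of \eqref{eq:thm-mimic} into the initial-error piece, the $\eta_t^2 L$ piece, and the $\rho_t$-dependent $\eta_t\eta_{t+1}^2$ piece, and showing each vanishes---is in the same spirit as the paper's argument, and your handling of $R_T^{(1)}$ and $R_T^{(2)}$ is fine. The gap is in $R_T^{(3)}$: you assert that under the schedule $\eta_t = c|\mathcal{S}_t|/(t+\beta)$ from Lemma~\ref{lem:lr}, the quantity $\rho_t-1-\nu$ is uniformly bounded below by a positive constant. But that schedule gives $\rho_t = (t+\beta+1)/(t+\beta) = 1 + 1/(t+\beta)$, hence $\rho_t-1-\nu = 1/(t+\beta)-\nu \to -\nu$; for any fixed $\nu>0$ this is eventually negative and is certainly not bounded below by a positive constant. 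Your claimed uniform bound $C_\rho$ on $\rho_t(\rho_t-\nu)/[\nu(\rho_t-1-\nu)]$ therefore does not follow from the construction you invoke.

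For comparison, the paper's proof in Appendix~\ref{proof:Corollary1} is considerably terser on this point: it first records that $\varphi_K$ stays bounded because $\eta_L = \mathcal{O}(1/(L\sqrt{T}))$ drives $\eta_L^2 L^2 \to 0$ (an observation you omit), and then directly asserts that the whole bracketed sum is $\mathcal{O}\bigl(\Xi_T^{-1}\sum_t \eta_t^2\bigr)$ without separately analyzing the $\rho_t$-prefactor. You have correctly identified this prefactor as the delicate ingredient; the problem is only that the specific lower bound you claim is false for the schedule you cite. To repair your argument you would need either a different schedule satisfying the Theorem~\ref{thm:mimic} conditions for which $\rho_t-1-\nu$ genuinely stays bounded away from zero, or a sharper estimate in which any growth of $1/(\rho_t-1-\nu)$ is absorbed by the extra factor $\eta_{t+1}$ present in $\eta_t\eta_{t+1}^2$ relative to $\eta_t^2$.
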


The above results show the superiority of MimiC over FedAvg in the presence of random client dropouts.
By correcting the local updates, MimiC can ensure convergence, while FedAvg may only arrive at a solution around a stationary point of the global loss function.
While the convergence of MimiC does not require a certain regime of $\tau_{\text{max}}$, a large value does have a negative impact on the convergence behavior. In particular, the learning rate needs to be reduced to ensure model convergence, and the convergence time may be prolonged significantly.

In some cases, clients become active with certain probabilities in each training iteration instead of having bounded dropout iterations. 
The active probability of a client may be determined by various factors, e.g., the communication link quality and local computation workloads.
For example, clients such as Unmanned Aerial Vehicles (UAVs) can fly away from the server at any training iteration \cite{nguyen2021federated}, and the change of path loss results in different active probabilities.
In the following, we show the convergence of MimiC under probabilistic client dropout as specified in Assumption \ref{ass:client-prob}.

\begin{assumption}\label{ass:client-prob}
    Client $i\in\mathcal{N}$ becomes active with probability $p_t^i \in(0,1]$ in the $t$-th training iteration.
\end{assumption}

\begin{theorem}\label{high-prob}
With Assumptions \ref{ass-1}-\ref{ass-2} and \ref{ass:client-prob}, if learning rates $\{\eta_t\}_{t=0}^{T}$ satisfy $\rho_t > 1$ and $\frac{1}{\eta_{t}} \left( \frac{1}{2\eta_t} - \frac{L}{2} \right) \geq \frac{(\rho_t-\nu)\varphi_K \tilde{\tau}(t) N}{2\nu|\mathcal{S}_{t}|}$, then for arbitrary $\delta \in (0,1)$, we have:
\begin{align}
    & \frac{1}{\sum_{t=0}^{T-1} \eta_t} \sum_{t=0}^{T-1} \eta_t \mathbb{E} [\left\| \nabla f(\mathbf{w}_t) \right\|^2] \nonumber \\
    \leq & \frac{ 2 \Delta }{\sum_{t=0}^{T-1} \eta_t}
    + \frac{2}{\sum_{t=0}^{T-1} \eta_t} \sum_{t=0}^{T-1} \frac{\sigma^2}{|\mathcal{S}_t|K} \left[ \frac{\eta_t^2 L}{2} \right. \nonumber \\
    + & \left.\frac{\rho_t(\rho_t-\nu) \eta_t \varphi_K}{\nu(\rho_t-1-\nu) (\min_{i}p_t^i)^2} \left( \log \left( \frac{Nt}{\delta} \right)+1 \right)^2 \frac{\eta_{t+1}^2}{|\mathcal{S}_{t+1}|} \right], 
    \label{eq:thm-mimic-2}
\end{align}
with probability at least $1 - \delta$, where $\tilde{\tau}(t) \triangleq \max_{i\in \mathcal{S}_t} \tau(t,i)$.
\end{theorem}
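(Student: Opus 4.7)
\textbf{Proof proposal for Theorem \ref{high-prob}.} The plan is to lift the deterministic-staleness argument behind Theorem \ref{thm:mimic} to the probabilistic setting, by showing that, with probability at least $1-\delta$, the maximum staleness $\tilde\tau(t')$ is uniformly controlled by a logarithmic surrogate $B_{t'} := (\log(Nt'/\delta)+1)/\min_i p_{t'}^i$ for all $t' \leq T$. On this favorable event, the proof of Theorem \ref{thm:mimic} applies essentially verbatim with $\tau_{\max}$ replaced by $B_t$, and the extra factor $(\log(Nt/\delta)+1)^2/(\min_i p_t^i)^2$ in \eqref{eq:thm-mimic-2} is precisely $B_t^2$; off the good event, the $\delta$ failure budget absorbs the rest.

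First I would establish the tail estimate for individual stalenesses. For any client $i$ and iteration $t'$, the event $\{\tau(t',i) > s\}$ is exactly that $i$ was inactive in each of the iterations $t'-s,\ldots,t'-1$. Under Assumption \ref{ass:client-prob} and the standard (implicit) independence of dropouts across iterations, this probability is at most $\prod_{j=1}^{s}(1-p_{t'-j}^i) \leq \exp\bigl(-s\,\min_{i,t''\leq t'} p_{t''}^i\bigr)$. Setting the right-hand side equal to $\delta/(Nt)$ and solving for $s$ yields $\tau(t',i) \leq (\log(Nt/\delta)+1)/\min_i p_{t'}^i$ with failure probability at most $\delta/(Nt)$. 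A union bound over all $N$ clients and the $t \leq T$ iterations then gives $\tilde\tau(t') \leq B_{t'}$ for all $t' \leq t$ with probability at least $1-\delta$.

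Next, on this favorable event, I would revisit the proof of Theorem \ref{thm:mimic}. The only places where the bound $\tau_{\max}$ is used are in Lemma \ref{lem:E_t_2} (through the factor $\tau(t,i)$ and the inner sum of length $\tau(t,i)$) and in the learning-rate condition \eqref{eq:lr-2}. Since the theorem's hypothesis $\frac{1}{\eta_t}\bigl(\frac{1}{2\eta_t}-\frac{L}{2}\bigr) \geq \frac{(\rho_t-\nu)\varphi_K\tilde\tau(t)N}{2\nu|\mathcal{S}_t|}$ already involves $\tilde\tau(t)$, it continues to hold on the good event provided the learning rates are picked to accommodate $B_t$ in place of $\tilde\tau(t)$. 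Substituting $B_t$ for $\tau_{\max}$ in the decomposition \eqref{eq:decomposition} and in the telescoped inequality \eqref{eq:help-1-copy}, and then repeating the same cancellation between the $\frac{\rho_t\eta_t}{2\nu}\mathcal{E}_t^{\mathrm{M}}$ term and the double-sum $\sum_{s=1}^{\tilde\tau(t+s)}\cdots\mathcal{E}_t^{\mathrm{M}}$ (which is where the condition $\rho_t>1$ is used), produces exactly the bound \eqref{eq:thm-mimic-2} with $B_t^2 = (\log(Nt/\delta)+1)^2/(\min_i p_t^i)^2$ in place of $\tau_{\max}^2$.

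The main obstacle I foresee is bookkeeping around the conditioning: Lemma \ref{lem:E_t_2} is stated in unconditional expectation, but the high-probability bound on $\tilde\tau(t')$ is a statement about the random participation pattern. The clean way to handle this is to work pathwise on the good event $\mathcal{A}_\delta := \{\tilde\tau(t') \leq B_{t'} \text{ for all } t' \leq T\}$; the inequalities in Lemmas \ref{lem:E_t_2} and \ref{lem:phi_t_2} are derived conditionally on the participation realization (the randomness is only over the mini-batch sampling), so they hold on $\mathcal{A}_\delta$ with $\tau(t,i)$ automatically dominated by $B_t$. A secondary subtlety is that the tail bound relies on inter-iteration independence of dropouts implicit in Assumption \ref{ass:client-prob}; if one only assumes per-iteration independence across clients but allows correlation over time, the product-form tail bound must be replaced by a chain-dependence argument, but the logarithmic order of $B_t$ is unchanged.
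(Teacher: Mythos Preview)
Your proposal is correct and follows essentially the same route as the paper: first obtain the bound in terms of $\tilde\tau(t)$ by re-running the proof of Theorem \ref{thm:mimic} (the paper stops at step (b) of \eqref{eq:help-2} and divides through), then replace $\tilde\tau(t)$ by the logarithmic surrogate via a high-probability tail estimate and a union bound. The only cosmetic difference is that you derive the tail bound on $\tilde\tau(t)$ from scratch, whereas the paper simply invokes the analogous result from \cite{mifa} (their Theorem~5.2); your treatment of the conditioning on the participation pattern is also more explicit than the paper's, which leaves that bookkeeping implicit.
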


\begin{proof}[Proof Sketch]
We obtain the result with a similar proof of Theorem \ref{thm:mimic}, where the gradient estimation error $\mathcal{E}_t^{\mathrm{M}}$ is expressed in terms of $\Tilde{\tau}(t)$.
We then show $\Tilde{\tau}(t)$ can be upper bounded using $\{p_i^t\}$'s with a high probability.
Please refer to Appendix \ref{proof:high-prob} for the detailed proof.
\end{proof}

The following corollary shows the convergence result of MimiC under probabilistic client dropout, and the proof is deferred to Appendix \ref{proof:Corollary2}.

\begin{corollary} \label{Corollary:high-prob}
If the learning rates satisfy $\eta_L = \mathcal{O} \left(\frac{1}{L\sqrt{T}} \right)$, the conditions in Theorem \ref{thm:mimic}, and $\lim_{T\rightarrow \infty} \sum_{t=0}^{T-1} \eta_t = \infty$, $\lim_{T\rightarrow \infty} \sum_{t=0}^{T-1} \eta_t^2 < \infty$, the RHS of \eqref{eq:thm-mimic-2} converges to zero as $T\rightarrow \infty$.
This guarantees that MimiC converges to a stationary point of the global loss function with a high probability under probabilistic client dropout.
\end{corollary}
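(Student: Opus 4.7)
The plan is to mimic the proof of Corollary~\ref{corollary1} by handling the three summands on the right-hand side of \eqref{eq:thm-mimic-2} separately and showing that each vanishes as $T\to\infty$ on the event of probability at least $1-\delta$ furnished by Theorem~\ref{high-prob}. Since Theorem~\ref{high-prob} was obtained by running the argument of Theorem~\ref{thm:mimic} on the high-probability event on which $\tilde{\tau}(t)$ is controlled via the $\{p_t^i\}$, the only substantive novelty relative to the deterministic case is the additional factor $(\log(Nt/\delta)+1)^2$ in the third summand. Throughout, I will rely on the explicit learning-rate schedule $\eta_t = c|\mathcal{S}_t|/(t+\beta)$ from the proof of Lemma~\ref{lem:lr}, together with the hypotheses $\sum_{t} \eta_t = \infty$ and $\sum_{t} \eta_t^2 < \infty$.

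The first term $2\Delta/\sum_{t=0}^{T-1}\eta_t$ vanishes immediately because $\sum_t \eta_t \to \infty$. For the second term, since $|\mathcal{S}_t|\geq 1$, I can bound $\sum_{t=0}^{T-1} \eta_t^2 L\sigma^2/(2|\mathcal{S}_t|K) \leq (L\sigma^2/(2K))\sum_{t=0}^{T-1}\eta_t^2$, which is bounded uniformly in $T$; dividing by the diverging $\sum_t \eta_t$ yields zero. This reproduces the corresponding step in Corollary~\ref{corollary1}.

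For the third summand, I will first absorb the coefficient $\rho_t(\rho_t-\nu)\varphi_K/[\nu(\rho_t-1-\nu)]$ and $1/(\min_i p_t^i)^2$ (bounded by $1/p_{\min}^2$ with $p_{\min}=\inf_{t,i} p_t^i>0$) into a generic constant $C$. The remaining time-dependent factor in each summand is proportional to $\eta_t \eta_{t+1}^2 (\log(Nt/\delta)+1)^2/(|\mathcal{S}_t||\mathcal{S}_{t+1}|)$. Substituting $\eta_t = c|\mathcal{S}_t|/(t+\beta)$ gives $\eta_t\eta_{t+1}^2/(|\mathcal{S}_t||\mathcal{S}_{t+1}|) = c^3|\mathcal{S}_{t+1}|/[(t+\beta)(t+1+\beta)^2]$, so the summand decays at rate $\mathcal{O}(N(\log t)^2/t^3)$. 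Since $\sum_{t\geq 1} (\log t)^2/t^3 < \infty$, the partial sums are uniformly bounded in $T$, and dividing by $\sum_t \eta_t\to\infty$ drives this term to zero as well.

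The principal technical obstacle is verifying that the coefficients $\rho_t$, $(\rho_t-1-\nu)^{-1}$, and $(\min_i p_t^i)^{-1}$ remain controlled along the chosen learning-rate sequence, so that the $\mathcal{O}(t^{-3})$ decay inherited from $\eta_t\eta_{t+1}^2/(|\mathcal{S}_t||\mathcal{S}_{t+1}|)$ strictly dominates the slowly growing $(\log(Nt/\delta)+1)^2$ factor. Once this boundedness is in hand, the polylogarithm is harmless because its growth is absorbed by any polynomial decay, and one concludes that the weighted average of $\mathbb{E}[\|\nabla f(\mathbf{w}_t)\|^2]$ tends to zero on the $(1-\delta)$-probability event, i.e., MimiC reaches a stationary point of the global loss function with high probability under probabilistic client dropout.
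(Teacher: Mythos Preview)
Your proposal follows essentially the same route as the paper's proof: split the right-hand side of \eqref{eq:thm-mimic-2} into the initial-error term, the $\eta_t^2$ term, and the term carrying $\eta_t\eta_{t+1}^2(\log(Nt/\delta)+1)^2$, then use $\sum_t\eta_t=\infty$, $\sum_t\eta_t^2<\infty$, and the explicit schedule $\eta_t=c|\mathcal{S}_t|/(t+\beta)$ from Lemma~\ref{lem:lr} to kill each piece. The only cosmetic difference is that the paper crudely bounds $(\log(t/\delta))^2\le Ct$ and reduces the last sum to $\sum_t t/(t+\beta)^3$, whereas you keep the polylog and invoke $\sum_t (\log t)^2/t^3<\infty$ directly; both yield the same conclusion. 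Your explicit flagging of the boundedness of $\rho_t(\rho_t-\nu)/[\nu(\rho_t-1-\nu)]$, $\varphi_K$, and $(\min_i p_t^i)^{-2}$ is a point the paper leaves implicit in its $\mathcal{O}(\cdot)$ notation, so your write-up is, if anything, slightly more careful.
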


The conventional FL algorithms are hard to ensure convergence in the critical scenario with arbitrary client dropouts.
To preserve model convergence in this scenario, existing methods \cite{mifa,yan2020intermittent} utilize the memorized latest updates as a substitute for those dropout clients, which suffers from the staleness of local updates.
In contrast, our proposed method MimiC corrects the local updates by mimicking an imaginary central update.
By doing so, we can achieve substantial training performance improvement as verified by the simulation results in Section \ref{sec:evaluation}, in additional to the theoretical convergence guarantee.

\section{Performance Evaluation}\label{sec:evaluation}

\subsection{Settings}
We simulate a cross-device FL system with one central server and $N=30$ clients.
To comprehensively demonstrate the effects of client dropouts, we simulate three scenarios with different client availability patterns, including:
\begin{enumerate}
    \item Clients have bounded consecutive dropout iterations \cite{yan2020intermittent}: Each client becomes active every $\tau_{\text{max}}(i)$ iterations in a round-robin fashion, where $\tau_{\text{max}}(i)$ is uniformly sampled from $[0, \tau_{\text{max}}]$.

    \item Clients have static active probabilities \cite{mifa}: All clients have the same and fixed active probability. 
    In iteration $t$, a random number $r_t^i \in [0, 1]$ is uniformly generated for the $i$-th client. The client is active if $r_t^i$ is no larger than its active probability; otherwise it drops out of training in this iteration.
    
    \item Clients have time-varying active probabilities \cite{bonawitz2019towards,huang2022contextfl}:
    In iteration $t$, each client randomly generates a weight $a_{t}^{i} \in [1, 10]$, of which, a large value implies higher likelihood to become active. We determine $P\times N$ active clients in simulations by sampling clients without replacement using probability vector $[a_{t}^{1},\cdots,a_{t}^{N}] \slash \sum_{j\in\mathcal{N}} a_{t}^{j}$, where $P\in\left[0,1\right]$ is a predefined ratio.
\end{enumerate}

For comparisons, we adopt FedAvg \cite{fedavg}, FedProx \cite{fedprox}, SCAFFOLD \cite{scaffold}, and MIFA \cite{mifa} as baseline algorithms.
In MIFA, the server stores the latest model update for each client and uses it for aggregation in case of client dropout. Other algorithms are benchmark FL training algorithms but not specialized for combating client dropouts. 
It is also worth noting that SCAFFOLD requires each client to upload two updated variables and thus incurs double communication cost in each training iteration compared with other methods.
Hence, we evaluate the accuracy of different algorithms on the test dataset with respect to the same amount of communication costs (i.e., the number of model uploadings) for fair comparisons.
Table \ref{table:compare} summarizes the main characteristics of these algorithms.

\begin{table*}[t]
\caption{Comparisons of different algorithms.}\label{table:compare}
\resizebox{\textwidth}{!}{
\begin{tabular}{|c|c|c|c|c|c|}
\hline
Algorithm            & \begin{tabular}[c]{@{}c@{}}Mitigating \\ Non-IID Issues\end{tabular} & \begin{tabular}[c]{@{}c@{}}Mitigating \\ Client Dropout\end{tabular} & \begin{tabular}[c]{@{}c@{}}Communication \\ Overhead Per Round\end{tabular} & \begin{tabular}[c]{@{}c@{}}Convergence Requirements \\ under Client Dropouts \end{tabular} & \begin{tabular}[c]{@{}c@{}}Potential for \\ Secure Aggregation\end{tabular} \\ \hline
FedAvg \cite{fedavg}    &  \twemoji{multiply} &  \twemoji{multiply}                         & $\mathcal{O}(M)$                           &  Uniform client dropout                    &  \twemoji{check mark}           \\ \hline
FedProx \cite{fedprox}  &  \twemoji{check mark}  &  \twemoji{multiply}                       &  $\mathcal{O}(M)$                          &  Uniform client dropout                         & \twemoji{check mark}       \\ \hline
SCAFFOLD \cite{scaffold} &   \twemoji{check mark} &  \twemoji{multiply}                        & $\mathcal{O}(2M)$                           & Uniform client dropout                          &  \twemoji{check mark}      \\ \hline
MIFA \cite{mifa}     &   \twemoji{check mark}   & \twemoji{check mark}                      &  $\mathcal{O}(M)$                          &  \begin{tabular}[c]{@{}l@{}} 
      $\bullet$ Hessian lipschitz of loss functions; \\
      $\bullet$ Bounded consecutive dropout iterations, or \\
      $\bullet$ Probabilistic client dropout
\end{tabular}       & \twemoji{multiply}      \\ \hline
MimiC (Ours)      &   \twemoji{check mark}  & \twemoji{check mark}                      &       $\mathcal{O}(M)$                     &   \begin{tabular}[c]{@{}l@{}} 
      $\bullet$ Bounded consecutive dropout iterations, or \\
      $\bullet$ Probabilistic client dropout
\end{tabular}                        & \twemoji{check mark}       \\ \hline
\end{tabular}
}
\end{table*}

We evaluate these algorithms on two benchmark image datasets, including the FMNIST \cite{fmnist} and CIFAR-10 \cite{cifar10} datasets.
Both datasets have ten classes of labels.
The training datasets of FMNIST and CIFAR-10 respectively have 60,000 and 50,000 data samples in total, which are assigned to 30 clients in equal size.
To simulate the non-IID setting, we divide the training dataset into 60 shards, each of which includes one random class of data samples, and assign two random shards to each client.
In the FMNIST training task, we adopt a convolutional neural network (CNN) model \cite{li2021silo}, which has two $5\times 5$ convolutional layers followed by $2\times 2$ max pooling layers and two fully connected layers with ReLU activation, while a VGG-11 model \cite{vgg} is adopted in the CIFAR-10 training task.
The detailed experimental setup are summarized in Table \ref{table-setup}.
Under the above setup, FedAvg without client dropout can achieve accuracy of $76.82\%$ and $62.48\%$ after 200 and 400 model uploadings on the FMNIST and CIFAR10 datasets, respectively.

\begin{table}[ht]
\caption{Experimental setup.}
\label{table-setup}
\centering
\begin{tabular}{|c|c|c|}
\hline
Parameter                           & FMNIST & CIFAR-10 \\
\hline
\# of classes/client                 & 2      & 2     \\ \hline
\# of data samples/client            & 2,000   & 1,666   \\ \hline
Batch size                          & 16     & 32     \\ \hline
Local epochs ($K$)                   & 5      & 5     \\ \hline
Local learning rate $\eta_L$ & 0.01   & 0.01    \\ \hline
Learning rate $\eta_t$ & $0.01\times 0.95^{t-1}$   & $0.01\times 0.99^{t-1}$    \\ \hline
\# of trials & 3 & 3 \\ \hline
\end{tabular}
\end{table}

\subsection{Results}

\begin{figure}[ht]
    \centering
    \includegraphics[width=0.7\linewidth]{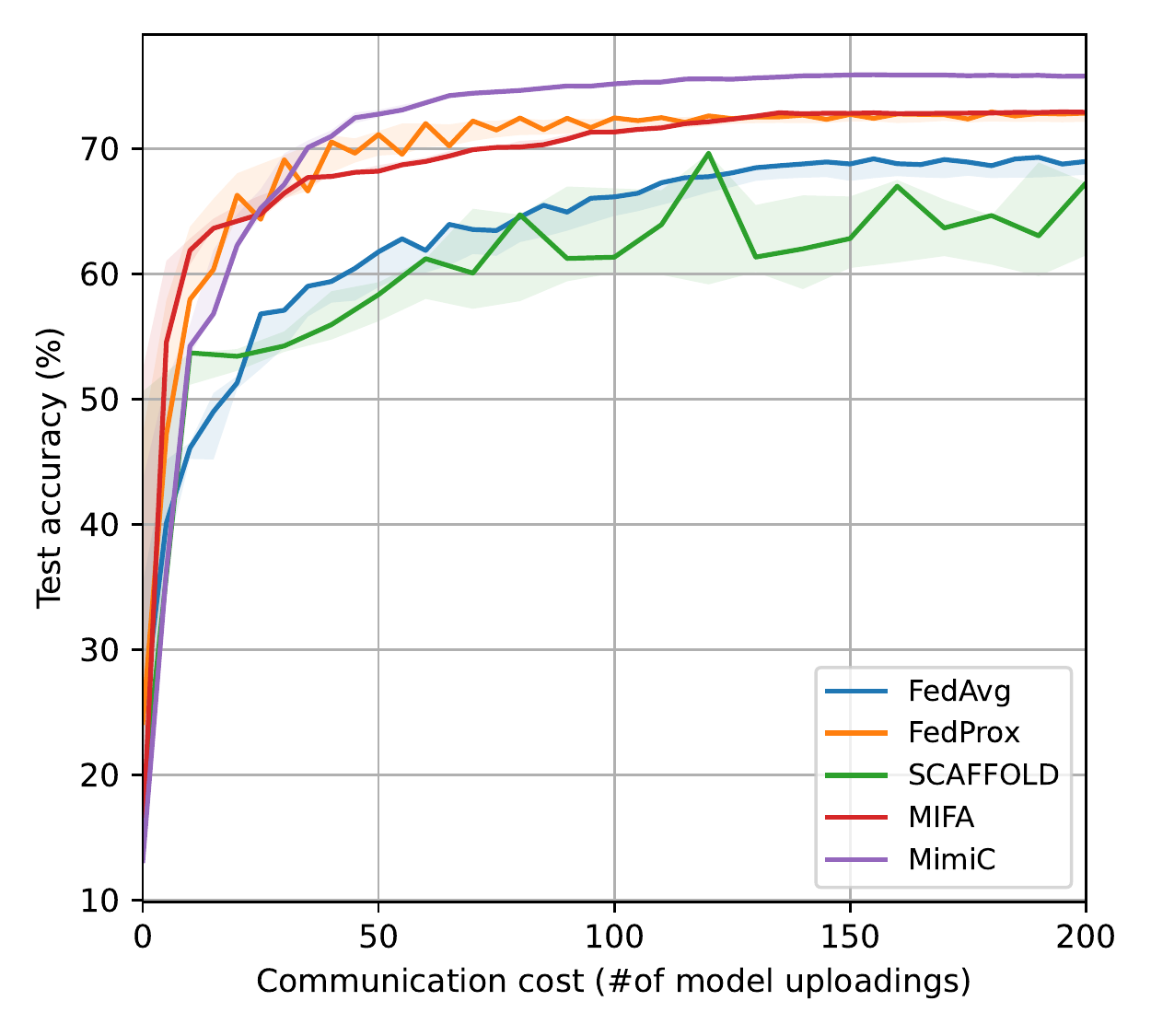}
    \caption{Test accuracy on the FMNIST dataset, where client $i$ becomes active every $\tau_{\text{max}}(i)$ iterations ($\tau_{\text{max}} = 20$).}
    \label{fig:tau-f}
\end{figure}

\begin{figure}[ht] 
    \centering
    \includegraphics[width=0.7\linewidth]{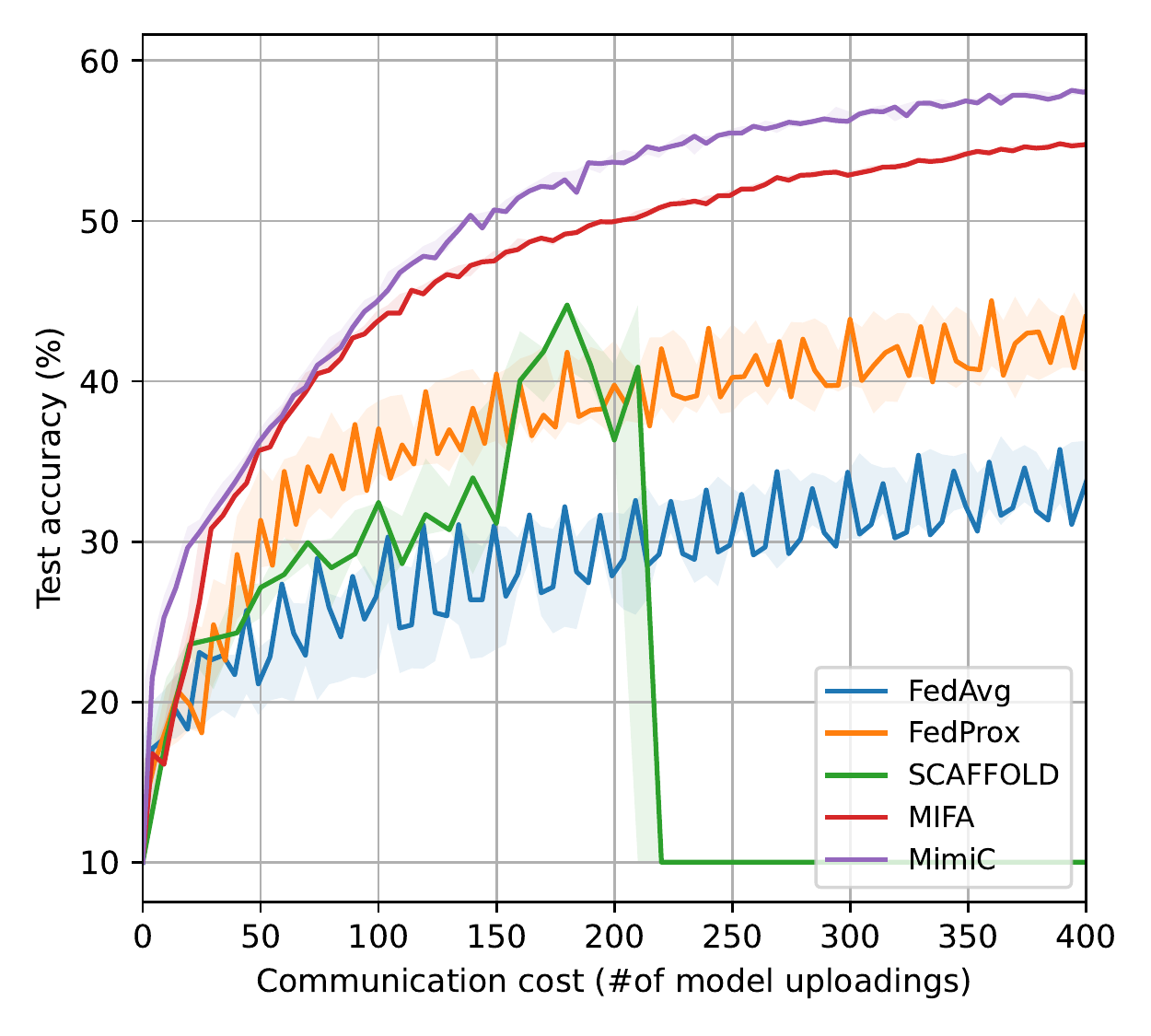}
    \caption{Test accuracy on the CIFAR-10 dataset, where client $i$ becomes active every $\tau_{\text{max}}(i)$ iterations ($\tau_{\text{max}} = 20$).}
    \label{fig:tau-c}
\end{figure}

\begin{figure}[!t]
    \centering
    \includegraphics[width=0.7\linewidth]{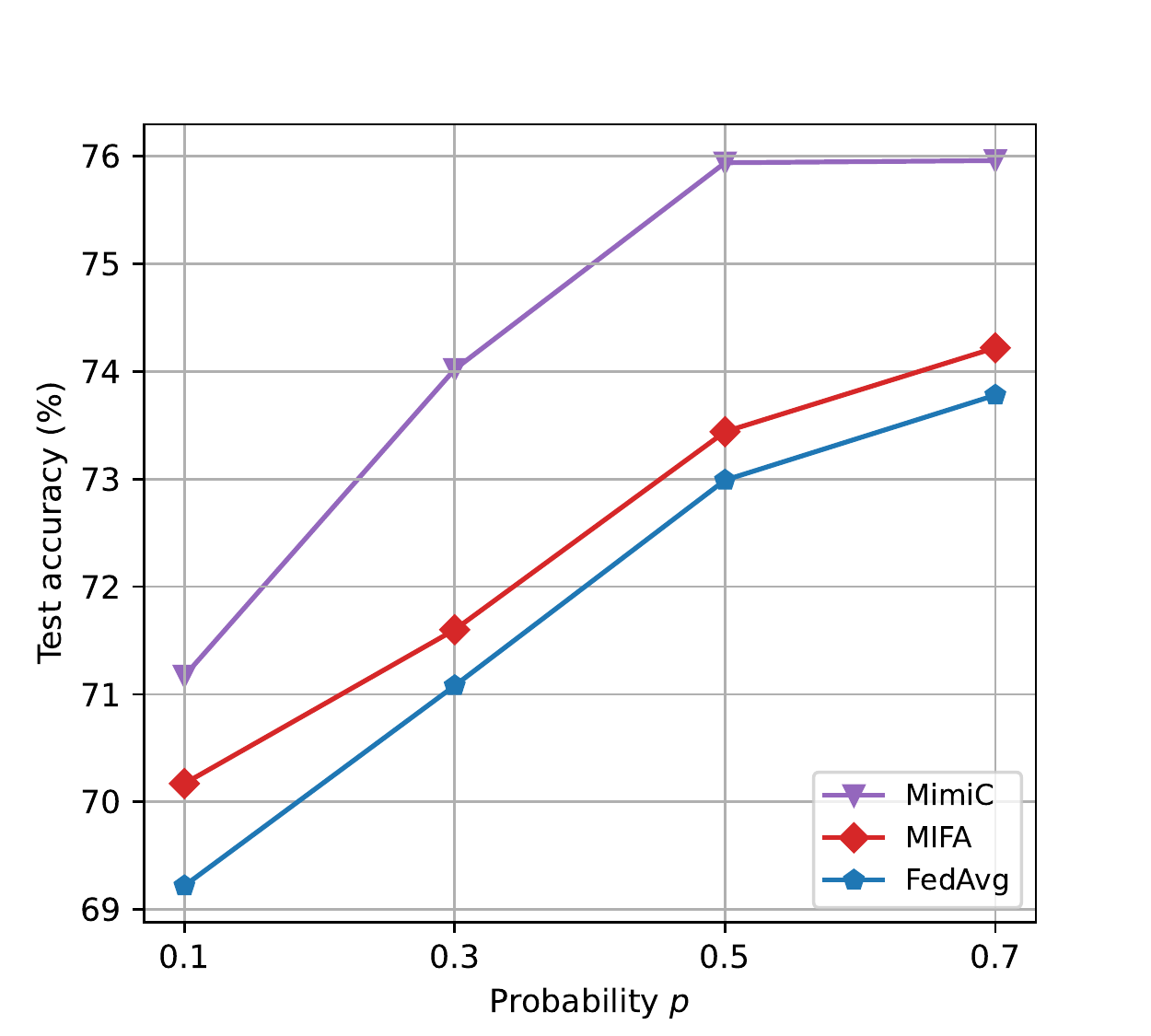}
    \caption{Test accuracy vs. the active probability of different algorithms on the FMNIST dataset.}
    \label{fig:tradeoff}
\end{figure}

We first consider the scenario where clients have bounded consecutive dropout iterations, i.e., $\tau(t,i) \leq \tau_{\text{max}}$.
We assume $\tau_{\text{max}}=20$ and show the test accuracy of different algorithms on the FMNIST and CIFAR-10 datasets in Fig. \ref{fig:tau-f} and Fig. \ref{fig:tau-c}, respectively.
We observe that the test accuracy of FedAvg, FedProx, and SCAFFOLD suffer from different degrees of oscillation over training iterations due to drastic alternation of active clients, which is more obvious in the harder CIFAR-10 training task.
This is because these algorithms lack a dedicated mechanism to deal with random client dropouts.
In contrast, MimiC and MIFA avoid this issue via different client dropout mitigating strategies and thus converge to better global models.
Compared with MIFA, MimiC achieves both improved model accuracy and training efficiency since it effectively corrects the received updates to mimic an imaginary central update.

We also vary $\tau_{\text{max}}$ and show the test accuracy of different algorithms on the FMNIST dataset after 200 model uploadings in Table \ref{table:tau}. It is seen that as the value of $\tau_{\text{max}}$ increases, the test accuracy of all FL algorithms decreases due to more severe client dropouts.
Nevertheless, both MIFA and MimiC suffer from comparatively less performance degradation, again validating their effectiveness in combating client dropouts.
Notably, MimiC consistently outperforms all the baselines with the given communication cost.

\begin{table}[ht]
\centering
\caption{Test accuracy after 200 model uploadings on the FMNIST dataset.}
\label{table:tau}
\begin{tabular}{|c|c|c|c|}
\hline
$\tau_{\text{max}}$ & $20$ & $50$ & $100$ \\ \hline
FedAvg              &   $69.39$   & $68.42$     & $68.21$    \\ \hline
FedProx             & $72.67$     & 67.44     & $64.74$     \\ \hline
SCAFFOLD            & $73.46$     & $72.30$      & $68.7$     \\ \hline
MIFA                & $72.92$     & $72.82$     & $72.20$     \\ \hline
MimiC               & $\mathbf{75.89}$     & $\mathbf{74.86}$     & $\mathbf{74.71}$     \\ \hline
\end{tabular}
\end{table}

Moreover, we evaluate the performance of MimiC, MIFA, and FedAvg on the FMNIST training task by varying the static active probability $p$.
As shown in Fig. \ref{fig:tradeoff}, when clients have larger active probabilities, the test accuracy of the learned model increases as more informative gradients are collected.
However, uploading more gradients means more communication costs, verifying the trade-off between the model accuracy and communication cost.
In all cases, MimiC consistently outperforms the baselines since it mitigates the negative impacts of client dropout via effective model update correction.

Next, we investigate the scenario where clients become active in each iteration with a static probability. 
In particular, the active probability is set as $0.1$ and $0.3$ for the FMNIST and CIFAR-10 training task, as respectively shown in Figs. \ref{fig:ind-f} and \ref{fig:ind-c}.
Compared with the first scenario, the assumption of client availability is much relaxed. However, most FL algorithms still show unstable convergence behavior because of low client availability.
Similar to the observations in Figs. \ref{fig:tau-f} and \ref{fig:tau-c}, both MIFA and MimiC effectively mitigate the negative effects of client dropout, and MimiC converges to a more accurate model with a faster speed.
In addition, we see that SCAFFOLD may fail to converge in the difficult CIFAR-10 training task, as shown in Figs. \ref{fig:tau-c} and \ref{fig:ind-c}, since it suffers from the accumulated gradient estimation error caused by the client dropout.

\begin{figure}[t]
    \centering
    \includegraphics[width=0.7\linewidth]{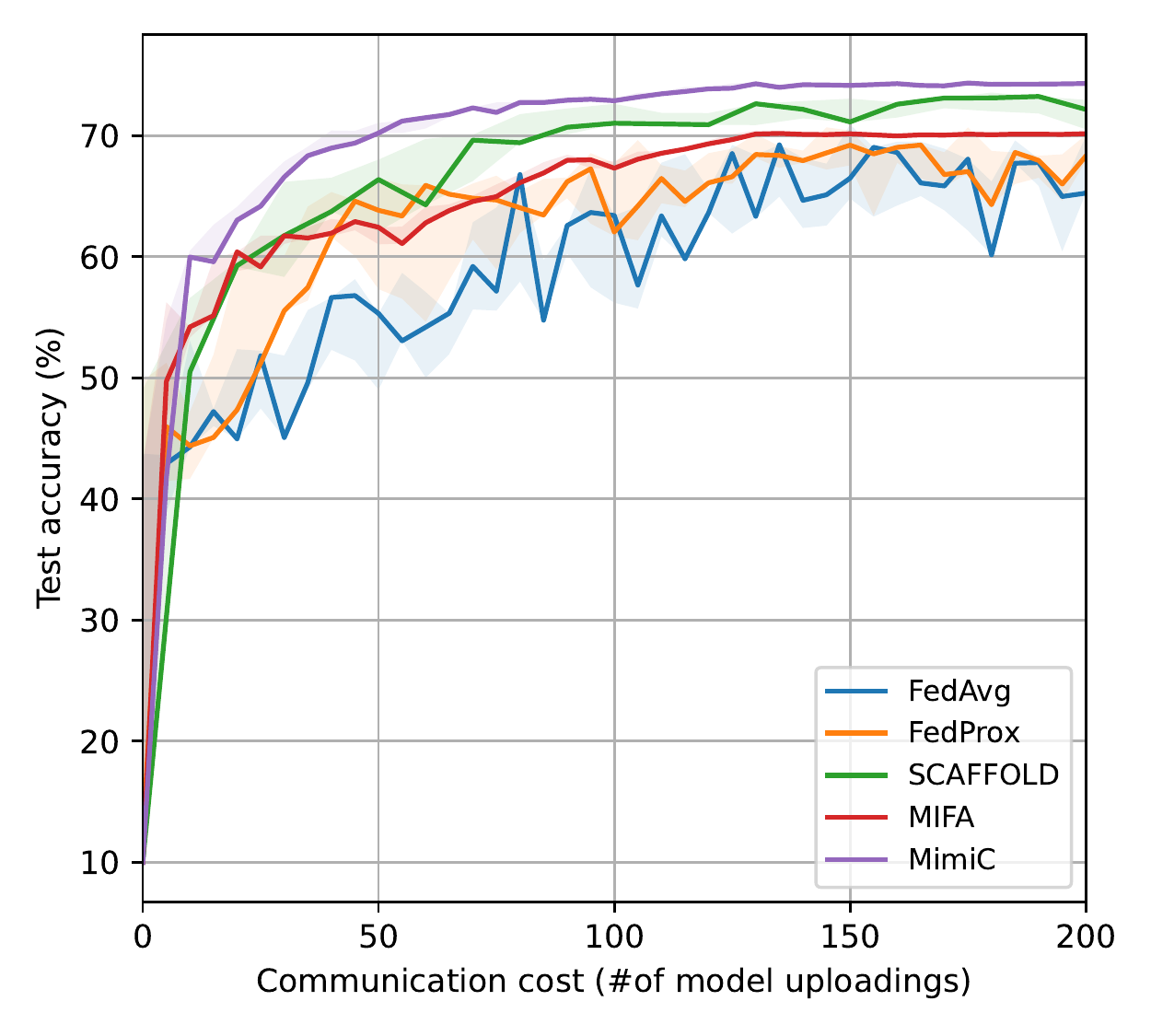}
    \caption{Test accuracy on the FMNIST dataset, where each client becomes active with a fixed probability of 0.1.}
    \label{fig:ind-f}
\end{figure}

\begin{figure}[t]
    \centering
    \includegraphics[width=0.7\linewidth]{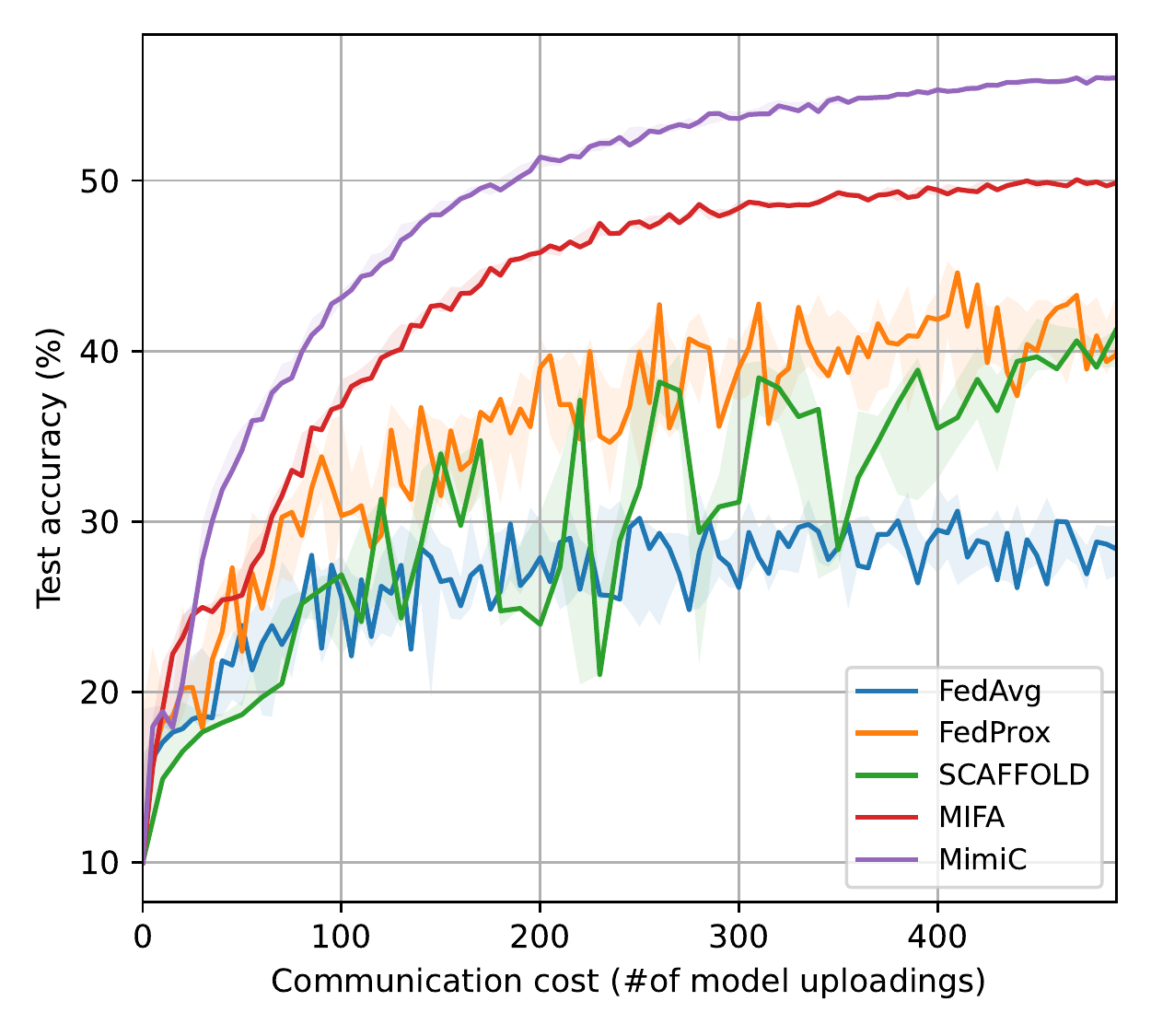}
    \caption{Test accuracy on the CIFAR-10 dataset, where each client becomes active with a fixed probability of 0.3.}
    \label{fig:ind-c}
\end{figure}

In addition, we simulate the scenario where clients have time-varying dropout probabilities, which are randomly generated in each iteration with parameter $P$ denoting the ratio of active clients in each iteration.
Table \ref{table:result} shows the test accuracy on the FMNIST and CIFAR-10 datasets. 
In general, all methods achieve higher test accuracy as the value of $P$ increases, i.e., more clients are active in each iteration.
Similar to the previous scenarios, MimiC consistently outperforms other algorithms with different levels of client availability. 
In particular, when the participation ratio is very low, i.e., $P=10\%$, MimiC achieves noticeable accuracy improvement compared to all baselines.
These results clearly demonstrate the effectiveness of MimiC in combating client dropouts in cross-device FL systems.

\begin{table*}[t]
\caption{Test accuracy after $200$ and $400$ model uploadings on the FMNIST and CIFAR-10 datasets, respectively, where $P\times N$ clients are active in each iteration.}
\label{table:result}
\centering
\begin{tabular}{|c|ccc|ccc|}
\hline
Dataset       & \multicolumn{3}{c|}{FMNIST}                                                                                                 & \multicolumn{3}{c|}{CIFAR-10}                                                                                      \\ \hline
P        & \multicolumn{1}{c|}{$10\%$}                    & \multicolumn{1}{c|}{$30\%$}                    & $50\%$                    & \multicolumn{1}{c|}{$10\%$}                    & \multicolumn{1}{c|}{$30\%$}                    & $50\%$           \\ \hline
FedAvg        & \multicolumn{1}{c|}{$64.71 \pm 2.26$}          & \multicolumn{1}{c|}{$69.22 \pm 6.19$}          & $71.31  \pm 1.09$         & \multicolumn{1}{c|}{$42.03 \pm 8.48$}          & \multicolumn{1}{c|}{$51.63 \pm 6.68$}          & $52.21 \pm 6.49$ \\ \hline
FedProx       & \multicolumn{1}{c|}{$66.91 \pm 7.24$}          & \multicolumn{1}{c|}{$69.43 \pm 5.05$}          & $72.91 \pm 1.00$          & \multicolumn{1}{c|}{$48.12 \pm 10.62$}         & \multicolumn{1}{c|}{$50.08 \pm 1.99$}          & $59.23 \pm 1.56$ \\ \hline
SCAFFOLD      & \multicolumn{1}{c|}{$63.77 \pm 4.66$}          & \multicolumn{1}{c|}{$73.18 \pm 3.74$}          & $71.88 \pm 0.25$          & \multicolumn{1}{c|}{$38.88 \pm 4.21$}          & \multicolumn{1}{c|}{$56.21 \pm 9.22$}          & $53.59 \pm 0.63$ \\ \hline
MIFA          & \multicolumn{1}{c|}{$68.58 \pm 6.45$}          & \multicolumn{1}{c|}{$73.43 \pm 5.61$}          & $73.24 \pm 0.41$          & \multicolumn{1}{c|}{$52.69 \pm 2.15$}          &  \multicolumn{1}{c|}{$55.62 \pm 6.10$}          & $56.14 \pm 3.33$ \\ \hline
MimiC         & \multicolumn{1}{c|}{$\mathbf{72.22 \pm 5.28}$} & \multicolumn{1}{c|}{$\mathbf{74.86 \pm 4.44}$} & $\mathbf{74.24 \pm 1.31}$ & \multicolumn{1}{c|}{$\mathbf{56.21 \pm 4.70}$} & \multicolumn{1}{c|}{$\mathbf{60.07 \pm 2.87}$} & $\mathbf{ 60.59 \pm 1.52}$ \\ \hline
\end{tabular}
\end{table*}

Finally, we study the impact of data heterogeneity on the training performance in Table \ref{table:class}, where we vary the number of classes of data samples on each client to simulate different degrees of data heterogeneity.
We see that more classes of data samples at each client, i.e., a lower level of data heterogeneity, can lead to a faster training speed due to a smaller objective mismatch.
When each client has $8$ or $10$ classes of data samples, different algorithms achieve similar model accuracy. This is because clients have almost IID data and the missing model updates of dropout clients can be easily compensated by those of the active clients.
In all cases, MimiC learns better models than the baselines methods as it alleviates the objective mismatch due to clients dropouts by appropriately correcting the local updates.

\begin{table}[ht]
\centering
\caption{Test accuracy after 400 model uploadings on the CIFAR-10 dataset, where each client becomes active with a fixed probability of 0.1.}
\label{table:class}
\begin{tabular}{|c|c|c|c|c|}
\hline
\# of classes/client & $2$ & $5$ & $8$ & $10$ \\ \hline
FedAvg              & $30.52$   & $77.76$    & $81.59$  & $82.10$   \\ \hline
FedProx             & $36.66$   & $77.74$    & $81.22$  & $81.96$     \\ \hline
SCAFFOLD            & $37.75$   & $78.15$    & $82.00$  & $82.74$    \\ \hline
MIFA                & $50.37$   & $77.0$6    & $81.84$  & $82.04$    \\ \hline
MimiC               & $\mathbf{52.91}$   & $\mathbf{78.44}$   & $\mathbf{82.16}$ & $\mathbf{82.93}$    \\ \hline
\end{tabular}
\end{table}

\section{Conclusions}\label{sec:conclusion}
In this paper, we investigated the cross-device federated learning in the presence of arbitrary client dropouts. We first proved that in this critical scenario, the classical FedAvg algorithm is not guaranteed to converge with the common choice of a decaying learning rate.
This unsatisfactory performance motivated the design of a new federated learning algorithm named MimiC, where each applied update is corrected via previous updates to mimic the imaginary central update without needing the client availability pattern.
We showed that with a proper choice of the learning rates, MimiC converges to a stationary point of the global loss function.
The extensive simulations demonstrated that MimiC consistently outperforms the baselines in various scenarios, in addition to having the theoretical convergence guarantee.
For future work, it is worth investigating how to fuse the staleness-aware model aggregation schemes with MimiC to further improve its training performance.

\bibliographystyle{IEEEtran}
\bibliography{ref}

\begin{IEEEbiography}[{\includegraphics[width=1in,height=1.25in,clip,keepaspectratio]{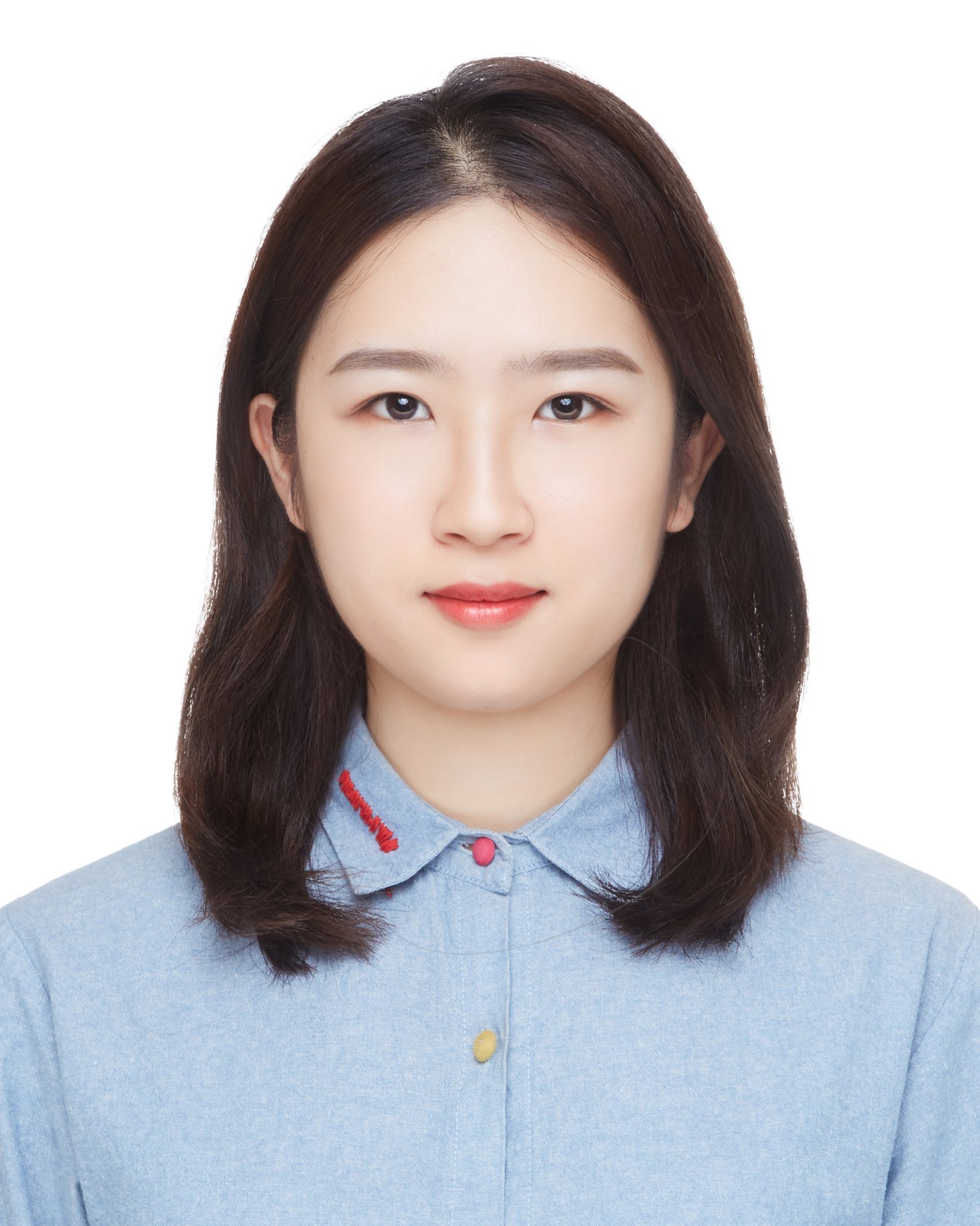}}]{Yuchang Sun}
(Graduate student member, IEEE) received the B.Eng. degree in electronic and information engineering from Beijing Institute of Technology in 2020. She is currently pursuing a Ph.D. degree at the Hong Kong University of Science and Technology. Her research interests include federated learning and distributed optimization.
\end{IEEEbiography}

\begin{IEEEbiography}[{\includegraphics[width=1in,height=1.25in,clip,keepaspectratio]{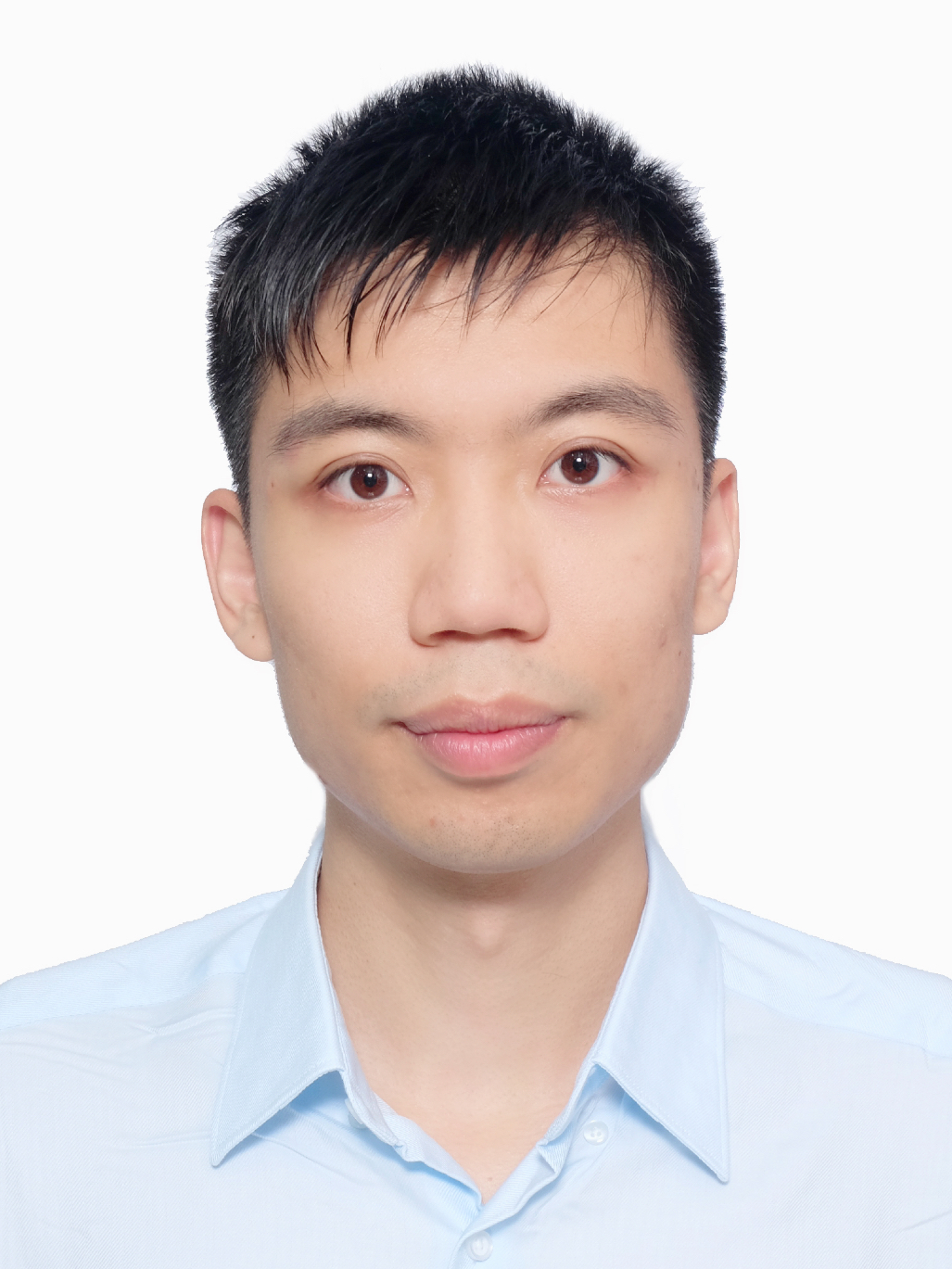}}]{Yuyi Mao}
(Member, IEEE) received the B.Eng. degree in information and communication engineering from Zhejiang University, Hangzhou, China, in 2013, and the Ph.D. degree in electronic and computer engineering from The Hong Kong University of Science and Technology, Hong Kong, in 2017. He was a Lead Engineer with the Hong Kong Applied Science and Technology Research Institute Co., Ltd., Hong Kong, and a Senior Researcher with the Theory Lab, 2012 Labs, Huawei Tech. Investment Co., Ltd., Hong Kong. He is currently a Research Assistant Professor with the Department of Electrical and Electronic Engineering, The Hong Kong Polytechnic University, Hong Kong. His research interests include wireless communications and networking, mobile-edge computing and learning, and wireless artificial intelligence.
He was the recipient of the 2021 IEEE Communications Society Best Survey Paper Award and the 2019 IEEE Communications Society and Information Theory Society Joint Paper Award. He was also recognized as an Exemplary Reviewer of the IEEE Wireless Communications Letters in 2021 and 2019 and the IEEE Transactions on Communications in 2020. He is an Editor of the IEEE Wireless Communications Letters and an Associate Editor of the EURASIP Journal on Wireless Communications and Networking.
\end{IEEEbiography}

\begin{IEEEbiography}[{\includegraphics[width=1in,height=1.25in,clip,keepaspectratio]{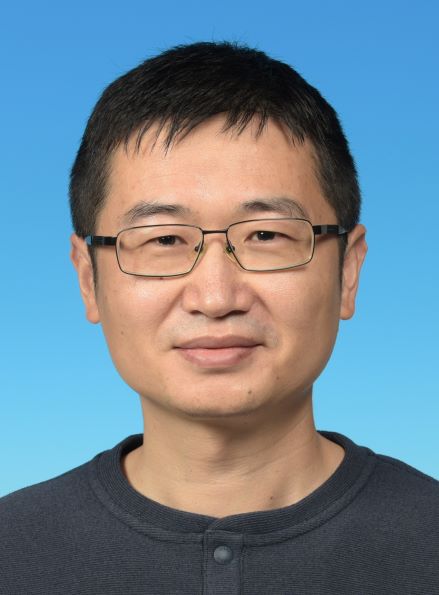}}]{Jun Zhang}
(Fellow, IEEE) received the B.Eng. degree in Electronic Engineering from the University of Science and Technology of China in 2004, the M.Phil. degree in Information Engineering from the Chinese University of Hong Kong in 2006, and the Ph.D. degree in Electrical and Computer Engineering from the University of Texas at Austin in 2009. He is an Associate Professor in the Department of Electronic and Computer Engineering at the Hong Kong University of Science and Technology. His research interests include wireless communications and networking, mobile edge computing and edge AI, and cooperative AI.

Dr. Zhang co-authored the book Fundamentals of LTE (Prentice-Hall, 2010). He is a co-recipient of several best paper awards, including the 2021 Best Survey Paper Award of the IEEE Communications Society, the 2019 IEEE Communications Society \& Information Theory Society Joint Paper Award, and the 2016 Marconi Prize Paper Award in Wireless Communications. Two papers he co-authored received the Young Author Best Paper Award of the IEEE Signal Processing Society in 2016 and 2018, respectively. He also received the 2016 IEEE ComSoc Asia-Pacific Best Young Researcher Award. He is an Editor of IEEE Transactions on Communications, IEEE Transactions on Machine Learning in Communications and Networking, and was an editor of IEEE Transactions on Wireless Communications (2015-2020). He served as a MAC track co-chair for IEEE Wireless Communications and Networking Conference (WCNC) 2011 and a co-chair for the Wireless Communications Symposium of IEEE International Conference on Communications (ICC) 2021. He is an IEEE Fellow and an IEEE ComSoc Distinguished Lecturer.
\end{IEEEbiography}

\clearpage
\appendices
\section{Proof of Lemma \ref{lem:E_t_1}}\label{proof:lem:E_t_1}
We begin with the definition of $\mathcal{E}_t^{\mathrm{F}}$ in FedAvg as follows:
\begin{align}
    \mathcal{E}_t^{\mathrm{F}}
    \triangleq & \mathbb{E} \left[ \left\| \mathbf{v}_t - \nabla f(\mathbf{w}_{t}) \right\|^2 \right] \nonumber \\
    = & \mathbb{E} \left[ \left\| \frac{1}{|\mathcal{S}_t|} \sum_{i\in\mathcal{S}_t} \frac{1}{K} \sum_{k=0}^{K-1} \nabla f_i(\mathbf{w}_{t,k}^i) - \nabla f(\mathbf{w}_{t}) \right\|^2 \right] \nonumber \\
    = & \mathbb{E} \left[ \left\| \frac{1}{|\mathcal{S}_t|} \sum_{i\in\mathcal{S}_t} \frac{1}{K} \sum_{k=0}^{K-1} \nabla f_i(\mathbf{w}_{t,k}^i) \right.\right. \nonumber \\
    & - \frac{1}{|\mathcal{S}_t|} \sum_{i\in\mathcal{S}_t} \frac{1}{K} \sum_{k=0}^{K-1} \nabla f_i(\mathbf{w}_{t}) \nonumber \\
    & \left.\left. + \frac{1}{|\mathcal{S}_t|} \sum_{i\in\mathcal{S}_t} \frac{1}{K} \sum_{k=0}^{K-1} \nabla f_i(\mathbf{w}_{t}) - \nabla f(\mathbf{w}_{t}) \right\|^2 \right] \nonumber \\
    \overset{(\text{a})}{\leq} & 2 \mathbb{E} \left[ \left\| \frac{1}{|\mathcal{S}_t|} \sum_{i\in\mathcal{S}_t} \frac{1}{K} \sum_{k=0}^{K-1} \left( \nabla f_i(\mathbf{w}_{t,k}^i) - \nabla f_i(\mathbf{w}_{t}) \right)  \right\|^2 \right] \nonumber \\
    & + 2 \underbrace{\mathbb{E} \left[ \left\| \frac{1}{|\mathcal{S}_t|} \sum_{i\in\mathcal{S}_t} \nabla f_i(\mathbf{w}_{t}) - \nabla f(\mathbf{w}_{t}) \right\|^2 \right]}_{\gamma_t} \nonumber \\
    \overset{(\text{b})}{\leq} & \frac{2}{|\mathcal{S}_t|} \sum_{i\in\mathcal{S}_t} \frac{1}{K} \sum_{k=0}^{K-1} \mathbb{E} \left[ \left\| \nabla f_i(\mathbf{w}_{t,k}^i) - \nabla f_i(\mathbf{w}_{t}) \right\|^2 \right] + 2 \gamma_t \nonumber \\
    \overset{(\text{c})}{\leq} & \frac{2L^2}{|\mathcal{S}_t|} \sum_{i\in\mathcal{S}_t} \frac{1}{K} \sum_{k=0}^{K-1} \mathbb{E} \left[ \left\| \mathbf{w}_{t,k}^i - \mathbf{w}_{t} \right\|^2 \right] + 2 \gamma_t \nonumber \\
    \overset{(\text{d})}{\leq} & \frac{2L^2}{|\mathcal{S}_t|} \sum_{i\in\mathcal{S}_t} \left[ 8\eta_L^2 K(K-1) \mathbb{E}[\| \nabla f(\mathbf{w}_{t}) \|^2]\right. \nonumber \\
    & + \left. 8\eta_L^2 K(K-1) \kappa_i^2 + 2\eta_L^2 (K-1) \sigma^2 \right] + 2 \gamma_t,
\end{align}
where both (a) and (b) follow the Cauchy–Schwarz inequality $\|\sum_{i=1}^n \mathbf{x}_i\|_2^2 \leq n \sum_{i=1}^n \| \mathbf{x}_i\|_2^2$, and (c) is due to the $L$-smoothness of the loss function.  Besides, (d) is derived using the following inequality, i.e., for any $i\in \mathcal{S}_t, 0\leq k \leq K-1$, we have:
\begin{align}
    \mathbb{E} \left[ \left\| \mathbf{w}_{t,k}^i - \mathbf{w}_{t} \right\|^2 \right]
    \leq & \frac{8 \eta_L^{2}(K-1)}{K} \mathbb{E}\left[\left\|\nabla f\left(\mathbf{w}_{t}\right)\right\|^{2}\right] \nonumber \\
    + & \frac{8(K-1) \eta_L^{2} \kappa_{i}^2}{K}+\frac{2(K-1) \eta_L^{2} \sigma^{2}}{K^{2}},
\end{align}
which can be obtained using a similar proof of Lemma C.2 in \cite{mifa}.
The main difference falls on different assumptions of data heterogeneity, i.e., Assumption \ref{ass-3} in this study and Assumption 7 in \cite{mifa}.
\qed

\section{Proof of Theorem \ref{convergence-fedavg}}\
\label{proof:thm:fedavg}
We rearrange the terms in \eqref{eq:one-round} to obtain:
\begin{align}
    & \frac{\eta_t}{2} \mathbb{E} \left[\left\| \nabla f(\mathbf{w}_t) \right\|^2 \right] 
    \label{eq:help-31} \\
    \leq &  \mathbb{E} [f(\mathbf{w}_{t})] - \mathbb{E} [f(\mathbf{w}_{t+1})]  - \left(\frac{\eta_t}{2} - \frac{\eta_t^2 L}{2}\right) \left\| \mathbb{E}[\mathbf{v}_t] \right\|^2  \nonumber \\
    + & \frac{\eta_t^2 L}{2} \Phi_t^{\mathrm{F}}
    + \frac{\eta_t}{2} \mathcal{E}_t^{\mathrm{F}} \nonumber \\
    \leq &  \mathbb{E} [f(\mathbf{w}_{t})] - \mathbb{E} [f(\mathbf{w}_{t+1})] + \frac{\eta_t^2 L}{2} \Phi_t^{\mathrm{F}}
    + \frac{\eta_t}{2} \mathcal{E}_t^{\mathrm{F}}. \nonumber
\end{align}
Now we substitute the terms $\mathcal{E}_t^{\mathrm{F}}$ and $\Phi_t$ by their upper bounds in \eqref{eq:E_t-fedavg} and \eqref{eq:fedavg:phi}, respectively, and sum up both sides of \eqref{eq:help-31} over $t=0,1,\dots,T-1$ as follows:
\begin{align}
    & \sum_{t=0}^{T-1} \frac{\eta_t}{2} \mathbb{E} \left[\left\| \nabla f(\mathbf{w}_t) \right\|^2 \right] 
    \label{eq:help-32} \\
    \leq & \mathbb{E} [f(\mathbf{w}_{0})] - \mathbb{E} [f(\mathbf{w}_{T})]
    + \frac{\eta_t^2 L}{2} \frac{\sigma^2}{|\mathcal{S}_t|K} \nonumber \\
    +& \frac{\eta_t}{2} \frac{2L^2}{|\mathcal{S}_t|} \left[ \sum_{i\in\mathcal{S}_t} \left( 8\eta_L^2 K(K-1) \mathbb{E}[\| \nabla f(\mathbf{w}_{t}) \|^2] \right. \right. \nonumber \\
    +& \left. \left. 8\eta_L^2 K(K-1) \kappa_i^2 + 2\eta_L^2 (K-1) \sigma^2 \right)
    + 2 \gamma_t \right]. \nonumber
\end{align}
By moving the term $\mathbb{E}[\| \nabla f(\mathbf{w}_{t}) \|^2]$ to the left-hand side (LHS) of \eqref{eq:help-32}, we have:
\begin{align}
    & \sum_{t=0}^{T-1} \frac{\eta_t}{2} \left(1 - 16\eta_L^2 L^2 K(K-1) \right) \mathbb{E} \left[\left\| \nabla f(\mathbf{w}_t) \right\|^2 \right] 
    \label{eq:help-33} \\
    \leq & \mathbb{E} [f(\mathbf{w}_{0})] - \mathbb{E} [f(\mathbf{w}_{T})]
    + \frac{\eta_t^2 L}{2} \frac{\sigma^2}{|\mathcal{S}_t|K} \nonumber \\
    +& \frac{\eta_t L^2}{|\mathcal{S}_t|} \left[ \sum_{i\in\mathcal{S}_t} \left( 8\eta_L^2 K(K-1) \kappa_i^2 + 2\eta_L^2 (K-1) \sigma^2 \right)
    + 2 \gamma_t \right] \nonumber \\
    \leq & \mathbb{E} [f(\mathbf{w}_{0})] - \mathbb{E} [f(\mathbf{w}_{*})]
    + \left( \frac{\eta_t^2 L}{2|\mathcal{S}_t|K} + 2 \eta_t\eta_L^2 L^2 (K-1) \right) \sigma^2 \nonumber \\
    +& \frac{\eta_t L^2}{|\mathcal{S}_t|}  8\eta_L^2 K(K-1) \sum_{i\in\mathcal{S}_t} \kappa_i^2
    + \frac{2\eta_t L^2}{|\mathcal{S}_t|} \gamma_t. \nonumber
\end{align}
Let $C_K \triangleq 16 \eta_L^2 L^2 K(K-1)$. The proof is completed by dividing both sides of \eqref{eq:help-33} by $\sum_{t=0}^{T-1} \frac{\eta_t}{2}$.

\qed

\section{Proof of Lemma \ref{lem:E_t_2}}\label{proof:lem:E_t_2}
We first derive an upper bound of $\mathcal{E}_t^{\mathrm{M}}$ as follows:
\begin{align}
    \mathcal{E}_t^{\mathrm{M}}
    \overset{(\text{a})}{\leq} & \frac{1}{|\mathcal{S}_t|} \sum_{i\in\mathcal{S}_t} \mathbb{E}\left[\left\| \mathbb{E}[\hat{g}_i(\mathbf{w}_t)] - \mathbb{E}[\hat{g}_i(\mathbf{w}_{t-\tau(t,i)})] \right.\right. \nonumber \\
    & + \mathbb{E}[\mathbf{v}_{t-\tau(t,i)}] - \nabla f(\mathbf{w}_{t-\tau(t,i)}) \nonumber \\
    & \left.\left. + \nabla f(\mathbf{w}_{t-\tau(t,i)}) - \nabla f(\mathbf{w}_t)  \right\|^2\right] \nonumber \\
    \overset{(\text{b})}{\leq} & \frac{1}{|\mathcal{S}_t|} \sum_{i\in\mathcal{S}_t} \left(1+\alpha_t \right) \mathbb{E}\left[\left\| \mathbb{E}[\mathbf{v}_{t-\tau(t,i)}] - \nabla f(\mathbf{w}_{t-\tau(t,i)}) \right\|^2\right] \nonumber \\
    & + \left(1+\frac{1}{\alpha_t} \right) \mathbb{E}\left[\left\| \mathbb{E}[\hat{g}_i(\mathbf{w}_t)] - \mathbb{E}[\hat{g}_i(\mathbf{w}_{t-\tau(t,i)})] \right.\right. \nonumber \\
    & \left.\left. + \nabla f(\mathbf{w}_{t-\tau(t,i)}) - \nabla f(\mathbf{w}_t) \right\|^2\right] \nonumber \\
    \overset{(\text{c})}{\leq} & \frac{1}{|\mathcal{S}_t|} \sum_{i\in\mathcal{S}_t} (1+\alpha_t) \underbrace{ \mathbb{E}\left[\left\| \mathbb{E}[\mathbf{v}_{t-\tau(t,i)}] - \nabla f(\mathbf{w}_{t-\tau(t,i)}) \right\|^2\right]}_{\mathcal{E}_{t,1}^{i}} \nonumber \\
    & + 2 \left(1+\frac{1}{\alpha_t} \right) \underbrace{\mathbb{E}\left[\left\| \mathbb{E}[\hat{g}_i(\mathbf{w}_t)] - \mathbb{E}[\hat{g}_i(\mathbf{w}_{t-\tau(t,i)})] \right\|^2\right]}_{\mathcal{E}_{t,2}^{i}} \nonumber \\
    & + 2\left(1+\frac{1}{\alpha_t} \right) \underbrace{ \mathbb{E}\left[\left\| \nabla f(\mathbf{w}_{t-\tau(t,i)}) - \nabla f(\mathbf{w}_t) \right\|^2\right]}_{\mathcal{E}_{t,3}^{i}}, \label{eq:E}
\end{align}
where (a) follows the Cauchy–Schwarz inequality, while (b) and (c) follow the fact that $\|\mathbf{x}+\mathbf{y} \|^2 \leq (1+\alpha) \|\mathbf{x}\|^2 + (1+\frac{1}{\alpha}) \|\mathbf{y}\|^2, \forall \alpha >0$ with $\alpha=\alpha_t$ and $\alpha=1$, respectively.
To make $\alpha_t > 0$, the condition in \eqref{eq:lr-1}, i.e., $\rho_t \triangleq \frac{\eta_t|\mathcal{S}_{t+1}|}{\eta_{t+1}|\mathcal{S}_t|} >1$, is required.
We notice that 
\begin{equation}
    \mathcal{E}_{t,1}^{i} = \mathbb{E}\left[\left\| \mathbf{v}_{t-\tau(t,i)} - \nabla f(\mathbf{w}_{t-\tau(t,i)}) \right\|^2\right] = \mathcal{E}_{t-\tau(t,i)},
\label{eq:E-1}
\end{equation}
and respectively provide upper bounds for the terms $\mathcal{E}_{t,2}^{i}$ and $\mathcal{E}_{t,3}^{i}$ in the following. For the term $\mathcal{E}_{t,2}^{i}$, we have
\begin{align}
    & \mathcal{E}_{t,2}^{i} \nonumber \\
    = & \mathbb{E}\left[ \left\| \mathbb{E}[\hat{g}_i(\mathbf{w}_t)] - \mathbb{E}[\hat{g}_i(\mathbf{w}_{t-\tau(t,i)})] \right\|^2 \right] \nonumber \\
    = & \mathbb{E}\left[\left\| \frac{1}{K} \sum_{k=0}^{K-1} \nabla f_i(\mathbf{w}_{t,k}^{i}) - \frac{1}{K} \sum_{k=0}^{K-1} \nabla f_i(\mathbf{w}_{t-\tau(t,i),k}^{i}) \right\|^2 \right] \nonumber \\
    \overset{(\text{d})}{\leq} & L^2 \mathbb{E}\left[\left\| \frac{1}{K} \sum_{k=0}^{K-1} \mathbf{w}_{t,k}^{i} - \mathbf{w}_{t-\tau(t,i),k}^{i} \right\|^2 \right] \nonumber \\
    \overset{(\text{e})}{\leq} & \frac{L^2}{K} \sum_{k=0}^{K-1} \mathbb{E}\left[\left\| \mathbf{w}_{t,k}^{i} - \mathbf{w}_{t-\tau(t,i),k}^{i} \right\|^2 \right]\nonumber \\
    = & \frac{ L^2}{K} \sum_{k=0}^{K-1} \mathbb{E}\left[\left\| \left(\mathbf{w}_{t} - \eta_L \sum_{s=0}^{k-1} \nabla F_i(\mathbf{w}_{t,s}^{i};\mathbf{\xi}_{t,s}^{i}) \right) \right.\right. \nonumber \\
    & \left.\left. - \left( \mathbf{w}_{t-\tau(t,i)} - \eta_L \sum_{s=0}^{k-1} \nabla F_i(\mathbf{w}_{t-\tau(t,i),s}^{i};\mathbf{\xi}_{t-\tau(t,i),s}^{i}) \right) \right\|^2\right] \nonumber \\
    \overset{(\text{f})}{\leq} & \frac{L^2}{K} \sum_{k=0}^{K-1} (2+2\eta_L^2L^2)^k \underbrace{\mathbb{E}\left[\left\| \mathbf{w}_{t} - \mathbf{w}_{t-\tau(t,i)} \right\|^2\right]}_{D_t^i} \nonumber \\
    = & \frac{L^2[(2+2\eta_L^2L^2)^K - 1]}{K(2\eta_L^2L^2 + 1)} D_t^i, \label{eq:E-2}
\end{align}
where (d) is due to the $L$-smoothness in Assumption \ref{ass-1}, and (e) holds because of the Cauchy–Schwarz inequality.
Also, (f) applies the following result:
\begin{align}
    D_t^i 
    &\overset{(\text{g})}{=} \mathbb{E}\left[\left\| \sum_{s=1}^{\tau(t,i)} \eta_{t-s} \mathbf{v}_{t-s} \right\|^2\right] \nonumber \\
    &\overset{(\text{h})}{\leq} \tau(t,i) \sum_{s=1}^{\tau(t,i)}\eta_{t-s}^2 \mathbb{E}\left[\left\| \mathbf{v}_{t-s} \right\|^2\right] \nonumber \\
    &\overset{(\text{i})}{=} \tau(t,i) \sum_{s=1}^{\tau(t,i)}\eta_{t-s}^2 \left( \frac{\sigma^2}{|\mathcal{S}_{t-s}|K} + \left\| \mathbb{E}[\mathbf{v}_{t-s}] \right\|^2 \right),
\end{align}
where (g) is by the evolutionary of $\mathbf{w}_{t} \leftarrow  \mathbf{w}_{t-\tau(t,i)} - \sum_{s=1}^{\tau(t,i)} \eta_{t-s} \mathbf{v}_{t-s}$, (h) follows the Cauchy-Schwarz Inequality $\|\sum_{i=1}^n \mathbf{x}_i\|_2^2 \leq n \sum_{i=1}^n \| \mathbf{x}_i\|_2^2$, and (i) applies Lemma 2 in \cite{wang2020tackling}.

For the term $\mathcal{E}_{t,3}^{i}$, by using the $L$-smoothness in Assumption \ref{ass-1}, we have:
\begin{align}
    \mathcal{E}_{t,3}^{i} &= \mathbb{E}\left[\left\| \nabla f(\mathbf{w}_{t-\tau(t,i)}) - \nabla f(\mathbf{w}_t) \right\|^2\right] \nonumber \\
    &\leq L^2 \mathbb{E}\left[\left\| \mathbf{w}_{t-\tau(t,i)} - \mathbf{w}_t \right\|^2\right] \nonumber \\
    &= L^2 D_t^i. \label{eq:E-3}
\end{align}
By upper bounding the RHS of \eqref{eq:E} using \eqref{eq:E-1}, \eqref{eq:E-2}, and \eqref{eq:E-3}, we arrive at the desired result.
\qed

\section{Proof of Theorem \ref{thm:mimic}}\label{proof:thm:mimic}
Recall the one-round progress in \eqref{eq:one-round}, the following inequality holds:
\begin{align}
    &\mathbb{E} [f(\mathbf{w}_{t+1})] + \left(\frac{\eta_t}{2} - \frac{\eta_t^2 L}{2} \right) \left\| \mathbb{E}[\mathbf{v}_t] \right\|^2 \nonumber \\
    \overset{(\text{a})}{\leq} &\mathbb{E} [f(\mathbf{w}_{t})] - \frac{\eta_t}{2} \mathbb{E} \left[ \left\| \nabla f(\mathbf{w}_t) \right\|^2 \right]+ \frac{\eta_t^2 L}{2|\mathcal{S}_t|K} \sigma^2 + \frac{\eta_t}{2} \mathcal{E}_t^{\mathrm{M}}, \label{eq:one-round-mimic}
\end{align}
where (a) applies Lemma \ref{lem:phi_t_2}.

By adding $\frac{\rho_t \eta_{t}}{2\nu} \mathcal{E}_t^{\mathrm{M}}$ at the both sides of \eqref{eq:one-round-mimic} and applying the upper bound of $\mathcal{E}_t^{\mathrm{M}}$ in Lemma \ref{lem:E_t_2}, we obtain:
\begin{align}
    &\mathbb{E} [f(\mathbf{w}_{t+1})] + \left(\frac{\eta_t}{2} - \frac{\eta_t^2 L}{2} \right) \left\| \mathbb{E}[\mathbf{v}_t] \right\|^2 + \frac{\rho_t\eta_{t}}{2\nu} \mathcal{E}_t^{\mathrm{M}} \nonumber \\
    \leq & \mathbb{E} [f(\mathbf{w}_{t})] - \frac{\eta_t}{2} \mathbb{E} \left[ \left\| \nabla f(\mathbf{w}_t) \right\|^2 \right] \nonumber \\
    + & \frac{\eta_t^2 L}{2|\mathcal{S}_t|K} \sigma^2 + \frac{\eta_{t}}{2} \mathcal{E}_t^{\mathrm{M}} + \frac{\rho_t (\rho_t-\nu)}{2\nu} \frac{\eta_t}{|\mathcal{S}_t|} \sum_{i\in\mathcal{S}_t} \mathcal{E}_{t-\tau(t,i)} \nonumber \\
    + & \frac{\rho_t(\rho_t-\nu)}{\nu(\rho_t-1-\nu)} \frac{\eta_t \varphi_K}{|\mathcal{S}_t|K} \sum_{i\in\mathcal{S}_t}
    \tau(t,i) \sum_{s=1}^{\tau(t,i)} \frac{\eta_{t-s}^2 \sigma^2}{|\mathcal{S}_{t-s}|} \nonumber \\
    + & \frac{\rho_t(\rho_t-\nu)}{\nu(\rho_t-1-\nu)} \frac{\eta_t \varphi_K}{|\mathcal{S}_t|} \sum_{i\in\mathcal{S}_t}
    \tau(t,i) \sum_{s=1}^{\tau(t,i)}\eta_{t-s}^2 \left\| \mathbb{E}[\mathbf{v}_{t-s}] \right\|^2. \label{eq:one-round-mimic-2}
\end{align}

\noindent
By summing up both sides of \eqref{eq:one-round-mimic-2} over $t=0,1,\dots,T-1$, we have:
\begin{align}
    & \sum_{t=0}^{T-1} \mathbb{E} [f(\mathbf{w}_{t+1})] + \sum_{t=0}^{T-1} \left(\frac{\eta_t}{2} - \frac{\eta_t^2 L}{2} \right) \left\| \mathbb{E}[\mathbf{v}_t] \right\|^2 + \sum_{t=0}^{T-1}  \frac{\rho_t\eta_{t}}{2\nu} \mathcal{E}_t^{\mathrm{M}} \nonumber \\
    \leq & \sum_{t=0}^{T-1} \mathbb{E} [f(\mathbf{w}_{t})] - \sum_{t=0}^{T-1} \frac{\eta_t}{2} \mathbb{E} \left[ \left\| \nabla f(\mathbf{w}_t) \right\|^2 \right]+ \sum_{t=0}^{T-1} \frac{\eta_t^2 L}{2|\mathcal{S}_t|K} \sigma^2 \nonumber \\
    + & \sum_{t=0}^{T-1} \frac{\eta_{t}}{2} \mathcal{E}_t^{\mathrm{M}} + \sum_{t=0}^{T-1} \frac{\rho_t (\rho_t-\nu)}{2\nu} \frac{\eta_t}{|\mathcal{S}_t|} \sum_{i\in\mathcal{S}_t} \mathcal{E}_{t-\tau(t,i)} \nonumber \\
    + & \sum_{t=0}^{T-1} \frac{\rho_t(\rho_t-\nu)}{\nu(\rho_t-1-\nu)} \frac{\eta_t \varphi_K}{|\mathcal{S}_t|K} \sum_{i\in\mathcal{S}_t}
    \tau(t,i) \sum_{s=1}^{\tau(t,i)} \frac{\eta_{t-s}^2 \sigma^2}{|\mathcal{S}_{t-s}|} \nonumber \\
    + & \sum_{t=0}^{T-1} \frac{\rho_t(\rho_t-\nu)}{\nu(\rho_t-1-\nu)} \frac{\eta_t \varphi_K}{|\mathcal{S}_t|} \sum_{i\in\mathcal{S}_t}
    \tau(t,i) \sum_{s=1}^{\tau(t,i)}\eta_{t-s}^2 \left\| \mathbb{E}[\mathbf{v}_{t-s}] \right\|^2. \label{eq:help-0}
\end{align}

\noindent
Denote $\Tilde{\tau}(t) \triangleq \max_{i\in \mathcal{S}_t} \tau(t,i)$ as the maximum dropout iterations of active clients in iteration $t$.
We rearrange the terms in \eqref{eq:help-0} as follows:
\begin{align}
    & \sum_{t=0}^{T-1} \mathbb{E} [f(\mathbf{w}_{t+1})] + \sum_{t=0}^{T-1} \left(\frac{\eta_t}{2} - \frac{\eta_t^2 L}{2} \right) \left\| \mathbb{E}[\mathbf{v}_t] \right\|^2 + \sum_{t=0}^{T-1} \frac{\rho_t\eta_{t}}{2\nu} \mathcal{E}_t^{\mathrm{M}} \nonumber \\
    \leq & \sum_{t=0}^{T-1} \mathbb{E} [f(\mathbf{w}_{t})] - \sum_{t=0}^{T-1} \frac{\eta_t}{2} \mathbb{E} \left[ \left\| \nabla f(\mathbf{w}_t) \right\|^2 \right]+ \sum_{t=0}^{T-1} \frac{\eta_t^2 L}{2|\mathcal{S}_t|K} \sigma^2 \nonumber \\
    + & \sum_{t=0}^{T-1} \left[ \frac{\eta_{t}}{2} +  \frac{\rho_t (\rho_t-\nu)}{2\nu} \sum_{i\in\mathcal{S}_{t}} \sum_{s=1}^{\Tilde{\tau}(t)} \frac{\eta_{t+s}}{|\mathcal{S}_{t+s}|} \mathbf{1}\{ \tau(t+s,i)=s \} \right] \mathcal{E}_t^{\mathrm{M}} \nonumber \\
    + & \sum_{t=0}^{T-1} \frac{\rho_t(\rho_t-\nu)}{\nu(\rho_t-1-\nu)}  \frac{\eta_t \varphi_K}{|\mathcal{S}_t|K} \sum_{i\in\mathcal{S}_t}
    \tau(t,i) \sum_{s=1}^{\tau(t,i)} \frac{\eta_{t-s}^2 \sigma^2}{|\mathcal{S}_{t-s}|} \nonumber \\
    + & \sum_{t=0}^{T-1} \frac{\rho_t(\rho_t-\nu)}{\nu(\rho_t-1-\nu)} \frac{\eta_t \varphi_K}{|\mathcal{S}_t|} \sum_{i\in\mathcal{S}_t} \tau(t,i) \sum_{s=1}^{\tau(t,i)}\eta_{t-s}^2 \left\| \mathbb{E}[\mathbf{v}_{t-s}] \right\|^2. \label{eq:help-1}
\end{align}

We now compare the terms in the LHS and RHS of \eqref{eq:help-1}.
Firstly, to eliminate the terms related to $\{\mathcal{E}_t^{\mathrm{M}}\}$'s, we need to verify $\sum_{t=0}^{T-1}\mathcal{E}_t^{\mathrm{M}} \frac{\rho_t\eta_{t}}{2\nu} \geq \sum_{t=0}^{T-1}\mathcal{E}_t^{\mathrm{M}} ( \frac{\eta_{t}}{2} + \frac{\rho_t (\rho_t-\nu)}{2\nu} \sum_{i\in\mathcal{S}_{t}} \sum_{s=1}^{\tilde{\tau}(t)} \frac{\eta_{t+s}}{|\mathcal{S}_{t+s}|} \mathbf{1}\{ \tau(t+s,i)=s \} )$, or equivalently:
\begin{equation}
     \left(\frac{\rho_t}{2\nu} - \frac{1}{2}\right) \eta_{t} \geq \frac{\rho_t (\rho_t-\nu)}{2\nu} \sum_{i\in\mathcal{S}_{t}} \sum_{s=1}^{\tilde{\tau}(t)} \frac{\eta_{t+s}}{|\mathcal{S}_{t+s}|} \mathbf{1}\{ \tau(t+s,i)=s \}.
     \label{eq:R2}
\end{equation}
It is straightforward that $\max_{s\in\{1,\cdots,\tilde{\tau}(t)\}} \frac{\eta_{t+s}}{|\mathcal{S}_{t+s}|} = \frac{\eta_{t+1}}{|\mathcal{S}_{t+1}|}$.
Since $\tau(t+s,i)=s$ holds at most once from iteration $t$ to $t+\tilde{\tau}(t)$, the RHS of \eqref{eq:R2} is upper bounded as $ \frac{\rho_t (\rho_t-\nu)}{2\nu} |\mathcal{S}_{t}| \frac{\eta_{t+1}}{|\mathcal{S}_{t+1}|}$. Then, it remains to show $\frac{\rho_t - \nu}{2\nu} \frac{\eta_{t}}{|\mathcal{S}_{t}|} \geq \frac{\rho_t-\nu}{2\nu} \rho_t \frac{\eta_{t+1}}{|\mathcal{S}_{t+1}|}$. Note that the terms related to $\mathcal{E}_t^{\mathrm{M}}$ can be eliminated if $\rho_t - \nu>0$, namely the condition in \eqref{eq:lr-1}.

The terms related to $\eta_t^2 \left\| \mathbb{E}[\mathbf{v}_t] \right\|^2$ in both sides of \eqref{eq:help-1} can be cancelled out if the following condition holds:
\begin{align}
    & \frac{1}{2\eta_t} - \frac{L}{2} 
    \label{eq:R3} \\
    \geq & \frac{\rho_t(\rho_t-\nu)}{2\nu} \sum_{j=t+1}^{t+\Tilde{\tau}(t)} \frac{\eta_j \varphi_K}{|\mathcal{S}_j|} 
    \sum_{i\in\mathcal{S}_j} \tau(j,i) \sum_{s=1}^{\tau(j,i)} \mathbf{1}\{\tau(j+s,i)= s \}. \nonumber
\end{align}
In \eqref{eq:R3}, the RHS can be upper bounded as $\frac{\rho_t(\rho_t-\nu)}{2\nu} \frac{\eta_{t+1}}{|\mathcal{S}_{t+1}|} \varphi_K \tau_{\max} N = \frac{\rho_t-\nu}{2\nu} \frac{\eta_{t}}{|\mathcal{S}_{t}|} \varphi_K \tau_{\max} N$.
This condition can be satisfied with small enough learning rates such that $\frac{1}{\eta_{t}} \left( \frac{1}{2\eta_t} - \frac{L}{2} \right) \geq \frac{\rho_t-\nu}{2\nu} \frac{\varphi_K \tau_{\max} N}{|\mathcal{S}_{t}|}$.

By rearranging the terms of \eqref{eq:help-1}, we obtain:
\begin{align}
    & \sum_{t=0}^{T-1} \frac{\eta_t}{2} \mathbb{E} \left\| \nabla f(\mathbf{w}_t) \right\|^2 \nonumber \\
    \leq & \mathbb{E} [f(\mathbf{w}_{0})] - \mathbb{E} [f(\mathbf{w}_{T})] + \sum_{t=0}^{T-1} \frac{\sigma^2}{|\mathcal{S}_t|K} \nonumber\\
    & \times \left( \frac{\eta_t^2 L}{2} + \frac{\rho_t(\rho_t-\nu)}{\nu(\rho_t-1-\nu)} \eta_t \varphi_K \sum_{i\in\mathcal{S}_t} \tau(t,i) \sum_{s=1}^{\tau(t,i)}  \frac{\eta_{t-s}^2}{|\mathcal{S}_{t-s}|} \right) \nonumber\\
    \overset{(\text{b})}{\leq} & \mathbb{E} [f(\mathbf{w}_{0})] - \mathbb{E} [f(\mathbf{w}^*)] \nonumber\\
    & + \sum_{t=0}^{T-1} \frac{\sigma^2}{|\mathcal{S}_t|K} \left( \frac{\eta_t^2 L}{2} + \frac{\rho_t(\rho_t-\nu)}{\nu(\rho_t-1-\nu)} \eta_t \varphi_K \Tilde{\tau}^2(t) \frac{\eta_{t+1}^2}{|\mathcal{S}_{t+1}|}\right) \nonumber \\
    \leq & \mathbb{E} [f(\mathbf{w}_{0})] - \mathbb{E} [f(\mathbf{w}^*)] \nonumber\\
    & + \sum_{t=0}^{T-1} \frac{\sigma^2}{|\mathcal{S}_t|K} \left( \frac{\eta_t^2 L}{2} + \frac{\rho_t(\rho_t-\nu)}{\nu(\rho_t-1-\nu)} \eta_t \varphi_K \tau_{\text{max}}^2 \frac{\eta_{t+1}^2}{|\mathcal{S}_{t+1}|}\right). \label{eq:help-2}
\end{align}
Finally, dividing both sides of \eqref{eq:help-2} by $\sum_{t=0}^{T-1} \frac{\eta_t}{2}$ yields the result in \eqref{eq:thm-mimic}.
\qed

\section{Proof of Lemma \ref{lem:lr}}\label{proof:lr}

We construct the global learning rates as
\begin{equation}
    \eta_t = \frac{c|\mathcal{S}_{t}|}{t+\beta}, \forall t \in [T],
\end{equation}
where $c>0$ and $\beta>0$ are positive constants.
According to the definition of $\rho_t$ in \eqref{eq:lr-1}, one can easily verify that :
\begin{equation}
    \rho_t = \frac{t+\beta+1}{t+\beta} >1.
\end{equation}
It is also straightforward to show:
\begin{equation}
    \lim_{T\rightarrow \infty} \sum_{t=0}^{T-1} \eta_t = \lim_{T\rightarrow \infty} c \sum_{t=0}^{T-1} \frac{|\mathcal{S}_{t}|}{t+\beta} = \infty,
\end{equation}
and 
\begin{align}
    \lim_{T\rightarrow \infty} \sum_{t=0}^{T-1} \eta_t^2 
    = &\lim_{T\rightarrow \infty} c^2 \sum_{t=0}^{T-1} \frac{|\mathcal{S}_{t}|^2}{(t+\beta)^2} \nonumber \\
    \leq &\lim_{T\rightarrow \infty} c^2 N^2 \sum_{t=0}^{T-1} \frac{1}{(t+\beta)^2} \nonumber \\
    < &\infty.
\end{align}
Moreover, to verify the condition in \eqref{eq:lr-2}, i.e.,
\begin{equation}
    \frac{t+\beta}{c|\mathcal{S}_{t}|} \left( \frac{t+\beta}{2c|\mathcal{S}_{t}|} - \frac{L}{2} \right) \geq \frac{(\frac{t+\beta+1}{t+\beta}-\nu)\varphi_K \tau_{\max} N}{2\nu|\mathcal{S}_{t}|},
    \label{eq:help-13}
\end{equation}
we need to show:
\begin{equation}
    \frac{t+\beta}{c} \left( \frac{t+\beta}{cN} - L \right) \geq \frac{\varphi_K \tau_{\max} N}{\nu} \left( 1-\nu + \frac{1}{t+\beta} \right),
    \label{eq:help-14}
\end{equation}
since $\frac{1}{|\mathcal{S}_{t}|} \geq \frac{1}{N}$.
Next, we rewrite \eqref{eq:help-14} as the following quadratic inequality:
\begin{equation}
     A_t c^2 + B_t c - C_t \leq 0,
\end{equation}
where $A_t \triangleq \varphi_K \tau_{\max} N^2 \left[ (1-\nu)(t+\beta) +1 \right]$, $B_t \triangleq \nu LN(t+\beta)^2$, and $C_t \triangleq \nu(t+\beta)^3$.
Since $\sqrt{B_t^2 + 4A_tC_t} > B_t, \forall t \in [T]$, we can show the inequality in \eqref{eq:help-13} can be satisfied by selecting constant $c$ such that:
\begin{equation}
    0 < c \leq \frac{\sqrt{B_t^2 + 4A_tC_t}- B_t }{2A_t},
\end{equation}
which concludes the proof.
\qed

\section{Proof of Corollary \ref{corollary1}}\label{proof:Corollary1}

In order to show the convergence, the key is to prove with the selected learning rates, the RHS of \eqref{eq:thm-mimic} diminishes to zero with a high probability.
It is straightforward that $\lim_{T\rightarrow \infty} \frac{ 2 }{\sum_{t=0}^{T-1} \eta_t} (\mathbb{E} [f(\mathbf{w}_{0})] - \mathbb{E} [f(\mathbf{w}^*)]) = 0$ as the learning rates satisfy $\lim_{T\rightarrow \infty} \sum_{t=0}^{T-1} \eta_t = \infty$.
Besides, as $\eta_L = \mathcal{O} \left(\frac{1}{L\sqrt{T}} \right)$, we have $\lim_{T \rightarrow \infty} \varphi_K = (\frac{2^K-1}{K}+1)L^2$, which is irrelevant of $t$.
Hence, we complete the proof as follows:
\begin{align}
    & \lim_{T \rightarrow \infty} \frac{2}{\sum_{t=0}^{T-1} \eta_t}  \sum_{t=0}^{T-1} \frac{\sigma^2}{|\mathcal{S}_t|K} \nonumber \\ 
    & \quad \times \left( \frac{\eta_t^2 L}{2} + \frac{\rho_t(\rho_t-\nu) \eta_t \varphi_K \tau_{\max}^2}{\nu(\rho_t-1-\nu)} \frac{\eta_{t+1}^2}{|\mathcal{S}_{t+1}|} \right) \nonumber \\
    = &\lim_{T\rightarrow \infty} \mathcal{O} \left( \frac{1}{\sum_{t=0}^{T-1} \eta_t} \sum_{t=0}^{T-1} \eta_t^2 \right) \nonumber \\
    = & 0.
\end{align}
\qed

\section{Proof of Theorem \ref{high-prob}} \label{proof:high-prob}
Denote $\Tilde{\tau}(t) \triangleq \max_{i\in \mathcal{S}_t} \tau(t,i)$ as the maximum dropout iterations of active clients in iteration $t$.
Following a similar proof of Theorem \ref{thm:mimic}, we can arrive at inequality \eqref{eq:help-2}.
Dividing both sides of Step (b) in \eqref{eq:help-2} by $\sum_{t=0}^{T-1} \frac{\eta_t}{2}$ yields the following inequality: 
\begin{align}
    & \frac{1}{\sum_{t=0}^{T-1} \eta_t} \sum_{t=0}^{T-1} \eta_t \mathbb{E} [\left\| \nabla f(\mathbf{w}_t) \right\|^2] \nonumber \\
    \leq & \frac{ 2 }{\sum_{t=0}^{T-1} \eta_t} (\mathbb{E} [f(\mathbf{w}_{0})] - \mathbb{E} [f(\mathbf{w}^*)]) \nonumber \\
    + & \frac{2}{\sum_{t=0}^{T-1} \eta_t} \sum_{t=0}^{T-1} \frac{\sigma^2}{|\mathcal{S}_t|K} \left( \frac{\eta_t^2 L}{2} \!+\! \frac{\rho_t(\rho_t-\nu) \eta_t \varphi_K}{\nu(\rho_t-1-\nu)} \Tilde{\tau}^2(t) \frac{\eta_{t+1}^2}{|\mathcal{S}_{t+1}|}\right). \label{eq:help-3}
\end{align}

We then show the relationship between the active probabilities $\{p_{i}^{t}\}$'s and $\Tilde{\tau}(t)$ via the following result adapted from the proof of Theorem 5.2 in \cite{mifa}.
Given any $\delta >0$, with probability at least $1 - \delta$, the following inequality holds:
\begin{equation}
    \Tilde{\tau}(t) \leq \frac{\lambda}{\min_{i}p_t^i} \left(\log \left( \frac{Nt}{\delta} \right)+1 \right),
    \label{eq:connect}
\end{equation}
with a constant $\lambda>0$, which indicates that $\Tilde{\tau}(t) = \mathcal{O}\left(\log \left( \frac{t}{\delta} \right)\right)$.
Therefore, by substituting the RHS of \eqref{eq:connect} for $\Tilde{\tau}(t)$ in \eqref{eq:help-3}, we complete the proof. 
\qed

\section{Proof of Corollary \ref{Corollary:high-prob}}\label{proof:Corollary2}

In order to show the convergence, the key is to prove with the selected learning rates, the RHS of \eqref{eq:thm-mimic-2} diminishes to zero with a high probability.
It is straightforward that $\lim_{T\rightarrow \infty} \frac{ 2 }{\sum_{t=0}^{T-1} \eta_t} (\mathbb{E} [f(\mathbf{w}_{0})] - \mathbb{E} [f(\mathbf{w}^*)]) = 0$ as the learning rates satisfy $\lim_{T\rightarrow \infty} \sum_{t=0}^{T-1} \eta_t = \infty$.
In addition, for a given $\delta >0$, the following equality
\begin{align}
    &\lim_{T\rightarrow \infty} \frac{1}{\sum_{t=0}^{T-1} \eta_t} \sum_{t=0}^{T-1} \frac{2\sigma^2}{|\mathcal{S}_t|K} \nonumber \\
    & \times \left( \eta_t^2 L^2 + \frac{\rho_t(\rho_t-\nu) \eta_t \varphi_K}{\nu(\rho_t-1-\nu)} \Tilde{\tau}^2(t) \frac{\eta_{t+1}^2}{|\mathcal{S}_{t+1}|} \right) \nonumber \\
    \overset{(\text{a})}{=} &\lim_{T\rightarrow \infty} \mathcal{O} \left( \frac{1}{\sum_{t=0}^{T-1} \eta_t} \sum_{t=0}^{T-1} \eta_t^2 \right) \nonumber \\
    &+ \lim_{T\rightarrow \infty} \mathcal{O} \left( \frac{1}{\sum_{t=0}^{T-1} \eta_t} \sum_{t=0}^{T-1} \eta_{t}^3 \left(\log \left( \frac{t}{\delta} \right)\right)^2 \right), \label{eq:help-4}
\end{align}
holds with probability at least $1 - \delta$, where (a) follows $\Tilde{\tau}(t) = \mathcal{O}\left(\log \left( \frac{t}{\delta} \right)\right)$.
It is also easy to show $\lim_{T\rightarrow \infty} \mathcal{O} \left( \frac{1}{\sum_{t=0}^{T-1} \eta_t} \sum_{t=0}^{T-1} \eta_t^2 \right) = 0$.
Note that $\delta$ is irrelevant of $t$.
Besides, if $\eta_t = \mathcal{O} \left(\frac{|\mathcal{S}_t|}{t+\beta} \right)$ with $\beta>0$, the second term in the RHS of \eqref{eq:help-4} becomes $\lim_{T\rightarrow \infty} \mathcal{O} \left( \frac{1}{\sum_{t=0}^{T-1} \frac{1}{t+\beta}} \sum_{t=0}^{T-1} \frac{t}{(t+\beta)^3} \right) = 0$.
In other words, MimiC converges to a stationary point of the global loss function with probability at least $1-\delta$.
\qed

\end{document}